\newtheorem{theorem}{Theorem}[section]
\newtheorem{lemma}[theorem]{Lemma}
\newtheorem{proposition}[theorem]{Proposition}
\theoremstyle{definition}
\newtheorem{definition}{Definition}[section]
\newtheorem{remark}{Remark}
\theoremstyle{definition}
\newtheorem{example}{Example}[section]
\DeclareMathOperator{\head}{head}
\DeclareMathOperator{\tail}{tail}
\DeclareMathOperator{\pa}{pa}
\DeclareMathOperator{\sib}{sib}
\DeclareMathOperator{\an}{an}
\DeclareMathOperator{\de}{de}
\DeclareMathOperator{\dis}{dis}
\DeclareMathOperator{\barren}{barren}
\DeclareMathOperator{\ceil}{ceil}
\DeclareMathOperator{\mb}{mb}
\DeclareMathOperator{\Move}{Move}
\DeclarePairedDelimiter{\abs}{\lvert}{\rvert}
\DeclareMathOperator{\ham}{ham}
\newcommand{\lqarrow}{\mathbin{\leftarrow\!\!\medmath{?}}}
\newcommand{\rqarrow}{\mathbin{\medmath{?}\!\!\rightarrow}}
\newcommand{\circlearrow}{\mathbin{\circ\mkern -7mu\rightarrow} }
\newcommand{\lcirclearrow}{\mathbin{\leftarrow \mkern -7mu \circ} }
\newcommand{\circleedge}{\mathbin{\circ \mkern -3mu -} }
\newcommand{\rcircleedge}{\mathbin{-\mkern -3mu \circ} }
\newcommand{\dcircleedge}{\mathbin{\circ\mkern -6.5mu-\mkern -6.5mu\circ} }
\newcommand{\sto}{\mathbin{*\mkern -7mu\to}}
\newcommand{\getss}{\mathbin{\gets\mkern -7mu*}}
\newcommand{\sun}{\mathbin{*\mkern -3mu-}}
\newcommand{\uns}{\mathbin{-\mkern -3mu*}}
\newcommand{\suns}{\mathbin{*\mkern -7mu-\mkern -7mu*}}
\newcommand{\cseg}{\mathbin{\circ\!\!-\mkern -7mu*}}
\newcommand{\sceg}{\mathbin{*\mkern -7mu-\!\!\circ}}
\DeclarePairedDelimiterX{\infdivx}[2]{(}{)}{%
  #1\;\delimsize\|\;#2%
}
\newcommand{\cmid}{\,|\,}
\newcommand{\ROMP}{ROMP}
\DeclareMathOperator{\Score}{Score}
\DeclareMathOperator{\Update}{Update}
    \newcommand\indep{\protect\mathpalette{\protect\independenT}{\perp}}
\def\independenT#1#2{\mathrel{\rlap{$#1#2$}\mkern2mu{#1#2}}}
\newcommand{\Sset}{{\cal S}}
\newcommand{\I}{{\cal I}}
\newcommand{\G}{{\cal G}}
\title{A fast score-based search algorithm for maximal ancestral graphs using entropy}
\author{\textbf{Zhongyi Hu}\\
Department of Epidemiology and Data Science\\
Amsterdam UMC\\
z.hu@amsterdamumc.nl
\and 
\textbf{Robin J.~Evans}\\
Department of Statistics\\
University of Oxford\\
evans@stats.ox.ac.uk}
\date{}
\begin{document}

\maketitle
\begin{abstract}
\emph{Maximal ancestral graph} (MAGs) is a class of graphical model that extend the famous \emph{directed acyclic graph} in the presence of latent confounders. Most score-based approaches to learn the unknown MAG from empirical data rely on BIC score which suffers from instability and heavy computations. We propose to use the framework of imsets \citep{studeny2006probabilistic} to score MAGs using empirical entropy estimation and the newly proposed \emph{refined Markov property} \citep{hu2023towards}. Our graphical search procedure is similar to \citet{claassen2022greedy} but improved from our theoretical results. We show that our search algorithm is polynomial in number of nodes by restricting degree, maximal head size and number of discriminating paths. In simulated experiment, our algorithm shows superior performance compared to other state of art MAG learning algorithms.
\end{abstract}

\section{Introduction}

Causal discovery is an essential part of causal inference \citep{spirtes2000causation,peters2017elements}, but estimating causal effects is extremely challenging if the underlying causal graph is unknown. Algorithms for learning causal graphs are many and varied, using different parametric structure, classes of graphical models, and assumptions about whether all relevant variables are measured \citep{spirtes2000causation,kaltenpoth2023causal,claassen2022greedy,nowzohour2017distributional,zhang2009identifiability,peters2017elements}. In this paper, we consider only nonparametric assumptions, i.e.~conditional independences in distributions that are represented by graphs. The primary graphical model used in causal inference is the \emph{directed acyclic graph}, also known as a DAG. These offer a clear interpretation and are straightforward to conduct inference with, and are associated with probabilistic distributions by encoding conditional independence constraints. However, in the presence of causally important hidden variables, DAGs are unable to faithfully represent all the implied conditional independences over the observed variables. To address this issue, \emph{maximal ancestral graphs} (MAGs) were developed by \citet{richardson2002}; MAGs provide a more comprehensive representation, overcoming some of the limitations of DAGs.

Classical graph learning methods for DAGs and MAGs are mainly of three types: constraint-based, scored-based and hybrid which combines features of the first two. Constraint-based methods are known for their speed, but they have lower accuracy when the number of variables grows \citep{evans20model,ramsey06adjacency}, as empirical mistakes can propagate through the algorithm. Such learning algorithms for DAGs and MAGs are respectively the PC and FCI algorithms \citep{spirtes2000causation}. Variations of these methods have been developed to accelerate them and increase precision, for example, the RFCI and FCI+ algorithms \citep{colombo2012learning, claassen2013learning}. On the other hand, score-based methods search through many graphs and compute a score for each, then select the graph with the highest score. In general they are more accurate but slower than constrained-based methods. GES \citep{chickering2002optimal} is perhaps the most well-known scored-based DAG learning algorithm; this is a greedy learning procedure that will output the globally optimal graph in the limit of infinite sample size. This was originally known as `Meek's conjecture' \citep{meek1997graphical}. We will prove some of our results (in particular Proposition \ref{prop: invariant edge marks}) for MAGs by assuming the MAG version of the conjecture. If correct, this allows us to speed up our search procedure. 

\subsection{Past work on score-based methods for maximal ancestral graphs}\label{sec: past work}

There are two key components to such score-based algorithms: the score, and the search procedure. 

Existing score-based algorithms for MAGs \citep{triantafillou2016score, rantanen2021maximal,chen2021integer,claassen2022greedy} all use the \emph{Bayesian information criteria} (BIC). Although \citet{drton2009computing} and \citet{evans2010maximum,Evans2014} have provided methods for fitting Gaussian and discrete MAG models using maximum likelihood, which allows one to obtain the corresponding BIC score,  these cannot generally be obtained in closed-form, and therefore require iterative computation using numerical methods. Moreover, the optimization function is not generally convex if the model is not a DAG, which means that the such algorithms may converge to a non-globally optimal point. Additionally, the factorization of distributions in MAG models is complex, and the scores are only decomposable with respect to the components connected by bidirected paths, also known as districts or c-components; this makes search methods for MAGs computationally intensive. In this paper, we use a score from \citet{hu2023towards}, based on work of \citet{andrews22}, in the framework of imsets \citep{studeny2006probabilistic}; it essentially measures the discrepancy in the data from a list of independences implied by the graph. This list of independences is equivalent to but generally simpler than the (reduced) ordered local Markov property \citep{richardlocalmarkov,hu2023towards}.

The search procedure is crucial, because the number of MAGs grows super-exponentially as the number of vertices increases, scoring every MAG is infeasible. The above-mentioned algorithms all search in a greedy manner by only considering neighbouring MAGs; these are different from the current MAG by only a difference in an adjacency or edge mark (see Section \ref{sec: move between MEC}). Among them, only \citet{claassen2022greedy} search through \emph{Markov equivalence classes} (MECs) of MAGs, and thus avoid repeatedly scoring graphs which, if the distributions are assumed to be discrete or multivariate Gaussian, always have the same BIC. However, \citeauthor{claassen2022greedy}'s method possesses some inefficiencies. We address these issues and provide a new method; we show that for sparse graphs, under some other mild assumptions, our new algorithm runs in polynomial time.

This paper is organized as follows: in Section \ref{sec:pre}, we define necessary terminologies; in Section \ref{sec: move between MEC}, we demonstrate how to move between Markov equivalence classes of MAGs and present some results to speed the procedure up; in Section \ref{sec:scoring}, we show how to use the framework of imsets \citep{studeny2006probabilistic} and the reduced Markov property for MAGs \citep{hu2023towards} to construct a new scoring criteria for MAGs and prove its consistency; in Section \ref{sec: greedy algorithm}, we propose our new algorithm by combining results in previous two sections; then finally in Section \ref{sec: experiments}, we conduct a simulated experiment and show superior performance to existing MAG learning algorithms.

\section{Preliminary}\label{sec:pre}

A \emph{graph} $\G$ consists of a vertex set $\mathcal{V}$ and an edge set $\mathcal{E}$ of pairs of distinct vertices. For an edge in $\mathcal{E}$ connecting vertices $a$ and $b$, we say these two vertices are the \emph{endpoints} of the edge and the two vertices are \emph{adjacent} (if there is no edge between $a$ and $b$, they are \emph{nonadjacent}).

A \emph{path} of length $k$ is an alternating sequence of $k+1$ distinct vertices $v_{i}$, $0 \leq i \leq k$ and edges connecting $v_{i}$ and $v_{i+1}$. A path is \emph{directed} if its edges are all directed and point from $v_i$ to $v_{i+1}$. A \emph{directed cycle} is a directed path of length at least two plus the edge $v_k \rightarrow v_0$, and a graph $\G$ is \emph{acyclic} if it has no directed cycle. A \emph{graph} $\G$ is called an \emph{acyclic directed mixed graph} (ADMG) if it is \emph{acyclic} and contains only directed and bidirected edges. 

For a vertex $v$ in an ADMG $\G$, we define the following sets:
\begin{align*}
\pa_{\G}(v) &= \{w: w \rightarrow v \text{ in } \G\}\\
\sib_{\G}(v) &= \{w:w \leftrightarrow v \text{ in } \G\}\\
\an_{\G}(v) &= \{w: w \rightarrow \cdots \rightarrow v \text{ in } \G \text{ or } w=v\}\\
\de_{\G}(v) &= \{w: v \rightarrow \cdots \rightarrow w \text{ in } \G \text{ or } w=v\}\\
\dis_{\G}(v) &= \{w: w \leftrightarrow \cdots \leftrightarrow v \text{ in } \G \text{ or } w=v\}.
\end{align*}
They are known as the \emph{parents}, \emph{siblings}, \emph{ancestors}, \emph{descendants} and \emph{district} of $v$, respectively. These operators are also defined disjunctively for a set of vertices $W \subseteq \mathcal{V}$ so, for example, $\pa_{\G}(W) = \bigcup_{w \in W} \pa_{\G}(w)$. Vertices in the same district are connected by a bidirected path and this is an equivalence relation, so we can partition $\mathcal{V}$ and denote the \emph{districts} of a \emph{graph} $\G$ by $\mathcal{D}(\G)$. We sometimes ignore the subscript if the graph we refer to is clear, for example $\an (v)$ instead of $\an_{\G}(v)$.

For an ADMG $\G$, given a subset $W \subseteq \mathcal{V}$, the \emph{induced subgraph} $\G_{W}$ is defined as the graph with vertex set $W$ and edges in $\G$ whose endpoints are both in $W$. Also for the district of a vertex $v$ in an induced subgraph $\G_{W}$, we may denote it by $\dis_{W}(v)$. 

\subsection{Separation Criterion}

For a path $\pi$ with vertices $v_{i}$, $0 \leq i \leq k$ we call $v_{0}$ and $v_{k}$ the \emph{endpoints} of $\pi$ and any other vertices the \emph{nonendpoints} of $\pi$. For a nonendpoint $w$ in $\pi$, it is a \emph{collider} if $\rqarrow$ $w$ $\lqarrow$ on $\pi$ and a \emph{noncollider} otherwise (an edge $\rqarrow$ is either $\rightarrow$ or $\leftrightarrow$). For two vertices $a,b$ and a disjoint set of vertices $C$ in $\G$ ($C$ might be empty), a path $\pi$ is \emph{m-connecting} $a,b$ given $C$ if (i) $a,b$ are endpoints of $\pi$, (ii) every noncollider is not in $C$ and (iii) every collider is in $\an_{\G}(C)$. A \emph{collider path} is a path where all the nonendpoints are colliders.

In addition, we often use $(a,b,c)$, an ordered set notation, to denote a triple in a graph $\G$. If the triple is unshielded, then $a,b$ and $b,c$ are adjacent but not $a,c$; naturally, in this case $(a,b,c)$ and $(c,b,a)$ are equivalent.

\begin{definition}
For three disjoint sets $A,B$ and set $C$ ($A,B$ are non-empty), $A$ and $B$ are \emph{m-separated} by $C$ in $\G$ if there is no m-connecting path between any $a \in A$ and any $b \in B$ given $C$. We denote m-separation by $A \perp_m B \mid C$. 
\end{definition}

\begin{definition}\label{ordinary}
A distribution $P(X_{V})$ is said to satisfy the \emph{global Markov property} with respect to an ADMG $\G$ if whenever $A \perp_m B\mid C$ in $\G$, we have $X_A \indep X_B\mid X_C$ under $P$.
\end{definition}

There are other Markov properties that are equivalent to the global Markov property, including the (reduced) ordered local Markov property \citep{richardlocalmarkov}. \citet{hu2023towards} present the \emph{refined (ordered) Markov property} and show that it is strictly simpler than the (reduced) ordered local Markov property. We will employ this Markov property for scoring. We will give a brief description of it after introducing some necessary terminology of MAGs; full definition is given in the Appendix.

\subsection{MAGs}

\begin{definition}\label{MAGs}
An ADMG $\G$ is called a \emph{maximal ancestral graph} (MAG), if:
\begin{itemize}
    \item[(i)] for every pair of \emph{nonadjacent} vertices $a$ and $b$, there exists some set $C$ such that $a,b$ are m-separated given $C$ in $\G$ (\emph{maximality});
    \item[(ii)] for every $v \in \mathcal{V}$, $\sib_{\G}(v) \cap \an_{\G}(v) = \emptyset$ (\emph{ancestrality}).
\end{itemize}
\end{definition}

Note that in an ancestral graph, there is at most one edge between each pair of vertices.

\begin{figure}
\centering
  \begin{tikzpicture}
  [rv/.style={circle, draw, thick, minimum size=6mm, inner sep=0.8mm}, node distance=14mm, >=stealth]
  \pgfsetarrows{latex-latex};
\begin{scope}
  \node[rv]  (1)            {$1$};
  \node[rv, right of=1] (2) {$2$};
  \node[rv, below of=1] (3) {$3$};
  \node[rv, right of=3] (4) {$4$};
  \draw[<->, very thick, red] (1) -- (3);
  \draw[<->, very thick, red] (2) -- (4);
  \draw[<->, very thick, red] (1) -- (2);
  \draw[<-, very thick, color=blue] (3) -- (2);
  \draw[<-,very thick, blue] (4) -- (1);
  \node[below right of=3,xshift=-0.3cm] {(i)};
  \end{scope}
\begin{scope}[xshift = 4cm]
   \node[rv]  (1)           {$1$};
  \node[rv, right of=1] (2) {$2$};
  \node[rv, below of=1] (3) {$3$};
  \draw[<->, very thick, red] (1) -- (3);
  \draw[->, very thick, blue] (1) -- (2);
  \draw[->, very thick, blue] (2) -- (3);
  \node[below right of=3,xshift=-0.3cm] {(ii)};
  \end{scope}
 \begin{scope}[xshift = 8cm]
   \node[rv]  (1)           {$1$};
  \node[rv, right of=1] (2) {$2$};
  \node[rv, below of=1] (3) {$3$};
  \node[rv, right of=3] (4) {$4$};
  \draw[<->, very thick, red] (2) -- (4);
  \draw[<->, very thick, red] (3) -- (4);
  \draw[<-, very thick, color=blue] (3) -- (2);
  \draw[<-,very thick, blue] (4) -- (1);
  \draw[<->, very thick,red] (1) -- (2);
  \node[below right of=3,xshift=-0.3cm] {(iii)};
  \end{scope}
\end{tikzpicture}
\caption{(i) An ancestral graph that is not maximal. (ii) A maximal graph that is not ancestral. (iii) A maximal ancestral graph. }
\label{fig:MAGs}
\end{figure}
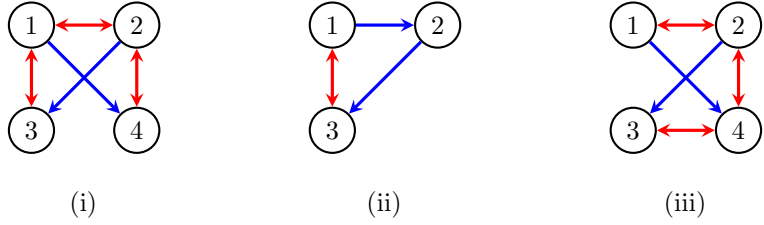

For example, the graph in Figure \ref{fig:MAGs}(i) is not maximal because 3 and 4 are not adjacent, but no subset of $\{1,2\}$ will m-separate them. (ii) is not ancestral as 1 is a sibling of 3, which is also one of its descendants. (iii) is a MAG in which the only conditional independence is $X_1 \indep X_3 \mid X_4$.

\begin{definition}
Two graphs $\G_{1}$ and $\G_{2}$ with the same vertex sets, are said to be \emph{Markov equivalent} if any m-separation holds in $\G_{1}$ if and only if it holds in $\G_{2}$.
\end{definition}

For every ADMG $\G$, we can project it to a MAG $\G^m$ such that $\G$ is Markov equivalent to $\G^m$, and $\G^m$ preserves the ancestral relations in $\G$ \citep{richardson2002}. Moreover, \citet{hu2020faster} show that the heads and tails defined below are preserved through the projection. Hence in this paper, we will only consider MAGs.

\subsection{Heads and tails}
A head is a subset of vertices with a corresponding tail. The concept of heads and tails originated from \citet{richardson2014factorization}, which provides a factorization theorem for ADMGs. Intuitively, heads are the subsets of vertices such that between any two vertices in a head, conditioning on the remaining vertices and any other vertex outside the head, they are always m-connected. Further this is true for a head unioned with any subset of its tail.
\begin{definition}\label{def: heads and tails}
For a vertex set $W \subseteq \mathcal{V}$, we define the \emph{barren subset} of $W$ as:$$\barren_{\G}(W) = \{w \in W:\de_{\G}(w) \cap W = \{w\}\}.$$

A vertex set $H$ is called a \emph{head} if:
\begin{itemize}
    \item[(i)] $\barren_{\G}(H) = H$;
    \item[(ii)] $H$ is contained in a single district in $\G_{\an (H)}$.
\end{itemize}

For an ADMG $\G$, we denote the set of all heads in $\G$ by $\mathcal{H}(\G)$.

The \emph{tail} of a $\head$ is defined as:$$\tail(H) = (\dis_{\an(H)}(H) \setminus H) \cup \pa_{\G}(\dis_{\an(H)}(H)).$$

\end{definition}

The \emph{parametrizing sets} of $\G$, denoted by $\mathcal{S}(\G)$ is defined as:
$$\mathcal{S}(\G) = \{H\cup A:H \in \mathcal{H}(\G)\text{ and } \emptyset \subseteq A \subseteq \tail(H)\}.$$



\citet{hu2020faster} contains a detailed introduction to the concept of heads and tails, and we recommend it for background reading. They give the following results, which show the importance of the parametrizing sets.

\begin{theorem}\label{thm: ME of MAGs}
Let $\G_{1}$ and $\G_{2}$ be two MAGs. Then $\G_{1}$ and $\G_{2}$ are Markov equivalent if and only if $\mathcal{S}(\G_{1})=\mathcal{S}(\G_{2})$ 
\end{theorem}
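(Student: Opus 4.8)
The plan is to prove both implications by passing through the \emph{model} of a MAG, i.e.~the set $\M(\G)$ of distributions satisfying the global Markov property with respect to $\G$. The bridge is Richardson's factorization \citep{richardson2014factorization} together with the associated discrete parametrization: each distribution in $\M(\G)$ is determined by a collection of variation-independent parameters indexed exactly by the elements of $\mathcal{S}(\G)$, and conversely every admissible choice of such parameters yields a member of $\M(\G)$. I would work within the class of discrete distributions, which is legitimate because Markov equivalence is a purely graphical (distribution-free) property and discrete MAG models admit faithful members; hence, as a preliminary step, $\G_1$ and $\G_2$ are Markov equivalent if and only if $\M(\G_1) = \M(\G_2)$ within this class. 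The forward part of this preliminary equivalence is immediate, since equal m-separations give equal independence constraints and hence equal models; the reverse part uses the existence of a faithful distribution, so that any separation present in one graph but absent in the other would be witnessed by a distribution lying in one model but not the other.

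For the direction $\mathcal{S}(\G_1) = \mathcal{S}(\G_2) \Rightarrow$ Markov equivalence, the key observation is that the parametrization map depends on the graph only through its parametrizing sets: the joint distribution is recovered from the parameters by a fixed M\"obius-type inversion over the sets in $\mathcal{S}(\G)$, and the permissible parameter values are restricted only by variation independence across those same sets. Thus if $\mathcal{S}(\G_1) = \mathcal{S}(\G_2)$ the two parametrization maps share domain and image, so $\M(\G_1) = \M(\G_2)$, and Markov equivalence follows from the preliminary equivalence.

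The converse is the substantive direction. Here I must recover the combinatorial object $\mathcal{S}(\G)$ from $\M(\G)$ alone, i.e.~show that the assignment $\mathcal{S}(\G) \mapsto \M(\G)$ is injective. I would argue that a subset $S \subseteq \mathcal{V}$ lies in $\mathcal{S}(\G)$ precisely when the corresponding interaction coordinate is a \emph{free} coordinate of the parametrization --- equivalently, when it is not forced to be a function of the lower-order coordinates by the model constraints --- so that $\mathcal{S}(\G)$ is intrinsically readable from $\M(\G)$. Since Markov equivalence gives $\M(\G_1) = \M(\G_2)$, the two recovered collections then coincide, yielding $\mathcal{S}(\G_1) = \mathcal{S}(\G_2)$.

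The main obstacle is exactly this identifiability step: proving that membership of $S$ in $\mathcal{S}(\G)$ is detectable from the model and is not accidentally mimicked by a constrained interaction arising from a structurally different graph. As a cross-check, and as an alternative route for this direction, I would compare $\mathcal{S}(\G)$ against the classical graphical invariants of Markov equivalence for MAGs (shared skeleton, shared unshielded colliders, and matching collider status along discriminating paths): the two-element members of $\mathcal{S}(\G)$ should reproduce the skeleton, and the pattern of membership among small sets such as $\{a,b,c\}$ and its subsets should encode collider versus noncollider status at unshielded triples. Reconciling the parametrizing-set description with the discriminating-path invariant is the delicate case, and it is where I expect the real work to lie.
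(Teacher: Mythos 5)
First, a point of reference: the paper does not prove this theorem at all --- it is quoted directly from \citet{hu2020faster}, so there is no in-paper proof to match your argument against. Judged on its own terms, your proposal has the two directions' difficulties inverted, and each direction as written contains a genuine gap. The direction ``Markov equivalent $\Rightarrow$ $\mathcal{S}(\G_1)=\mathcal{S}(\G_2)$'', which you call substantive and attack via identifiability of ``free coordinates'' of the model (a step you yourself concede is unresolved), is in fact immediate from Proposition \ref{prop:parametrizing set and independence}: a set $W$ fails to lie in $\mathcal{S}(\G)$ if and only if a certain m-separation statement holds, and Markov equivalent MAGs have identical m-separations, hence identical excluded sets. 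No detour through distributions, faithfulness, or parametrizations is needed, and taking that detour imports an extra unproved ingredient (existence of a discrete distribution faithful to an arbitrary MAG).

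The genuinely hard direction is ``$\mathcal{S}(\G_1)=\mathcal{S}(\G_2)$ $\Rightarrow$ Markov equivalent'', and your argument for it does not go through as stated. The Evans--Richardson discrete parametrization is not ``a fixed M\"obius-type inversion over the sets in $\mathcal{S}(\G)$'': the parameters are indexed by heads together with subsets of their tails, and both the inversion formula recovering $P(x_V)$ and the variation-independence structure depend on how each parametrizing set decomposes into heads and tails in the particular graph. Two MAGs can have equal $\mathcal{S}$ but different head/tail structure (already $1 \rightarrow 2$ versus $1 \leftarrow 2$), so ``the two parametrization maps share domain and image'' is not a consequence of $\mathcal{S}(\G_1)=\mathcal{S}(\G_2)$; equality of the images is essentially the claim to be proved. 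The route that actually works --- and the one taken in \citet{hu2020faster} --- is the graphical one you relegate to a ``cross-check'' and attach to the wrong direction: show that $\mathcal{S}(\G)$ determines the skeleton (two-element sets), the unshielded colliders (three-element sets with two adjacencies), and the collider status along discriminating paths (via larger sets, cf.\ $\Tilde{\mathcal{S}_{3}}$), and then invoke the classical characterization of the Markov equivalence class of MAGs by these three invariants. You correctly anticipate that the discriminating-path case is where the real work lies, but as written neither direction of your proof is complete.
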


\begin{proposition}\label{prop:parametrizing set and independence}
Let $\G$ be a MAG with vertex set $V$. For
a set $W \subseteq V$, $W \notin \Sset(\G)$ if and only if there are two
vertices $a, b$ in $W$ such that we can m-separate them by
a set $C$ such that $a, b \notin C$ with $W \subseteq C \cup \{a, b\}$.
\end{proposition}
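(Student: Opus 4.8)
The plan is to prove the two implications separately, writing throughout $H = \barren_\G(W)$ and recording two elementary facts: $\barren_\G(H) = H$, and $W \subseteq \an_\G(H)$ (every $w \in W$ has a descendant in $W$, the maximal such being barren and hence in $H$). Unwinding Definition~\ref{def: heads and tails} then shows $W \in \Sset(\G)$ precisely when $H$ is a head and $W \setminus H \subseteq \tail(H)$: since every tail vertex is a proper ancestor of the head, the barren subset of any parametrizing set $H' \cup A$ is exactly $H'$, and conversely $H \cup (W\setminus H)$ is a parametrizing set once the two conditions hold. Noting that ``$W \subseteq C \cup \{a,b\}$ with $a,b \notin C$'' is the same as ``$W \setminus \{a,b\} \subseteq C$'', this reduces the proposition to: $W$ is a head-plus-tail set iff no pair of $W$ can be m-separated by a set containing the remaining vertices of $W$ (call this \emph{inseparability}).

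For the forward implication (a parametrizing set is inseparable), fix $W = H \cup A$, a pair $a,b \in W$, and a set $C$ with $W \setminus \{a,b\} \subseteq C$ and $a,b \notin C$; I would build an m-connecting path explicitly. The backbone is the district $D := \dis_{\an(H)}(H)$, which is bidirected-connected and contains $H$, together with the directed edges into $D$ from its parents $\pa_\G(D)$ (which include the tail vertices $A \cap \pa_\G(D)$). Between two vertices of $D$ one can either route along a bidirected path, whose interior vertices are colliders active exactly when they lie in $\an_\G(C)$, or, where such a collider is not activated, detour through a directed tail edge giving a non-collider that is active exactly when it is kept out of $C$. The claim is that for each intermediate vertex one of these two options is always available, because the tail was defined to collect precisely the district and its parents. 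The example $a \leftrightarrow x \leftrightarrow y \leftrightarrow b$ with $x \to b$ and $y \to a$ (where $\{a,b\}$ is a head with tail $\{x,y\}$) already shows the route must genuinely depend on $C$: for $C = \{x,y\}$ one needs the fully-bidirected path, while for smaller $C$ one needs the directed shortcuts.

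The reverse implication is the contrapositive: from $W \notin \Sset(\G)$ I must exhibit a separable pair, and there are two cases. If $H$ is not a head it meets at least two districts of $\G_{\an(H)}$; choosing $a,b \in H$ in distinct such districts, I claim $a \perp_m b \mid \an_\G(H)\setminus\{a,b\}$, a set containing $W\setminus\{a,b\}$ since $W \subseteq \an_\G(H)$. This is where ancestrality does the work: any active path lies inside $\an_\G(\{a,b\}\cup C) = \an_\G(H)$, so every interior vertex is in $C$ and must therefore be a collider; an induction using $\sib_\G(v) \cap \an_\G(v) = \emptyset$ (an out-edge $a \to v_1$ with $v_1 \in \an_\G(H)$ would make $a$ a proper ancestor of some head vertex, contradicting barrenness) forces every edge to be bidirected, so $a$ and $b$ would share a district of $\G_{\an(H)}$, a contradiction. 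If instead $H$ is a head but $W \setminus H \not\subseteq \tail(H)$, pick $a \in W \setminus (H \cup \tail(H)) \subseteq \an_\G(H)\setminus(H\cup\tail(H))$ and separate it from a suitable head vertex using a conditioning set built from $\tail(H)$.

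The main obstacle is this last step together with the requirement, common to both directions, that the conditioning set contain \emph{all} of $W \setminus \{a,b\}$. Conditioning on the extra head vertices of $W$ (which are descendant-maximal and hence natural colliders) threatens to reopen paths that the tail would otherwise block, so separating $a$ from the head is not a bare application of the local Markov property with $C = \tail(H)$; one must enlarge $\tail(H)$ to absorb $H \setminus \{b\}$ and the retained tail vertices of $W$, then verify, using that $a$ lies neither in $D$ nor in $\pa_\G(D)$, that no collider path is thereby activated. Showing this enlargement is always harmless --- equivalently, that the failure $a \notin \tail(H)$ localizes to a single blockable connection --- is the crux, and the step on which I would expect to spend the most effort.
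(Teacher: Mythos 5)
First, a point of order: the paper does not actually prove this proposition --- it is quoted from \citet{hu2020faster} --- so there is no internal proof to compare against, and I am assessing your argument on its own terms. Your opening reduction is correct and is the right first move: with $H=\barren_\G(W)$ one has $W\subseteq\an_\G(H)$ and $W\in\Sset(\G)$ precisely when $H$ is a head with $W\setminus H\subseteq\tail(H)$. Your Case~A of the converse (if $H$ meets two districts of $\G_{\an(H)}$, condition on $\an_\G(H)\setminus\{a,b\}$; every interior vertex of a connecting path then lies in the conditioning set and must be a collider, forcing all middle edges to be bidirected, while barrenness of $a,b$ in $W$ rules out directed endpoint edges, so $a$ and $b$ would share a district) is essentially complete. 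The forward direction is only a sketch of the standard induction --- reroute along a directed path to $a$ or $b$ whenever a collider on the bidirected backbone fails to lie in $\an_\G(C)$, such a path existing and avoiding $C$ exactly because $H\setminus\{a,b\}\subseteq C$ --- but the ingredients you name are the right ones.

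The genuine gap is the one you flag yourself: Case~B, where $H$ is a head but some $a\in W$ lies in $\an_\G(H)\setminus(H\cup\tail(H))$. You leave open whether the conditioning set can be enlarged to absorb $H\setminus\{b\}$ and the retained tail vertices, and your write-up overlooks that any \emph{other} vertices of $W\setminus(H\cup\tail(H))$ besides $a$ must also be conditioned on. No bespoke graphical verification is needed here. Take any $b\in H$; since $b$ is barren in the ancestral set $\an_\G(H)$ and $\dis_{\an(H)}(b)=\dis_{\an(H)}(H)$, the (graphical form of the) ordered local Markov property gives $b\perp_m \an_\G(H)\setminus(H\cup\tail(H)) \mid (H\setminus\{b\})\cup\tail(H)$, and weak union --- m-separation is a semi-graphoid --- moves everything on the left except $a$ into the conditioning set, yielding $a\perp_m b\mid \an_\G(H)\setminus\{a,b\}$: the same separating set as in your Case~A, and one containing $W\setminus\{a,b\}$. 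With that substitution your outline becomes a correct proof, modulo writing out the forward induction in full.
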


There have been several graphical characterizations of the Markov equivalence class (MEC) of MAGs \citep{Zhao2005, ali2009, Spirtes97apolynomial, zhang2012characterization}. However, this `parametrizing set' characterization of the MEC has natural connections with the framework of imsets as Proposition \ref{prop:parametrizing set and independence} indicates that a set is not in $\Sset(\G)$ if and only if it is associated with a conditional independence. 

More importantly, Proposition \ref{prop:parametrizing set and independence} shows that if $\I_{\G_1} \subset \I_{\G_2}$ then $\mathcal{S}(\G_1) \supset \mathcal{S}(\G_2)$, this suggests a greedy learning procedure beginning by adding edges to the empty graph then deleting edges, in a similar manner to the GES algorithm \citet{chickering2002optimal}. 

\subsection{The refined Markov property}

We begin with an example from \citet{hu2023towards} on how conditional independence arises by marginalizing a vertex in the barren in the subgraph induced by a head's ancestors, which includes its tail.

\begin{example}
 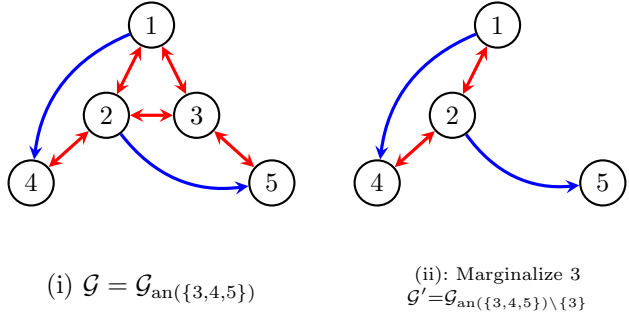
\begin{figure}
\centering
  \begin{tikzpicture}
  [rv/.style={circle, draw, thick, minimum size=6mm, inner sep=0.8mm}, node distance=14mm, >=stealth]
  \pgfsetarrows{latex-latex}

\begin{scope}
  \node[rv]  (1)            {$1$};
  \node[rv, below of=1, xshift=-6mm,yshift=2mm] (2) {$2$};
  \node[rv, below of=1, xshift=6mm,yshift=2mm] (3) {$3$};
  \node[rv, below of=2, xshift=-10mm,yshift=5mm](4){$4$};
  \node[rv, below of=3, xshift=10mm,yshift=5mm](5){$5$};
  \draw[->, very thick, blue] (1) to [bend right=30] (4);
  \draw[->, very thick, blue] (2)  to [bend right=30] (5);
  \draw[<->, very thick, red] (1) -- (3);
  \draw[<->, very thick, red] (2) -- (3);
  \draw[<->, very thick, red] (1) -- (2);
  \draw[<->, very thick, red] (2) -- (4);
  \draw[<->, very thick, red] (3) -- (5);
  \node[below of=1, yshift=-2.1cm] {(i) $\G = \G_{\an(\{3,4,5\})} $ };
  
  \end{scope}
\begin{scope}[xshift = 4.6cm]
  \node[rv]  (1)            {$1$};
  \node[rv, below of=1, xshift=-6mm,yshift=2mm] (2) {$2$};

  \node[rv, below of=2, xshift=-10mm,yshift=5mm](4){$4$};
  \node[rv, below of=2, xshift=2cm,yshift=5mm](5){$5$};
  \draw[->, very thick, blue] (1) to [bend right=30] (4);
  \draw[->, very thick, blue] (2)  to [bend right=30] (5);
  \draw[<->, very thick, red] (1) -- (2);
  \draw[<->, very thick, red] (2) -- (4);
  \node[below of=1, yshift=-2.1cm] {$\substack{\text{(ii)}:\text{ Marginalize 3} \\ \G' = \G_{\an(\{3,4,5\}) \setminus \{3\}}}$};
  
  \end{scope}
\end{tikzpicture}
 \caption{An example for Definition \ref{one head to another head}.}
 \label{exp: marginalizing example}
\end{figure}   

Consider Figure \ref{exp: marginalizing example}(i) with the numerical topological ordering and the head $\{3,4,5\}$. If we marginalize $3$, we reach the singleton head $\{5\}$ and the resulting subgraph is Figure \ref{exp: marginalizing example}(ii). Note that now $\{1,4\}$ do not lie in the same district as $\{5\}$ and therefore $5 \indep \{1,4\} \mid 2$.
\end{example}

This motivates the following definition and lemma. Let $A \leq i$ denote that $i$ is the maximal vertex in a set $A$.

\begin{definition}\label{one head to another head}
For a MAG $\G$ and two heads $H,H' \leq i$, we write $H \to^{k} H'$ ($k \in H \setminus \{i\}$) if $\barren_{\G'}(\dis_{\G'}(i)) = H'$, where $\G' = \G_{\an(H)\setminus K}$. We will refer to $k$ as a marginalization vertex.
\end{definition}

Graphically, $H \to^{k} H'$ means that in the subgraph $\G_{\an(H)}$, the maximal head (i.e.~the barren subset of the district) that contains $i$ after marginalizing $k$ (a vertex of the barren subset) is $H'$. 

\begin{lemma}\label{lemma: head marginalization and indep}
 For a MAG $\G$ and two heads $H,H' \leq i$, if $H \to^{k} H'$ ($k \in H \setminus \{i\}$), then $$
 i \indep (H \cup T) \setminus (H' \cup T'\cup  \{k\}) \mid (H'\cup T') \setminus \{i\}.
 $$
\end{lemma}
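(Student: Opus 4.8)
The plan is to establish the purely graphical statement $i \perp_m B \mid C$, where $B = (H\cup T)\setminus(H'\cup T'\cup\{k\})$ and $C = (H'\cup T')\setminus\{i\}$, from which the asserted conditional independence follows by the global Markov property (Definition \ref{ordinary}). Throughout write $\G' = \G_{\an(H)\setminus\{k\}}$ for the marginalized graph of Definition \ref{one head to another head} and $V' = \an(H)\setminus\{k\}$ for its vertex set.

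First I would reduce the problem from $\G$ to $\G'$. Two facts are needed. First, $V'$ is still ancestral in $\G$: since $k\in H$ and $H$ is barren, $k$ has no proper descendant inside $\an(H)$ (such a descendant would force $k$ to be a strict ancestor of another vertex of $H$), so deleting $k$ cannot destroy ancestral closure. Second, $\{i\}\cup B\cup C\subseteq V'$: by construction $k\notin B\cup C$, while $B\subseteq H\cup T\subseteq\an(H)$ and $C\subseteq H'\cup T'\subseteq\an(H')\subseteq V'$. Then the standard fact that m-separation among subsets of an ancestral set can be read off the induced subgraph lets me work in $\G'$. The underlying observation is that every non-endpoint of an m-connecting path is an ancestor of the endpoints or of $C$ (follow tail edges out of any noncollider until a collider, which lies in $\an(C)$, or an endpoint is reached), so any m-connecting path in $\G$ between $i$ and a vertex of $B$ given $C$ already lives in $\G'$.

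The heart of the argument is to recognize $C$ as the Markov blanket of $i$ in $\G'$. Because every vertex of $\an(H)$ precedes $i$ in the topological order, $i$ is the maximal vertex of $\G'$, so all its neighbours are parents or siblings and $D' := \dis_{\G'}(i)$ is a genuine district of $\G'$. I then claim $D' = \dis_{\an(H')}(H')$, so that $D'\cup\pa_{\G'}(D') = H'\cup T'$ and hence $C = \mb_{\G'}(i)$. One inclusion is immediate since $\an(H')\subseteq V'$. For the reverse I use $H' = \barren_{\G'}(D')$: chasing descendants inside $D'$ to a barren vertex shows $D'\subseteq\an_{\G'}(H') = \an(H')$, so the bidirected path in $\G'$ witnessing membership in $D'$ already lies in $\G_{\an(H')}$, giving $D'\subseteq\dis_{\an(H')}(H')$; combined with the fact that $H'$ (being a head) lies in a single district of $\G_{\an(H')}$, this yields equality.

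With $C = \mb_{\G'}(i)$ established and $i$ maximal in the ancestral graph $\G'$, the (graphical) ordered local Markov property \citep{richardlocalmarkov} gives $i\perp_m V'\setminus(\{i\}\cup C)\mid C$ in $\G'$. Since $B\subseteq V'$ and $B\cap(H'\cup T') = \emptyset$, we have $B\subseteq V'\setminus(\{i\}\cup C)$, so decomposition of m-separation yields $i\perp_m B\mid C$ in $\G'$; lifting back through the ancestral-subgraph equivalence gives the same m-separation in $\G$, and the global Markov property then delivers the stated independence. I expect the structural identification $\dis_{\G'}(i) = \dis_{\an(H')}(H')$ — equivalently, that the Markov blanket of the maximal vertex $i$ in the marginalized graph is exactly $(H'\cup T')\setminus\{i\}$ — to be the main obstacle, since this is where the definition of $H\to^k H'$, the headness of $H'$, and the discrepancy between districts computed in $\G'$ versus in $\G_{\an(H')}$ must all be reconciled.
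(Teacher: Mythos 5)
Your proposal is correct and follows essentially the same route as the paper's own (much terser) proof: identify $B=\an_\G(H)\setminus\{k\}$ as an ancestral set, show that $\{i\}\cup\mb_\G(i,B)=H'\cup T'$, and then invoke the ordered local Markov property together with decomposition to drop the vertices outside the Markov blanket. The only difference is that you spell out the verification that $\dis_{\G'}(i)=\dis_{\an(H')}(H')$, which the paper leaves implicit.
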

\begin{proof}
If $H \to^{k} H' $ then $\an_\G(H)\setminus k = B$, where $B$ is an ancestral set, $T' = \tail_\G(H')$ and $\{i\} \cup \mb_\G(i,B) = H' \cup T'$. Hence the lemma is proved by the ordered local Markov property and marginalizing vertices that lie outside of the Markov blanket of $i$ in $\G_{B}$,    
\end{proof}

By marginalizing each vertex of a head (except for the maximal vertex), we can obtain a list of conditional independence that is proved to be equivalent to the ordered local Markov property (Theorem D.4 \citep{hu2023towards}). Moreover \citet{hu2023towards} shows that for each head, we only need to pick one conditional independence associated with it. The resulting list of independence, referred as the refined Markov property, is proved to be equivalent, but simpler, to the ordered local Markov property (Proposition $4.3$ \citep{hu2023towards}). \citet{hu2023towards} also shows that this Markov property can be computed within polynomial time if one restricts maximal head size. We put the full definition in Appendix \ref{apx: full def of refined Markov property}.

\subsection{Meek's conjecture}

\citet{chickering2002optimal} proves the Meek's conjecture for DAGs and we state its analogue version of MAGs here.

\begin{theorem}
    Let $\G$ and $\mathcal{H}$ be any pair of MAGs such that $\mathcal{I}(\G)\supseteq \mathcal{I}(\mathcal{H})$. Let $r$ be the number of edges in $\mathcal{H}$ that are different to the edges in $\G$, and let $m$ be the number of edges in $\mathcal{H}$ that do not exist in $\G$. There exists a sequence of at most $2r+m$ edge mark change and edge additions in $\G$ with the following properties: 
    \begin{itemize}
        \item after each edge mark change or edge addition, $\G$ is a MAG and $\mathcal{I}(\G)\supseteq \mathcal{I}(\mathcal{H})$;
        \item after all edge mark changes and edge additions, $\G=\mathcal{H}$.
    \end{itemize}
\end{theorem}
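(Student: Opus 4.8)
The plan is to mimic \citet{chickering2002optimal}'s inductive proof of Meek's conjecture for DAGs, replacing covered-edge reversals by single edge-mark changes and using the parametrizing-set characterization as the bookkeeping device. First I would translate the hypothesis into the language of Section~\ref{sec:pre}: by the monotonicity remark following Proposition~\ref{prop:parametrizing set and independence}, $\mathcal{I}(\G)\supseteq\mathcal{I}(\mathcal{H})$ is equivalent to $\mathcal{S}(\G)\subseteq\mathcal{S}(\mathcal{H})$, and by Theorem~\ref{thm: ME of MAGs} equality $\mathcal{S}(\G)=\mathcal{S}(\mathcal{H})$ is exactly Markov equivalence. I would then induct on the number of edge marks and adjacencies in which $\G$ and $\mathcal{H}$ disagree; the base case, where this count is $0$, gives $\G=\mathcal{H}$ and there is nothing to prove. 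For the inductive step I split on whether $\G$ and $\mathcal{H}$ are Markov equivalent.

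Case~A (Markov equivalent, $\mathcal{S}(\G)=\mathcal{S}(\mathcal{H})$). Since $\{a,b\}\in\mathcal{S}$ if and only if $a,b$ are adjacent (Proposition~\ref{prop:parametrizing set and independence}), equal parametrizing sets force a common skeleton, so $m=0$ and the two MAGs differ only in edge marks. Here I would establish the MAG analogue of Chickering's covered-edge result: any two Markov equivalent MAGs are joined by a sequence of single edge-mark changes, each intermediate graph being a MAG in the same MEC. This counterpart of the transformational characterization of the DAG MEC can be assembled from the existing graphical characterizations of MAG Markov equivalence \citep{ali2009,zhang2012characterization} together with discriminating-path bookkeeping; every such mark change is a legal operation and leaves $\mathcal{S}$, hence $\mathcal{I}$, unchanged, so the independence-map condition is preserved throughout. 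Charging at most two mark changes per differing edge gives a chain of length at most $2r$, within budget.

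Case~B (not Markov equivalent, $\mathcal{S}(\G)\subsetneq\mathcal{S}(\mathcal{H})$). Then there is a set $W\in\mathcal{S}(\mathcal{H})\setminus\mathcal{S}(\G)$; by Proposition~\ref{prop:parametrizing set and independence} this corresponds to an independence holding in $\G$ but not in $\mathcal{H}$, and hence to a pair $a,b$ adjacent in $\mathcal{H}$ but nonadjacent in $\G$. The goal is to select one such adjacency and an orientation of its two marks so that inserting the edge yields a MAG $\G'$ with $\mathcal{S}(\G)\subsetneq\mathcal{S}(\G')\subseteq\mathcal{S}(\mathcal{H})$, i.e.\ $\mathcal{I}(\G')\supseteq\mathcal{I}(\mathcal{H})$ but with strictly fewer independences than $\G$. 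To pick the edge and its marks I would use a topological-style ordering consistent with $\mathcal{H}$ and the head/tail structure of Definition~\ref{def: heads and tails}, choosing an adjacency \emph{extremal} in this order so that its insertion agrees with the marks of $\mathcal{H}$ and forces no violation of ancestrality or maximality. Since $\G'$ then matches $\mathcal{H}$ on one more adjacency, its discrepancy is strictly smaller, and the inductive hypothesis applied to $(\G',\mathcal{H})$ closes the argument once the induced extra mark changes are charged against the $2r$ budget.

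The hard part — and the reason the statement is offered as a conjecture rather than a theorem — is the existence claim at the heart of Case~B (and, secondarily, the single-mark-change chain of Case~A). For DAGs, Meek's lemma guarantees that some valid edge addition or covered reversal always makes progress; the MAG version must additionally cope with bidirected edges, ancestrality, and maximality. Concretely, inserting an edge can create an almost-directed cycle or an unshielded/discriminating configuration that has to be repaired by cascading mark changes, and the difficulty is to exhibit a choice of edge and marks for which these repairs (a) keep every intermediate graph a MAG, (b) never destroy an independence of $\mathcal{H}$ that $\G$ still implies, and (c) stay within the $2r+m$ budget. I expect the decisive ingredient to be a MAG refinement of Meek's lemma, certifying that among the adjacencies of $\mathcal{H}$ missing from $\G$ at least one admits such a \emph{safe} insertion; proving that refinement, presumably through the head/tail and discriminating-path machinery underlying Theorem~\ref{thm: ME of MAGs} and Proposition~\ref{prop:parametrizing set and independence}, is where the real work lies.
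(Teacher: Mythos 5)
The paper does not prove this statement: it is presented as the MAG analogue of Meek's conjecture, and the authors state explicitly that ``this theorem has not been proven for MAG models'' and that they will \emph{assume} it throughout in order to derive consequences such as Proposition \ref{prop: invariant edge marks}. So there is no paper proof to compare yours against, and your proposal should be judged on whether it closes the conjecture on its own. It does not, and you say as much yourself: the existence claim in your Case B --- that some missing adjacency of $\mathcal{H}$ admits a ``safe'' insertion yielding a MAG $\G'$ with $\mathcal{I}(\mathcal{H}) \subseteq \mathcal{I}(\G') \subsetneq \mathcal{I}(\G)$ --- is precisely the open heart of the conjecture, and gesturing at ``a topological-style ordering consistent with $\mathcal{H}$'' and an ``extremal'' adjacency does not establish it. Ancestrality and maximality violations caused by an insertion genuinely can force cascading repairs, and no argument is given that these can always be resolved within budget while preserving the $\mathcal{I}$-map property at every intermediate step.

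Two smaller points. First, your Case A is essentially already a theorem in the literature: the paper cites \citet{zhang2012transformational} for exactly the claim that Markov equivalent MAGs are connected by a sequence of single edge-mark changes staying within the MEC, so you could invoke that result rather than proposing to reassemble it from the equivalence characterizations (though you would still need to verify the $2r$ counting against their construction). Second, your reduction of $\mathcal{I}(\G) \supseteq \mathcal{I}(\mathcal{H})$ to $\mathcal{S}(\G) \subseteq \mathcal{S}(\mathcal{H})$ and the observation that equal parametrizing sets force a common skeleton are correct uses of Proposition \ref{prop:parametrizing set and independence} and Theorem \ref{thm: ME of MAGs}. In short: your outline is a reasonable roadmap and is candid about where the difficulty lies, but it is a research plan, not a proof --- which is consistent with the paper, where the statement remains an assumed conjecture.
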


This theorem has been proven for DAGs and therefore it guarantees that greedy learning will output the optimal solution in the limit of infinite sample size for DAG models. While this theorem has not been proven for MAG models, many scored-based algorithms for MAGs implicitly assume it and search greedily, see \citet{claassen2022greedy}, \citet{triantafillou2016score}, and \citet{rantanen2021maximal}. \citet{zhang2012transformational} show that for Markov equivalent MAGs, there exists sequence of single edge mark changes for reaching from one MAG to another while staying in the same MEC, but there has been little progress since then. Throughout this chapter we will assume that Meek's conjecture holds for MAG models and derive some useful facts that accelerate the searching procedure.

\subsection{Imsets}

We now introduce the framework of imsets \citep{studeny2006probabilistic}. Imsets are an algebraic method for representing independence models using integer vectors. We only give a very brief introduction; for interested  readers we recommend the book \citep{studeny2006probabilistic}.

Let $\mathcal{P}(N)$ be the power set of a finite set of variables $N$. For any three disjoint sets, $A,B,C \subseteq N$,  we write the triple as $\langle A, B \cmid C \rangle$ and denote the set of all such triples by $\mathcal{T}(N)$.



\begin{definition} \label{semi-elementary imset}
An \emph{imset} is an integer-valued function $u$: $\mathcal{P}(N) \rightarrow \mathbb{Z}$. The delta function $\delta_A$ of a set $A \subseteq N$, which is also an imset, is defined as $\delta_{A}(B) = 1$ if $B=A$ and otherwise $\delta_{A}(B) = 0$.

A \emph{semi-elementary imset} $u_{\langle A,B | C\rangle}$ associated with any triple $\langle A,B \cmid C\rangle \in \mathcal{T}(N)$ is defined as: $u_{\langle A,B | C\rangle} = \delta_{A \cup B \cup C}-\delta_{A \cup C}-\delta_{B \cup C}+\delta_{C}$. 

An imset $u$ is \emph{combinatorial} if it can be written as a non-negative integer combination of elementary imsets.
\end{definition}

In this paper, we will construct an imset for a MAG by simply adding semi-elementary imsets of list of independences in its Markov property, thus the imset is always combinatorial. We will show that the imset can be used for a valid scoring criteria. Imsets are more than only adding semi-elementary imsets of some independences, which may induce unwanted conditional independences. The result in \citet{hu2023towards} ensures that the refined Markov property is equivalent to the global Markov property.

\begin{remark}
    The refined Markov property varies if the topological ordering changes. That is, the imsets of a MAG, obtained from its refined Markov property under different topological orderings are not the same.
\end{remark}

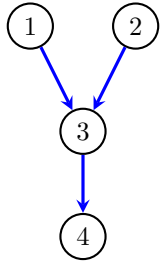
\begin{figure}
    \centering
     \begin{tikzpicture}
  [rv/.style={circle, draw, thick, minimum size=6mm, inner sep=0.8mm}, node distance=14mm, >=stealth]
  \pgfsetarrows{latex-latex};
\begin{scope}
  \node[rv]  (1)            {$1$};
  \node[rv, right of=1] (2) {$2$};
  \node[rv, below of=1, xshift=0.70cm] (3) {$3$};
  \node[rv, below of=3] (4) {$4$};

  \draw[->,very thick, blue] (1) -- (3);
  \draw[->,very thick, blue] (2) -- (3);
  \draw[->,very thick, blue] (3) -- (4);
  \end{scope}

\end{tikzpicture}
    \caption{A DAG with 4 nodes}
    \label{imsetexample}
\end{figure}


\section{Moving between Markov equivalence classes}\label{sec: move between MEC}

Given a MAG $\mathcal{G}$, \citet{zhang2012characterization} uses its \emph{partial ancestral graph} (PAG) to characterize $[\mathcal{G}]$, which captures all the arrowheads and tails that are present in every MAG in $[\mathcal{G}]$. In this section, we describe how we move between MECs by using PAGs as a representation of the MECs.  In \citet{claassen2022greedy}, they use skeleton and \emph{colliders with order} to represent the MEC. To visit other MECs, they perform graphical operations including adding or deleting adjacencies, or altering orientation of colliders with order. After the modification, they compute the PAG of the resulting MEC and check that it is valid. We show that this procedure can be simplified and improved using the orientation rules of PAGs and using PAGs as representation of MECs directly.

\subsection{Partial ancestral graphs}\label{recoverPAG}
\newcommand{\tikzcircle}[2][black,fill=white]{\tikz[baseline=-0.5ex]\draw[#1,radius=#2] (0,0) circle ;}%
Given a MAG $\mathcal{G}$, an edge mark in $\mathcal{G}$ is \emph{invariant} if it is present in every graph in $[\mathcal{G}]$.

\begin{definition}\label{PAGdef}
Given a MAG $\mathcal{G}$, the \emph{partial ancestral graph} (PAG) for $[\mathcal{G}]$, $\mathcal{P}_{\mathcal{G}}$, is a simple graph with three kind of edge marks: arrowheads, tails and circles (six kinds of edges: $-$, $\rightarrow$, $\leftrightarrow$, $\circleedge$, $\dcircleedge$, $\circlearrow$)\footnote{As we consider only directed MAGs, there are only four kinds of edges}, such that:
\begin{itemize}
 \item $\mathcal{P}_{\mathcal{G}}$ has the same adjacencies as any maximal member of $[\mathcal{G}]$;
 \item a mark of arrowhead is in $\mathcal{P}_{\mathcal{G}}$ if and only if it is invariant in $[\mathcal{G}]$;
 \item a mark of tail is in $\mathcal{P}_{\mathcal{G}}$ if and only if it is invariant in $[\mathcal{G}]$.
\end{itemize}
\end{definition}

\citet{zhang2012characterization} present an algorithm, including a set of rules, $\mathcal{R}0$ to $\mathcal{R}10$, which are listed in Appendix \ref{sec:orientation rules}, to construct the PAG of a given MAG. The algorithm is shown to be sound and complete, it begins with a graph $\mathcal{P}$ that has
the same adjacencies as $\mathcal{G}$ and only one kind of edge $\dcircleedge$. Then we exhaustively apply the orientation rules until no more edge marks can be changed. 

A direct approach to score a PAG is to construct a MAG represented by the PAG \citep{zhang2012characterization} and fit the MAG to the data as \citet{claassen2022greedy} did. We show that a representative MAG can be constructed by only an \emph{arrow complete} PAG and thus save the computational cost of orienting the invariant tails.

\begin{remark}
    This results is not new, since the proof of \citet{ali2012towards}'s result on characterizing the MEC by arrow complete PAGs partly relies on it. However, we did not find any formal statement of this result in the literature, so we believe that this is the first  proper formulation.
\end{remark}

\subsubsection{Representative MAG}

Algorithm \ref{algo: select representative MAG} explicitly describes the steps needed to construct a representative MAG.
In fact, just to represent the MEC it is sufficient to only apply rules $\mathcal{R}0$--$\mathcal{R}4$, which obtain all the invariant arrowheads; this  \emph{arrow complete} PAG \citep{ali2012towards} fully characterizes the MEC.   The remaining rules $\mathcal{R}5$-$\mathcal{R}10$ correspond to finding invariant tails, and are more computationally expensive cost than $\mathcal{R}0$-$\mathcal{R}4$. 

Now we show that we can obtain the representative MAG from only arrow complete PAGs instead of fully oriented PAGs. This comes from the following observations:
\begin{itemize}
    \item $\mathcal{R}5$ and $\mathcal{R}6$ will not be called if the MEC contains a directed MAG, as pointed out by \citet{zhang2012characterization};
    \item $\circleedge$ is produced only by $\mathcal{R}6$, and $\mathcal{R}7$ is called only if there is $\circleedge$, which does not exist for a MEC that contains a directed MAG;
    \item finally, $\mathcal{R}8$--$\mathcal{R}10$ only change $\circlearrow$ to $\rightarrow$, and we can always do this without loss of generality.
    
\end{itemize}

\begin{lemma}
Let $\mathcal{P}$ and $\mathcal{P}'$ be a fully oriented PAG and an arrow complete PAG, respectively. Suppose they represent the same MEC that contains at least one directed MAG, then the outputs of Algorithm \ref{algo: select representative MAG} are the same for $\mathcal{P}$ and $\mathcal{P}'$.
\end{lemma}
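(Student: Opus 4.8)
The plan is to show that Algorithm~\ref{algo: select representative MAG}, when run on the fully oriented PAG $\mathcal{P}$ and on the arrow complete PAG $\mathcal{P}'$, never encounters a step at which the two inputs could diverge. The key observation, already laid out in the three bullet points preceding the statement, is that $\mathcal{P}$ and $\mathcal{P}'$ differ only in the additional edge marks that rules $\mathcal{R}5$--$\mathcal{R}10$ would orient; since $\mathcal{R}0$--$\mathcal{R}4$ are common to both constructions and produce exactly the invariant arrowheads, the two PAGs agree on every arrowhead and on the underlying adjacencies. They may disagree only on whether certain endpoints carry a circle (in $\mathcal{P}'$) versus a tail (in $\mathcal{P}$). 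The entire proof therefore reduces to verifying that the algorithm treats a circle endpoint and an invariant tail endpoint identically whenever the MEC contains a directed MAG.

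First I would invoke the three observations as formal lemmas. Because the MEC contains a directed MAG, I would argue that $\mathcal{R}5$ and $\mathcal{R}6$ are never triggered (citing \citet{zhang2012characterization}), so no $\circleedge$ or $\dcircleedge$ mark survives in the fully oriented $\mathcal{P}$; every non-arrowhead endpoint of $\mathcal{P}$ is therefore either a tail produced by $\mathcal{R}8$--$\mathcal{R}10$ or an invariant tail produced by $\mathcal{R}7$, which itself presupposes a $\circleedge$ and so does not apply. This means the only difference between $\mathcal{P}$ and $\mathcal{P}'$ is that certain $\circlearrow$ edges in $\mathcal{P}'$ appear as $\rightarrow$ in $\mathcal{P}$: the fully oriented PAG has resolved some circle tails into genuine tails, while the arrow complete PAG leaves them as circles. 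Crucially, both endpoints already carry an arrowhead on the other side, so these are precisely the edges on which Algorithm~\ref{algo: select representative MAG} must make an orientation choice.

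Next I would examine the algorithm step by step and show that its output on each such edge is forced to be $\rightarrow$ regardless of whether the input endpoint is a circle or a tail. For the $\rightarrow$ edges of $\mathcal{P}$ the algorithm simply copies the orientation; for the corresponding $\circlearrow$ edges of $\mathcal{P}'$ the algorithm (by the third bullet, which states $\mathcal{R}8$--$\mathcal{R}10$ only turn $\circlearrow$ into $\rightarrow$) orients the circle into a tail, yielding the same $\rightarrow$. Since the invariant tails found by $\mathcal{R}8$--$\mathcal{R}10$ are exactly the circle-endpoints that \emph{must} be tails in every member of the MEC, choosing them as tails loses no generality, and the resulting directed edge is a valid orientation in both cases. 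I would then argue that on every other edge the two PAGs are already identical, so the algorithm behaves identically there by construction. Assembling these cases gives that the two runs produce the same edges with the same marks, i.e.~the same MAG.

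The main obstacle I anticipate is not in the bookkeeping but in making the third bullet point watertight: one must check that orienting a $\circlearrow$ as $\rightarrow$ at every applicable edge in $\mathcal{P}'$ actually yields a \emph{valid} MAG in the MEC (ancestral, maximal) and that this MAG coincides with the one $\mathcal{P}$ produces, rather than merely an arbitrary Markov equivalent graph. In particular I would need to confirm that the circle endpoints left unresolved by $\mathcal{R}0$--$\mathcal{R}4$ but resolved to tails by $\mathcal{R}8$--$\mathcal{R}10$ are oriented consistently by the algorithm's deterministic tie-breaking, so that no directed cycle or almost-directed cycle is introduced and the same representative is selected in both runs. This consistency-of-choice argument, together with the guarantee that a directed MAG exists in the MEC (so that orienting all remaining circles into tails is globally coherent), is where the real content of the proof lies; the rest follows by the case analysis above.
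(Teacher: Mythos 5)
Your proposal takes essentially the same route as the paper, which justifies this lemma solely by the three bullet observations preceding it: since the MEC contains a directed MAG, $\mathcal{R}5$--$\mathcal{R}7$ are never triggered, so $\mathcal{P}$ and $\mathcal{P}'$ can differ only on edges that are $\circlearrow$ in $\mathcal{P}'$ and $\rightarrow$ in $\mathcal{P}$, and step 2 of the algorithm erases exactly this difference. One claim in your write-up is wrong, though it does not sink the argument: you assert that no $\dcircleedge$ mark survives in the fully oriented PAG. That is false in general (e.g.\ a single edge between two vertices in a DAG-only MEC remains $\dcircleedge$ after all rules), and indeed step 3 of the algorithm exists precisely to handle the surviving circle component. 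The correct observation is that $\mathcal{R}8$--$\mathcal{R}10$ each require an arrowhead at one endpoint ($a \circlearrow c$) and therefore never orient a $\dcircleedge$ edge, so the circle components of $\mathcal{P}$ and $\mathcal{P}'$ coincide and step 3 acts identically on both. Also, your ``main obstacle'' --- verifying that the output is a valid MAG --- is not actually part of this lemma's claim (that is the soundness of the \FuncSty{PAG-to-MAG} construction from \citet{zhang2012characterization}); the lemma only asserts equality of the two outputs, which follows once the two graphs agree after step 2.
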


Therefore, we will use the arrow complete PAGs for scoring MECs, since they are easier to compute. Note that it also works for any PAG.

\begin{algorithm}\SetAlgoRefName{\FuncSty{PAG-to-MAG}}
\caption{}
\label{algo: select representative MAG}
\SetAlgoLined
\SetKw{KwDef}{define}
\KwIn{An arrow complete PAG $\mathcal{P}$}
\KwResult{A MAG $\G$ such that $\mathcal{P}_{\G} = \G$}
Let $\G = \mathcal{P}$\;
\textbf{Change} every  $\circlearrow$ in $\G$ into $\rightarrow$\;
\textbf{Orient} $\dcircleedge$ component in $\G$ into a DAG with no unshielded collider\;

\Return{$\G$}

\end{algorithm}

\subsubsection{Consistent invariant edge marks}

Here we show a result that follows from assuming Meek's conjecture for MAGs. The result will accelerate our greedy learning algorithms. 

We say two PAGs are \emph{inconsistent} at an edge mark if it is an invariant arrow head in one PAG and an invariant tail in the other. 
First we need the following lemma.

\begin{lemma}\label{lemma: not I-maps for non-equivalent MAGs with same skeleton}
For any two MAGs $\G$ and $\G'$, that have the same skeleton but are not Markov equivalent, neither is a submodel of the other.  That is, neither $\mathcal{I}(\G) \subseteq \mathcal{I}(\G')$ nor $\mathcal{I}(\G') \subseteq \mathcal{I}(\G)$.
\end{lemma}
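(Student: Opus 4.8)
The plan is to prove the contrapositive. The statement is symmetric in $\G$ and $\G'$, so I assume they share a skeleton and that $\mathcal{I}(\G) \subseteq \mathcal{I}(\G')$, and I will show that $\G$ and $\G'$ are Markov equivalent. By Theorem \ref{thm: ME of MAGs} this is the same as proving $\Sset(\G) = \Sset(\G')$, so I would organise the whole argument around the parametrizing sets, using Proposition \ref{prop:parametrizing set and independence} to pass between $\Sset$ and independences.

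One inclusion is immediate. If $W \notin \Sset(\G)$, Proposition \ref{prop:parametrizing set and independence} produces $a,b \in W$ and a set $C$ with $a,b \notin C$, $W \subseteq C \cup \{a,b\}$ and $a \perp_m b \mid C$ in $\G$; this independence belongs to $\mathcal{I}(\G) \subseteq \mathcal{I}(\G')$, so it also holds in $\G'$, and the same triple witnesses $W \notin \Sset(\G')$. Hence $\Sset(\G') \subseteq \Sset(\G)$. It is worth stressing that this inclusion alone cannot finish the argument: the chain $a \to b \to c$ and the collider $a \to b \leftarrow c$ have the same skeleton, are not equivalent, yet their parametrizing sets are nested. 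The model-inclusion hypothesis must therefore be exploited in both directions.

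The engine for the reverse inclusion is that model inclusion forces the collider structure to agree. Because the skeleton is shared, a triple $(a,b,c)$ is unshielded in $\G$ exactly when it is in $\G'$, and Proposition \ref{prop:parametrizing set and independence} shows that such a triple is a collider in a MAG if and only if $\{a,b,c\}$ lies in its parametrizing set: the only separable pair inside $\{a,b,c\}$ is $(a,c)$, and a separator of $a,c$ can contain $b$ precisely when $b$ is a noncollider. Combined with $\Sset(\G') \subseteq \Sset(\G)$, this shows every collider of $\G'$ is a collider of $\G$. Conversely, if $(a,b,c)$ is a collider in $\G$, maximality separates $a,c$ by some $S$, which cannot contain $b$; transferring $a \perp_m c \mid S$ into $\G'$ forces a collider there as well, since a noncollider at $b$ would leave the connecting path through $b$ open. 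Thus $\G$ and $\G'$ have the same unshielded colliders.

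Finally I would promote this to all colliders by inducting along discriminating paths: once the lower-order orientations are known to coincide, a discriminating path for a vertex $b$ occurs in both graphs with the same intermediate collider-and-parent structure, and whether $b$ is a collider is again equivalent to the existence of a separator of the path's endpoints that excludes, rather than includes, $b$. The same transfer-of-independence argument, run in both directions as above, makes this status agree in $\G$ and $\G'$. With identical skeleton and identical unshielded colliders and colliders along discriminating paths, the standard characterization of Markov equivalence for MAGs gives that $\G$ and $\G'$ are Markov equivalent, i.e. $\Sset(\G) = \Sset(\G')$, which is the contrapositive. I expect the discriminating-path induction to be the main obstacle: one must check that the path and its parent-of-endpoint structure are genuinely common to both graphs before a separator can be transferred, and that the transferred independence isolates precisely the vertex $b$. (A softer alternative replaces this last step by a dimension count: faithful Gaussian realisations give irreducible models of dimension $|\mathcal{V}| + |\mathcal{E}|$, which is fixed by the skeleton, so a strict model inclusion between two equal-dimensional irreducible models is impossible.)
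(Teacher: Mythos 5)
Your proof is correct and follows essentially the same route as the paper's: both arguments reduce the question to agreement of unshielded colliders and of discriminating-path orientations, each pinned down by whether a separator of the relevant endpoints must include or must exclude the middle vertex, and then invoke the characterization of Markov equivalence by skeleton, unshielded colliders and discriminating paths. Your contrapositive framing, the parametrizing-set detour for one inclusion, and the dimension-counting aside are inessential variations; the discriminating-path step you flag as the main obstacle is treated at the same level of detail in the paper's own proof.
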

\begin{proof}
Consider any unshielded triples $a \suns b \suns c$ in $\G$ and $\G'$. If it is an unshielded collider triple in $\G$, then there is an independence $a \indep c \mid B$ in $\mathcal{I}(\G)$ such that $b \notin B$. If it is an unshielded noncollider triple in $\G'$, then there is an independence $a \indep c \mid B$ in $\mathcal{I}(\G')$ such that $b \in B$. Therefore if $\G$ and $\G'$ have any unshielded triple that is oriented differently, then we are done. 

Now suppose $\G$ and $\G'$ have the same skeleton and unshielded collider triples. The remaining piece of their MECs is the orientation of discriminating paths when we are orienting their PAGs. Since they have the same skeleton and unshielded collider triples, any discriminating path arising when orienting one PAG will also appear in another PAG. For similar reasons, for a discriminating path $\pi = \{d,\dots,a,b,c\}$, if $b$ is a collider then there is an independence $d \indep c \mid B$ such that $b \notin B$ \citep{richardson2002} and including $b$ would open the path between $d$ and $c$; if $b$ is a noncollider then there is an independence $d \indep c \mid B$ such that $b \in B$ and excluding $b$ would block the path. So these discriminating paths must have the same orientation, otherwise we are done. But then since the two MAGs have the same skeleton, unshielded collider triples and orientation of discriminating path when orienting PAGs, they must be Markov equivalent.
\end{proof}

\begin{proposition}\label{prop: invariant edge marks}
Assuming Meek's conjecture holds. Let $\mathcal{P}$ and $\mathcal{P}'$ be two PAGs such that $\mathcal{I}(\mathcal{P}) \supseteq \mathcal{I}(\mathcal{P}')$, and $\mathcal{P}'$ has one more edge $\{i,j\}$ than $\mathcal{P}$. Then there is no inconsistent edge mark between $\mathcal{P}$ and $\mathcal{P}'$.
\end{proposition}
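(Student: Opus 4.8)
The plan is to reduce everything to representative MAGs and then run the Meek sequence, tracking where the Markov equivalence class can change. First I would pick representative MAGs $\G \in [\mathcal{P}]$ and $\mathcal{H} \in [\mathcal{P}']$ (e.g.\ via Algorithm \ref{algo: select representative MAG}), so that $\mathcal{I}(\G) = \mathcal{I}(\mathcal{P}) \supseteq \mathcal{I}(\mathcal{P}') = \mathcal{I}(\mathcal{H})$ and $\mathcal{H}$ has exactly one more edge, namely $\{i,j\}$. Applying Meek's conjecture to the pair $(\G,\mathcal{H})$ yields a sequence $\G = \G_0, \G_1, \dots, \G_T = \mathcal{H}$ of MAGs with $\mathcal{I}(\G_t) \supseteq \mathcal{I}(\mathcal{H})$ at every step, where each step is either an edge mark change (which preserves the skeleton) or an edge addition (which adds one adjacency and leaves all other marks untouched). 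Since $m=1$ and no step deletes an edge, there is \emph{exactly one} edge addition, say at step $s$, which introduces $\{i,j\}$. Hence $\G_0,\dots,\G_{s-1}$ all carry the skeleton $S$ of $\mathcal{P}$, while $\G_s,\dots,\G_T$ all carry the skeleton $S' = S \cup \{i,j\}$ of $\mathcal{P}'$.

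Suppose toward a contradiction that there is an inconsistent edge mark; without loss of generality it is an invariant arrowhead at the endpoint $b$ of a shared edge $\{a,b\}$ in $\mathcal{P}$ and an invariant tail at $b$ in $\mathcal{P}'$ (the reverse case is symmetric, swapping the two marks). Note $\{a,b\} \neq \{i,j\}$, since $\{i,j\}$ is not an edge of $\mathcal{P}$ and so cannot carry an invariant mark there. The dense side is handled cleanly: for each $t \geq s$, the MAG $\G_t$ has skeleton $S'$ and satisfies $\mathcal{I}(\mathcal{H}) \subseteq \mathcal{I}(\G_t)$, so by the contrapositive of Lemma \ref{lemma: not I-maps for non-equivalent MAGs with same skeleton} (same-skeleton MAGs admitting a containment of independence models must be Markov equivalent) we get $\G_t \in [\mathcal{H}] = [\mathcal{P}']$. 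In particular $\G_s \in [\mathcal{P}']$, so the invariant tail forces a tail at $b$ in $\G_s$; and because the addition step does not touch the mark at $b$, the predecessor $\G_{s-1}$ also has a tail at $b$.

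It remains to show $\G_{s-1} \in [\mathcal{P}]$, for this forces an arrowhead at $b$ (invariant in $\mathcal{P}$) and contradicts the tail just established. This is the main obstacle, and the asymmetry is real: on the sparse side, knowing only that $\G_{s-1}$ has skeleton $S$ and $\mathcal{I}(\G_{s-1}) \supseteq \mathcal{I}(\mathcal{H})$ is \emph{not} enough, because several distinct MECs on skeleton $S$ may all lie above $\mathcal{I}(\mathcal{H})$, and Lemma \ref{lemma: not I-maps for non-equivalent MAGs with same skeleton} only tells us such non-equivalent classes are mutually incomparable — which is perfectly consistent with both containing $\mathcal{I}(\mathcal{H})$. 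To break this, I would prove that the independence models are monotone along the sparse segment, i.e.\ that the mark-change steps $\G_0 \to \cdots \to \G_{s-1}$ each preserve the Markov equivalence class (they are the MAG analogue of covered-edge reversals, in the spirit of the within-MEC single-mark transformations of \citet{zhang2012transformational}). This gives $\mathcal{I}(\G_{s-1}) = \mathcal{I}(\G_0) = \mathcal{I}(\mathcal{P})$, whence $\G_{s-1}\in[\mathcal{P}]$ by one further application of the Lemma.

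With both $\G_{s-1}\in[\mathcal{P}]$ and $\G_s \in [\mathcal{P}']$ in hand, the conclusion is immediate: the single edge addition relating them does not alter the mark at $b$, yet that mark is simultaneously forced to be an arrowhead (invariant in $[\mathcal{P}]$) and a tail (invariant in $[\mathcal{P}']$), a contradiction. I expect the genuinely delicate point to be the sparse-side equivalence $\G_{s-1}\in[\mathcal{P}]$, and this is precisely where the MAG form of Meek's conjecture is used in an essential way; by contrast, the complementary edge-addition monotonicity $\mathcal{I}(\G_{s-1}) \supseteq \mathcal{I}(\G_s)$ (deleting an adjacency only removes m-connecting paths and shrinks the ancestor sets of colliders, so it weakly enlarges the independence model) is routine and can be checked directly from the m-separation criterion.
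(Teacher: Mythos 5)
Your overall strategy is the same as the paper's: run the Meek sequence from a representative of $[\mathcal{P}]$ to a representative of $[\mathcal{P}']$, note that there is exactly one edge addition, place all post-addition graphs in $[\mathcal{P}']$ via the contrapositive of Lemma \ref{lemma: not I-maps for non-equivalent MAGs with same skeleton}, and derive the contradiction from the mark at $b$ across the addition step. Your handling of the dense segment is correct, and is in fact a cleaner statement of what the paper does there.

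The gap is the one you yourself flag and then leave open: you never establish that the pre-addition mark changes keep the intermediate graphs inside $[\mathcal{P}]$ (equivalently, that they preserve the arrowhead at $b$). As you correctly observe, Lemma \ref{lemma: not I-maps for non-equivalent MAGs with same skeleton} only yields mutual incomparability of distinct same-skeleton MECs, while the Meek sequence only guarantees $\mathcal{I}(\G_t)\supseteq\mathcal{I}(\mathcal{H})$; several pairwise incomparable MECs on the skeleton of $\mathcal{P}$ could all satisfy this, so nothing stated so far pins $\G_{s-1}$ to $[\mathcal{P}]$. Your proposed repair --- that the pre-addition steps are within-MEC transformations in the sense of \citet{zhang2012transformational}, analogous to covered-edge reversals in \citet{chickering2002optimal} --- is the right shape of statement, but it does not follow from Meek's conjecture as stated here (which allows arbitrary mark changes so long as each intermediate is a MAG and an $\mathcal{I}$-map of $\mathcal{H}$); it would need to be built into a strengthened form of the conjecture or proved separately, and you do neither. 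For what it is worth, the paper's own proof is thin at exactly this point: it asserts that a pre-addition mark change ``cannot change the arrowhead at $b \getss c$ because this edge mark is invariant,'' which presupposes that the current intermediate still lies in $[\mathcal{P}]$ --- the very fact at issue. So you have correctly located the delicate step, but your proposal does not close it.
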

\begin{proof}
     Suppose $b \getss c$ in $\mathcal{P}$. By Meek's conjecture, for some MAG $\G$ represented by $\mathcal{P}$, there exists a sequence of graph operations, consisting of either adding edge or changing of edge mark, that leads to some MAG $\G'$ represented by $\mathcal{P}'$. There must be only one edge addition. Consider any change of edge mark before the adding edge operation, it cannot change the arrowhead at $b \getss c$, because this edge mark is invariant, so changing it would lead to another MEC $\mathcal{P}''$ with the same skeleton and such that neither $\mathcal{I}(\mathcal{P}) \subseteq \mathcal{I}(\mathcal{P}'')$ nor $\mathcal{I}(\mathcal{P}'') \subseteq \mathcal{I}(\mathcal{P})$. Therefore after the edge addition operation, $b \getss c$ remains. Now by Lemma \ref{lemma: not I-maps for non-equivalent MAGs with same skeleton}, any change of edge mark later will not change the MEC as the skeleton remains the same, so it is always represented by $\mathcal{P}'$. Now since $b \getss c$ is in some MAG represented by $\mathcal{P}'$, the edge mark then cannot be an invariant tail in $\mathcal{P}'$.
\end{proof}

In some cases, Proposition \ref{prop: invariant edge marks} help us to orient new unshielded triples or discriminating path when we visit a new MEC, so we do not need to consider different orientations of them; this saves computational cost. We describe this in detail later.
\subsection{Equivalence classes}

To construct the PAG of a MEC, we need the following information: (i) the skeleton; (ii) the unshielded colliders (for $\mathcal{R}0$); and (iii) the orientation of discriminating paths for which $\mathcal{R}4$ is called. Any characterization of MECs should contain this information, and so the algorithm of \citet{zhang2012characterization} can be adapted to construct the PAG based upon it. In Appendix \ref{sec: construct PAG use parametrizing set} we show how to construct a PAG by using the parametrizing set. \citet{claassen2022greedy} show how to do the same using colliders with order. Both the parametrizing set and collider with order have the same information about (i) and (ii), but they contain different triples for (iii); both characterizations may contain \emph{redundant} triples. 

\begin{example}\label{exp: redundant triples}
\begin{figure}
\centering
  \begin{tikzpicture}
  [rv/.style={circle, draw, thick, minimum size=6mm, inner sep=0.8mm}, node distance=14mm, >=stealth]
  \pgfsetarrows{latex-latex}
\begin{scope}
  \node[rv]  (1)            {$1$};
  \node[rv, right of=1] (2) {$2$};
  \node[rv, right of=2] (3) {$3$};
  \node[rv, right of=3] (4) {$4$};
  \node[rv, above of=3] (5) {$5$};
  \draw[->, very thick, color=blue] (1) -- (2);
  \draw[->, very thick, color=blue] (2) -- (3);
  \draw[->, very thick, color=blue] (3) -- (4);
  \draw[<->, very thick, red] (2) -- (5);
  \draw[<->, very thick, red] (4) -- (5);
  \node[below of=3,yshift=0.5cm] {(i)};
  \end{scope}
\begin{scope}[xshift = 7cm]
   \node[rv]  (1)           {$1$};
  \node[rv, right of=1] (2) {$2$};
  \node[rv, right of=2] (3) {$3$};
  \node[rv, above of=2] (4) {$4$};
  \node[rv, above of=3] (5) {$5$};
  \draw[->, very thick, color=blue] (1) -- (2);
  \draw[->, very thick, color=blue] (2) -- (3);
  \draw[->, very thick, color=blue] (4) -- (5);
  \draw[<->, very thick, red] (2) -- (5);
  \draw[<->, very thick, red] (3) -- (5);
  \node[below of=2,yshift=0.5cm] {(ii)};
  \end{scope}
\end{tikzpicture}
\caption{Examples for redundant triples }
\label{fig:redundant triples MAGs}
\end{figure}
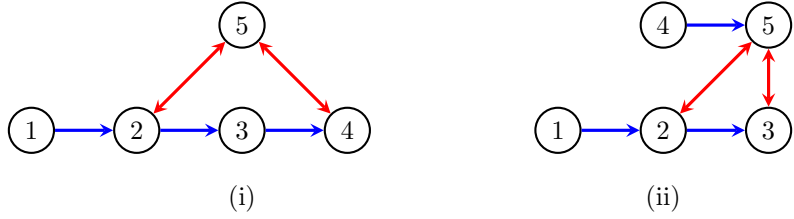
In Figure \ref{fig:redundant triples MAGs}(i), $\{1,4,5\} \in \Sset_3(\G)$ is unnecessary as when orienting the PAG, $\mathcal{R}4$ will not be called and only skeleton and unshielded colliders are needed. Similarly, in Figure \ref{fig:redundant triples MAGs}(ii), $(2,5,3)$ is a collider with order, but again the PAG is completely determined by the skeleton and unshielded colliders.
\end{example}

\citet{claassen2022greedy} would change the MEC of Figure \ref{fig:redundant triples MAGs}(ii) by modifying both unshielded colliders and colliders with order, and hence visit the same MEC twice by changing $(3,5,4)$ or $(2,5,3)$ to noncolliders. Even though the authors report that, empirically, 95\% of the proposed changes result in a valid MEC, they do not discuss how many classes are repeatedly visited.

In the next few subsections, we present how our algorithm moves between MECs, overcoming the above issue, together with some observations that improve overall efficiency compared to \citet{claassen2022greedy}. For each step, \citet{claassen2022greedy} consider every possible move including adding one adjacency, deleting one adjacency and altering orientation of one triple with order. We choose to mimic the procedure in \citet{hauser2012characterization} that firstly only adds adjacencies, then only deletes adjacencies, and finally alters the orientation of colliders. This will reduce the number of possible moves and is still consistent, provided that Meek's conjecture is true for MAGs.

\subsection{Adding adjacencies}

\subsubsection{Determine unshielded collider triples}

When we add an adjacency, we need to investigate what happens to the three objects we use to characterize equivalence. First, given which adjacency we are trying to add, the new skeleton is clear. For unshielded triples, if it remains unshielded after adding the adjacency, we keep its orientation status, i.e.\ collider or noncollider, as justified by the following lemma.

\begin{lemma}\label{lemma: keep unshielded triples}
Let $\G$ and $\mathcal{H}$ be two MAGs such that $\mathcal{I}(\G) \subseteq \mathcal{I}(\mathcal{H})$. If a triple $(i,j,k)$ is unshielded in both $\G$ and $\mathcal{H}$, then $(i,j,k)$ is an unshielded collider triple in $\G$ if and only if it is an unshielded collider triple in $\mathcal{H}$.
\end{lemma}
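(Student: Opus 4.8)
The plan is to prove the contrapositive-free direction directly by exhibiting, for each side of the ``if and only if'', a conditional independence in the smaller model $\mathcal{I}(\G)$ that forces the orientation in the larger model $\mathcal{H}$, exploiting the assumption $\mathcal{I}(\G) \subseteq \mathcal{I}(\mathcal{H})$ together with the characterization of unshielded colliders via separating sets. The key observation I would use is the standard fact that an unshielded triple $(i,j,k)$ is a collider if and only if $j$ lies outside \emph{every} m-separating set for the nonadjacent pair $i,k$, and is a noncollider if and only if $j$ lies inside \emph{every} such separating set. Since $i,k$ are nonadjacent in the common skeleton (the triple is unshielded in both graphs), maximality guarantees that some separating set exists in each graph.

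First I would suppose $(i,j,k)$ is an unshielded collider in $\G$. Then by the separation criterion, there is a set $C$ with $i \perp_m k \mid C$ in $\G$ and $j \notin C$; equivalently $X_i \indep X_k \mid X_C$ is in $\mathcal{I}(\G)$ with $j \notin C$. Because $\mathcal{I}(\G) \subseteq \mathcal{I}(\mathcal{H})$, this same independence holds in $\mathcal{H}$, so $i,k$ are m-separated in $\mathcal{H}$ by a set not containing $j$. In an unshielded triple, an m-separating set can exclude the middle vertex $j$ only when $j$ is a collider (otherwise $j$ is a noncollider on the path $i \suns j \suns k$, and excluding it from the conditioning set leaves that path m-connecting, contradicting separation). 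Hence $(i,j,k)$ must be a collider in $\mathcal{H}$ as well. The converse direction is symmetric in flavour but uses the complementary fact: if $(i,j,k)$ is a \emph{noncollider} in $\G$, then every separating set for $i,k$ in $\G$ must contain $j$, so there is an independence $X_i \indep X_k \mid X_C$ in $\mathcal{I}(\G) \subseteq \mathcal{I}(\mathcal{H})$ with $j \in C$; this independence holds in $\mathcal{H}$, and a collider at $j$ would be opened by including $j$ in the conditioning set, so $(i,j,k)$ cannot be a collider in $\mathcal{H}$. Combining the two directions gives the biconditional.

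The cleanest way to write this is to prove the single implication ``collider in $\G$ $\Rightarrow$ collider in $\mathcal{H}$'' and separately ``noncollider in $\G$ $\Rightarrow$ noncollider in $\mathcal{H}$'', noting that in an unshielded triple the only two possibilities are collider and noncollider, so these two implications together yield the equivalence. I would make careful use of Proposition \ref{prop:parametrizing set and independence} if I wanted to phrase everything in terms of parametrizing sets, but the argument is arguably most transparent stated directly via m-separation and the definition of an m-connecting path.

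The main obstacle I anticipate is justifying cleanly the elementary but crucial step that ``a separating set for the endpoints of an unshielded triple excludes the middle vertex if and only if that vertex is a collider.'' This requires a small case check on the edge marks at $j$ and an appeal to condition (iii) in the definition of an m-connecting path (a collider must be an ancestor of the conditioning set), since strictly speaking a collider $j$ could still be activated if $j \in \an_{\mathcal{H}}(C)$. I would need to argue that when $i \perp_m k \mid C$ holds with $j \notin C$, the path $i \suns j \suns k$ being blocked forces $j$ to be a collider whose descendants avoid $C$ rather than merely asserting $j$ is a collider outright; conversely I must ensure no subtle ancestral relationship reopens the path. Handling this collider-activation subtlety carefully is where the proof needs the most attention, though it remains routine given the separation criterion already stated in the preliminaries.
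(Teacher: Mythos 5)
Your proof is correct and matches the argument the paper itself relies on: the paper states Lemma \ref{lemma: keep unshielded triples} without a separate proof, but the identical reasoning (a collider at $j$ yields a separating set for $i,k$ excluding $j$, a noncollider yields one including $j$, and these transfer through $\mathcal{I}(\G)\subseteq\mathcal{I}(\mathcal{H})$ to force the orientation in $\mathcal{H}$) is exactly what appears in its proof of Lemma \ref{lemma: not I-maps for non-equivalent MAGs with same skeleton}. Your worry about collider activation via $\an(C)$ is harmless here, since you only use that a blocked length-two path with $j\notin C$ (resp.\ $j\in C$) forces a collider (resp.\ noncollider) at $j$, which needs no further case analysis.
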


If an unshielded triple becomes a full triple after adding the adjacency, then clearly we remove it from consideration; the difficulty here what happens when there are new unshielded triples. By simply going through each possible orientation of these triples and restricting maximal degree of each node to $d$, we would go through up to $2^{2d}$ combinations. 

Proposition \ref{prop: invariant edge marks} would help to reduce the complexity. Let $\mathcal{P}$ and $\mathcal{P}'$ be two PAGs such that $\mathcal{I}(\mathcal{P}) \supseteq \mathcal{I}(\mathcal{P}')$, and $\mathcal{P}'$ has one more edge $\{i,j\}$ than $\mathcal{P}$.  Let $i \suns j \suns k$ be an unshielded triple in $\mathcal{P}'$ and we discuss its possible orientation depending on the edge mark of $j \suns k$ in $\mathcal{P}$. If $j \uns k$ in  $\mathcal{P}$ then $(i,j,k)$ is an unshielded noncollider triple in $\mathcal{P}'$. Thus the unshielded triple $i \suns j \suns k$ can only be collider triple  if $j \cseg k$ or $j \getss k$ in $\mathcal{P}$; we show a trick to simplify the situation by imagining the edge mark of the new edge $i \suns j$ at $j$, in the PAG $\mathcal{P}'$ of the new MEC. 

If we have $i \sto j$ in $\mathcal{P}'$, then in the case of $j \getss k$ in $\mathcal{P}$, $(i,j,k)$ is definitely an unshielded collider triple in $\mathcal{P}'$. We use $UC^{d}_j$ to denote all such triples; in the case of $j \cseg k$ in $\mathcal{P}$, $(i,j,k)$ may be an unshielded collider or noncollider triple in $\mathcal{P}'$. We use $UC^{p}_j$ to denote all such triples and we need to go through each combination. If the edge mark is $i \sceg j$ or $i \sun j$ in $\mathcal{P}'$, then $(i,j,k)$ cannot be an unshielded collider triple in $\mathcal{P}'$. 

When we visit a new MEC after adding an edge $\{i,j\}$, we can split into two cases; one is as if we are adding $i \sto j$, then the triples in $UC^{d}_j$ are definitely collider triples and triples in $UC^{p}_j$ could be collider triples and we need to go through each combination of them; another is as if we are adding  $j \cseg i$ or $i \sun j$, then there is no new unshielded collider triples.

Given a PAG $\mathcal{P}$ and an adjacency ${i,j}$ to add, Algorithm \ref{algo: obtain definite/possible unshielded colliders} summarizes the above procedure and outputs $UC^{d}_i, UC^{d}_j, UC^{p}_i$, $UC^{p}_j$. Example \ref{exp: trick for proposing unshielded triples} demonstrates the usefulness of this trick. Algorithm \ref{algo: obtain definite/possible unshielded colliders} also returns an incomplete PAG such that only $\mathcal{R}0$ has been applied, by considering all triples that are both unshielded in $\mathcal{P}$ and $ \mathcal{P}'$, and are colliders triples in $\mathcal{P}$. The PAG of any MEC in the next iteration will be oriented by starting at this incomplete PAG.

\begin{example}\label{exp: trick for proposing unshielded triples}
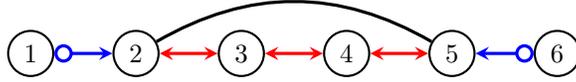
\begin{figure}
\centering
  \begin{tikzpicture}
  [rv/.style={circle, draw, thick, minimum size=6mm, inner sep=0.8mm}, node distance=14mm, >=stealth]
  \pgfsetarrows{latex-latex}
\begin{scope}
  \node[rv]  (1)            {$1$};
  \node[rv, right of=1] (2) {$2$};
  \node[rv, right of=2] (3) {$3$};
  \node[rv, right of=3] (4) {$4$};
  \node[rv, right of=4] (5) {$5$};
  \node[rv, right of=5] (6) {$6$};
  \draw[o->, very thick, blue] (1) -- (2);
  \draw[<->, very thick, red] (2) -- (3);
  \draw[<->, very thick, red] (3) -- (4);
  \draw[<->, very thick, red] (4) -- (5);
  \draw[<-o, very thick, blue] (5) -- (6);
  \draw[-, very thick, black] (2) to[bend left=30] (5);
  \end{scope}
\end{tikzpicture}
\caption{A PAG with a new adjacency}
\label{fig: add adjacency trick}
\end{figure}
Suppose we have a PAG $1 \circlearrow 2 \leftrightarrow 3 \leftrightarrow 4 \leftrightarrow 5 \lcirclearrow 6$ and we wish to add the adjacency $\{2,5\}$ as illustrated by  Figure \ref{fig: add adjacency trick}. There are four new unshielded triples and naively going through them would go through 16 combinations. But Algorithm \ref{algo: obtain definite/possible unshielded colliders} outputs $UC_2^d = \{(1,2,5),(2,3,5)\}, UC_5^d = \{(2,5,6),(2,4,5)\}$ and $UC_2^p = UC_5^p = \emptyset$. Hence we only need to go through four cases; adding one of the sets $UC_2^d \cup UC_5^d$, $UC_2^d$, $UC_5^d$, or $\emptyset$ as additional unshielded collider triples.
\end{example}

\begin{algorithm}\SetAlgoRefName{\FuncSty{UC-triples-add}}
\caption{}
\label{algo: obtain definite/possible unshielded colliders}
\SetAlgoLined
\SetKw{KwDef}{define}
\KwIn{A PAG $\mathcal{P}$, $\{i,j\}$}
\KwResult{ A incomplete PAG $\mathcal{P}'$,$UC^{d}_i, UC^{d}_j, UC^{p}_i$ and $UC^{p}_j$}
Initialize $\mathcal{P}'$ with only $\dcircleedge$ and the same skeleton as $\mathcal{P}$\;
Add $i \dcircleedge j$ to $\mathcal{P}'$\;
Let $UC^{d}_i = UC^{d}_j =  UC^{p}_i = UC^{p}_j = \emptyset$\;
Apply $\mathcal{R}0$ to $\mathcal{P}'$ by considering all triples that are both unshielded in $\mathcal{P}$ and $\mathcal{P}'$, and are colliders in $\mathcal{P}$\;
\For{$a \in \{i,j\}$ }{
  let $b \in \{i,j\} \setminus \{a\}$\;
  let $UC_a$ be the set of new unshielded triples centred on $a$\;
  \For{$(a,b,k) \in UC_a$}{
  \eIf{$a \uns k$ in $\mathcal{P}$}{
  \textbf{next}
  }{
  \eIf{$a \getss k$ in $\mathcal{P}$}{
  add $(a,b,k)$ to $UC_a^d$
  }{
  add $(a,b,k)$ to $UC_a^p$
  }
  }
  }
}
\Return{$\mathcal{P}'$, $UC^{d}_i$, $UC^{d}_j$, $UC^{p}_i$, $UC^{p}_j$}

\end{algorithm}

The method we described for proposing possible orientation of new unshielded triples is far from optimal. Future work can focus on efficient, sound and complete algorithms for orienting these new unshielded triples.

The main improvement we made is described in the following section.

\subsubsection{Creating branches for $\mathcal{R}4$}

The efficiency of our approach comes down to the fact that we only determine orientation of discriminating paths when $\mathcal{R}4$ is called, which is the remaining uncertain piece for the new MEC. \citet{claassen2022greedy} determine the new MEC by pre-setting the skeleton, unshielded colliders and colliders with orders, where the latter may contain redundant information as we have seen in Example \ref{exp: redundant triples}.

Our idea is straightforward: when $\mathcal{R}4$ is called, we create two branches. For one branch, we orient the triple in the discriminating path as noncollider and for the other one, the triple is oriented as collider. Then we keep orienting each incomplete PAG and whenever $\mathcal{R}4$ is called, we perform the same procedure until graphs are completely oriented. 

In some cases, it is unnecessary to create branches. Suppose we are constructing a new PAG $\mathcal{P}''$ from an edge addition to $\mathcal{P}$. Suppose $\mathcal{R}4$ is called for the discriminating path $\pi = \langle d,\ldots, a,b,c\rangle$, if $b \getss c$ or $b \cseg c$ in $\mathcal{P}$ then by Proposition \ref{prop: invariant edge marks}, we can just orient the discriminating path by the edge mark at $b$ in $\mathcal{P}$. Essentially, we only need to create branches for the discriminating path if $b \cseg c$ in $\mathcal{P}$. See Algorithm \ref{algo: create branch for R4} that summarizes the above procedure. Now we are ready to present the full algorithm for adding adjacencies.

\begin{algorithm}\SetAlgoRefName{\FuncSty{Branch-for-$\mathcal{R}4$-add}}
\caption{}
\label{algo: create branch for R4}
\SetAlgoLined
\SetKw{KwDef}{define}
\KwIn{A PAG $\mathcal{P}$ and an incomplete PAG $\mathcal{P}'$}
\KwResult{Either an arrow complete PAG $\mathcal{P}'$ or two incomplete PAGs ($\mathcal{P}'_c$, $\mathcal{P}'_n$)}

Exhaustively apply $\mathcal{R}1-\mathcal{R}4$ to $\mathcal{P}'$\;
\If{$\mathcal{R}4$ is called for an edge $b \circleedge \!* c$}{
  \eIf{$\{d,b,c\} \in \Sset(\mathcal{P})$ or $ b \getss c$ or $b \uns c$ in $\mathcal{P}$}{
  orient $b$ as a collider if $\{d,b,c\} \in \Sset(\mathcal{P})$ or $ b \getss c$ in $\mathcal{P}$\;  
  orient $b$ as a noncollider if  $b \uns c$ in $\mathcal{P}$\;
  keep orienting\;
  }{
  orient $b$ as collider and noncollider, and let the resulting two incomplete PAGs be $\mathcal{P}'_c$ and $\mathcal{P}'_n$, respectively\;
  \Return{($\mathcal{P}'_c$, $\mathcal{P}'_n$)}
  }
}

\Return{$\mathcal{P}'$}
\end{algorithm}

\subsubsection{Algorithm for adding adjacency}

\begin{algorithm}\SetAlgoRefName{\FuncSty{Add-adj}}
\caption{}
\label{algo: adding adjacency}
\SetAlgoLined
\SetKw{KwDef}{define}
\KwIn{A complete PAG $\mathcal{P}$ and an adjacency $\{i,j\}$ to add}
\KwResult{A set of arrow complete PAGs}
$\mathcal{P}'$,$UC^{d}_i, UC^{d}_j, UC^{p}_i$, $UC^{p}_j=$ \ref{algo: obtain definite/possible unshielded colliders}($\mathcal{P}$,$\{i,j\}$) \;
$S = \{\mathcal{P}'\}$\;
\For{$UC \subseteq UC^{p}_i$}{
Apply $\mathcal{R}0$ to $\mathcal{P}'$ with additional unshielded triples $UC \cup UC^{d}_i$\;
add the resulting incomplete PAG to $S$.
}
\For{$UC \subseteq UC^{p}_j$}{
Apply $\mathcal{R}0$ to $\mathcal{P}'$ with additional unshielded triples $UC \cup UC^{d}_j$\;
add the resulting incomplete PAG to $S$.
}
\For{$UC \subseteq UC^{p}_i \cup UC^{p}_j$}{
Apply $\mathcal{R}0$ to $\mathcal{P}'$ with additional unshielded triples $UC \cup UC^{d}_i \cup UC^{d}_j$\;
add the resulting incomplete PAG to $S$.
} \label{line: output all incomplete PAG}
$O = \emptyset$\;
\For{$\mathcal{P}' \in S$}{
$K = $ \ref{algo: create branch for R4} ($\mathcal{P},\mathcal{P}'$)\;
\While{$|K| > 0$}{
Let $\mathcal{P}'' \in K$; $K' = $  \ref{algo: create branch for R4} ($\mathcal{P},\mathcal{P}''$)\;
\eIf{$|K'| = 1$}{
add $K'$ to $O$;
delete $\mathcal{P}'$ from $K$
}{
add $K'$ to $K$
}
}
}
\Return{$O$}
\end{algorithm}

Algorithm \ref{algo: adding adjacency} combines previous algorithms with an additional section that runs dynamically. When Algorithm \ref{algo: adding adjacency} proceeds to Line \ref{line: output all incomplete PAG}, the set $S$ consists of incomplete PAGs that are determined by the same skeleton but different sets of unshielded collider triples. To visit a new MEC, we need to keep applying these orientation rules and decide orientation of discriminating paths when $\mathcal{R}4$ is called. 

In the Appendix, we also list algorithms for deleting adjacencies and interchanging colliders and noncolliders. These are similar to Algorithm \ref{algo: adding adjacency}, so we omit them here.

\subsection{Deleting adjacencies}

Similar to the algorithm for adding adjacencies, we need to think about what happens to the skeleton, unshielded collider triples, and orientation of discriminating paths when $\mathcal{R}4$ is called. Again the skeleton is clear, given which edge to delete. Then, by Lemma \ref{lemma: keep unshielded triples}, we would also like to keep unshielded collider triples that remain unshielded after deleting the edge. For those full triples that became unshielded collider triples after deleting an adjacency, if the previous PAG contains invariant edge marks that allows us to orient them, we can keep the orientation of these triples by Proposition \ref{prop: invariant edge marks}. Similarly, if $\mathcal{R}4$ is called for some discriminating paths and the previous PAG helps to orient them, then we do not need to create branches.

Let the $UC_{ij}^p$ denote the remaining uncertain unshielded triples. We need to enumerate MECs by exploring different orientations of triples in $UC_{ij}^p$ and creating branches for new discriminating paths.

See Algorithms \ref{algo: obtain possible unshielded colliders when deleting adjacency}, \ref{algo: create branch for R4 when deleting adjacency} and \ref{algo: deleting adjacency} in Appendix \ref{sec: missing algorithm} for details. They are similar to Algorithms \ref{algo: obtain definite/possible unshielded colliders}, \ref{algo: create branch for R4} and \ref{algo: adding adjacency}, so we do not them further here.

\subsection{Turning phase} \label{sec:turning_phase}

Unlike the previous two phases for adding and deleting adjacency, if Meek's conjecture holds then in principle there is no need to change the status of unshielded triples; adding and then removing edges is sufficient. Indeed, such a change between an unshielded collider or noncollider triple would result in a new MEC, which cannot still be an $\mathcal{I}$-map of the true distribution. The turning phase introduced by \citet{hauser2012characterization} that changes unshielded triples in DAG models is used to correct mistakes made earlier due to finite sample sizes. We mimic their procedure, and generalize it for MAG models. 

We briefly describe our approach here. Suppose we have a PAG from the previous two phases. For the turning phase, we would like to keep the skeleton the same. Then we choose a parameter $t$ for how many unshielded triples we allow to change orientation at once; typically $t=1$. Once the orientation of every unshielded triples is decided, we further orient the new PAG and whenever $\mathcal{R}4$ is called, we create two branches by orienting the triple in the discriminating path as collider and noncollider regardless of its edge mark in the previous MEC. This is implemented in Algorithms \ref{algo: Turning phase} and \ref{algo: create branch for R4 for turning phase} in Appendix \ref{sec: missing algorithm}.

There are various methods to jump to new MECs that are not $\mathcal{I}$-maps to the previous MEC. Working with DAGs, \citet{linusson2023greedy} give a geometric interpretation and generalize the turning phase in \citet{hauser2012characterization}. Their method can turn more than one unshielded triple at the same time, similar to our approach here. One piece of possible future work is to extend \citet{linusson2023greedy} to MAG models and design a more robust and efficient turning phase.

\section{Scoring Criteria}\label{sec:scoring}

The BIC \citep{schwarz1978estimating} is a consistent (defined below) scoring criterion. As we have mentioned, \cite{drton2009computing} and \citet{evans2010maximum,Evans2014} provide procedures for fitting ADMGs by maximum likelihood, and thus we can use them to compute the BIC of the multivariate Gaussian or discrete models.

Let $\ell$ be the log-likelihood and $q^{\theta}$ denote the family of distributions that are Markov to the fitted graph $\G$, with parameter $\theta$, and assume that $\ell$ achieves its maximum at $\hat{\theta}$. Then let $d=\abs{\mathcal{S}(\G)}$ be the dimension of the discrete model \citep{Evans2014} and $N$ and $N(x_V)$ be the number of samples and the number of samples such that $X_V=x_V$, respectively. Then the BIC for fitting $\G$ is
$$
-2 \hat{\ell}+ d \log N,
$$ 
where $\hat{\ell}=\sum_{x_V} N(x_V) \log q_{\hat{\theta}}(x_V)$.

However this score is unsuitable for a greedy learning algorithm for MAGs, as we need to re-fit the whole graph when we consider new models; further, the likelihood function often has multiple local maxima in finite samples. Thus in this section, we aim to develop a scoring criterion that is decomposable with respect to the parametrizing set $\mathcal{S}(\G)$ and we only need to score each set at most once. 

The scoring criteria we construct later essentially measures the discrepancy between the empirical distribution and the list of independences from some Markov property, by using mutual information as a continuous score, penalized to some extent for model complexity. To estimate mutual information, it is sufficient to compute the entropy given a set of variables. Therefore our scoring criteria is not restricted to discrete or Gaussian model. As long as one can estimate entropy, this scoring criteria would be consistent in the limit of infinite sample size.

\subsection{Entropy}

\begin{definition}\label{def:entropy}
For a real-valued variable $X$ with a probability density $f(x)$, its \emph{entropy} is defined as:  
$$
{\sf H}(X) = -\mathbb{E}\log f(X).
$$
\end{definition}
Note this includes discrete variables, by using a discrete dominating measure.

In this paper we run simulated experiments with multivariate Gaussian random variables. The plug-in estimator of Gaussian entropy uses the sample mean and variance, and is known to underestimate the true value \citep{basharin1959statistical}. If the mean of the Gaussian distribution is known to be zero, then \citet{ahmed1989entropy} gives an unbiased estimator with minimal variance, i.e.\ a UMVUE; this is extended to a UMVUE by \citet{misra2005estimation} for the general case. We ran our
algorithm for a variety of estimators mentioned later, and they all produced a very similar final result. 
Entropy estimation is a widely studied topic that is not focus of this paper, so we only briefly discuss
these estimators here.

The following inner product notation is defined for scoring purpose.

\begin{definition}\label{inner product between imset and functions}
Given a function $f$ which takes $X_A$ for any $A \subseteq V$ as input, and an imset $u$ over $V$, we define $$\langle u, f \rangle = \sum_{A \subseteq V} u(A) f(x_A).$$
\end{definition}

We propose a new scoring criterion: $-2N \langle u_{\G}, \hat{\sf H} \rangle+ d\log N$, where $\hat{\sf H}$ is the estimate of entropy defined below and $u_{\G}$ is an imset from some Markov property such that $\mathcal{I_{\G}} = u_{\G}$, for which we will use the \emph{refined ordered Markov property} (ROMP). 
We note that the ROMP is generally dependent upon the topological ordering chosen, but the score will still be asymptotically consistent, regardless of the ordering.

\subsection{Scoring MAGs}

Given a MAG $\G$, let $u_{\G}^r$ denote the imset from the refined Markov property in \citet{hu2023towards}. We propose to use the following score:
$$
S_{\G}^r = -2N\langle \delta_V- u_{\G}^r, \hat{ \sf H}\rangle+d\log{N},
$$
where $\hat{\sf H}$ is the vector of empirical estimates of entropy over every subset of $V$, $d$ is the dimension of the model and $N$ is the sample size. 
The idea of scoring MAGs with inner product between imsets and empirical entropy originated from \citet{andrews22}. They used imsets constructed from their new Markov property which, unlike the refined Markov property, does not have theoretical bound on the number of independences.

The BIC of DAGs and MAGs are known to be \emph{score-equivalent}, that is, Markov equivalent graphs have the same BIC. This unfortunately does not hold for $S_{\G}^r$ since Markov equivalent MAGs may have a different list of conditional independences under the refined Markov property; note that the models are still equivalent after application of the semi-graphoid axioms.
For learning algorithms searching in the space of MECs, score-equivalence is not a necessary property, provided that the scores are \emph{consistent}.

\begin{definition}\label{def: consistent score}
Let $P$ be the true distribution. A score $S(\G)$ is said to be \emph{consistent} if, in limit of the infinite sample size, the following holds:
\begin{itemize}
    \item[(i)] if $\mathcal{I}(\G) \subseteq \mathcal{I}(P)$ but $\mathcal{I}(\G') \not\subseteq \mathcal{I}(P)$, then $S(\G) < S(\G')$;
    \item[(ii)] if $\mathcal{I}(\G) \subseteq \mathcal{I}(P)$ and $\mathcal{I}(\G') \subseteq \mathcal{I}(P)$ but $\G$ has smaller dimension than $\G'$, then $S(\G) < S(\G')$.
\end{itemize}
\end{definition}

Our score is, indeed, consistent.

\begin{proposition}\label{prop: consistency for S_G^r}
The score $S_{\G}^r$ is a consistent score.
\end{proposition}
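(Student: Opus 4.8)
The plan is to turn the inner product into a sum of conditional mutual informations and then run the familiar two-regime ($\Theta(N)$ versus $\Theta(\log N)$) asymptotic comparison used for BIC.

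First I would record the algebraic identity that drives everything. Writing $u_{\G}^r = \sum_{\langle A,B\cmid C\rangle \in \mathrm{ROMP}(\G)} u_{\langle A,B| C\rangle}$ and combining Definition \ref{inner product between imset and functions} with
$$
\langle u_{\langle A,B| C\rangle}, \hat{\sf H}\rangle = \hat{\sf H}(X_{A\cup B\cup C}) - \hat{\sf H}(X_{A\cup C}) - \hat{\sf H}(X_{B\cup C}) + \hat{\sf H}(X_C) = -\hat I(X_A;X_B\cmid X_C),
$$
where $\hat I$ is the plug-in conditional mutual information built from $\hat{\sf H}$, yields
$$
\langle \delta_V - u_{\G}^r, \hat{\sf H}\rangle = \hat{\sf H}(X_V) + \sum_{\langle A,B\cmid C\rangle \in \mathrm{ROMP}(\G)} \hat I(X_A;X_B\cmid X_C).
$$
The leading term $\hat{\sf H}(X_V)$ does not depend on $\G$, so it cancels in every pairwise comparison of scores; what is left is the penalty $d\log N$ and $N$ times a sum of estimated conditional mutual informations. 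By the assumed consistency of the entropy estimator, $\hat{\sf H}\to{\sf H}$ and hence each $\hat I \to I_P$, the population quantity, which is nonnegative and vanishes exactly when the independence holds in $P$.

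Next I would translate ``I-map'' into a statement about these limits. The limiting sum equals $0$ iff every independence in $\mathrm{ROMP}(\G)$ holds in $P$; since the refined Markov property is equivalent to the global Markov property \citep{hu2023towards}, $P$ satisfies all of $\mathrm{ROMP}(\G)$ if and only if $\mathcal{I}(\G)\subseteq\mathcal{I}(P)$. Thus the sum tends to $0$ when $\G$ is an I-map and to a strictly positive constant otherwise. This also dispenses with the ordering-dependence flagged in the Remark: the particular independences depend on the topological order, but the dichotomy ``sum $\to 0$'' versus ``sum $\to c>0$'' is governed only by whether $\G$ is an I-map. For part (i) of Definition \ref{def: consistent score}, $\G$ is an I-map and $\G'$ is not, so $S_{\G}^r - S_{\G'}^r$ is $2N$ times the difference of the two mutual-information sums plus $(d_{\G}-d_{\G'})\log N$; the first sum tends to $0$ and the second to a positive constant, so the fit contribution is of order $N$ and dominates the $O(\log N)$ dimension term, which forces the strict inequality demanded by (i) regardless of $d_{\G}-d_{\G'}$.

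For part (ii) both graphs are I-maps, so both mutual-information sums converge to $0$; the $\Theta(N)$ contributions no longer separate the scores and the comparison reduces to $(d_{\G}-d_{\G'})\log N$, which has the correct sign because $\G$ has the smaller dimension. The delicate point, and the main obstacle, is that the residual fit term is $2N$ times a sum of estimated mutual informations that are only known to vanish in the limit, so I must show it is $o(\log N)$ rather than merely $o(N)$. The clean route is a rate hypothesis on the estimator: when the relevant independences genuinely hold, $2N\,\hat I(X_A;X_B\cmid X_C)=O_p(1)$, whence $2N$ times the finite sum is $O_p(1)=o_p(\log N)$ and is swamped by the dimension gap. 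For the Gaussian or discrete plug-in this is exactly Wilks' theorem (the rescaled statistic converges to a $\chi^2$), and for the other estimators used here it follows from their $\sqrt{N}$-consistency; I would isolate this as an explicit assumption on the entropy estimator and verify it for the estimators actually employed. Assembling the two cases establishes that $S_{\G}^r$ is consistent.
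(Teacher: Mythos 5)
Your proposal follows essentially the same route as the paper's proof: rewrite the inner product as a sum of estimated conditional mutual informations and run the two-regime comparison of the $\Theta(N)$ fit term against the $O(\log N)$ penalty across the two cases of Definition \ref{def: consistent score}. The only difference is that you make explicit the rate hypothesis ($2N\hat I(X_A;X_B\mid X_C)=O_p(1)$ under the null, e.g.\ via Wilks' theorem for the plug-in estimators) that the paper simply asserts when it claims $N\langle u_{\G}^r,\hat{\sf H}\rangle = O_p(1)$; this is a worthwhile clarification of a step the paper leaves implicit, but not a different argument.
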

\begin{proof}
Suppose $\mathcal{I}(\G) \subseteq \mathcal{I}(P)$ but $\mathcal{I}(\G') \not\subseteq \mathcal{I}(P)$, then there is at least one independence $I=\langle A \indep B \mid C\rangle$ from the refined Markov property for $\G'$ that is not satisfied by $P$. Then $\langle u_{I}, \hat{\sf H} \rangle$ will converge to the true mutual information $c > 0$ of $I$, hence $S_{\G'}^r$ grows as $\Omega_p(N)$. On the other hand, $N\langle u_{\G}^r, \hat{\sf H}\rangle$ grows at $O_p(1)$ because all the independences are satisfied, so $S_{\G}^r$ grows at $O_p(\log{N})$.

Suppose $\mathcal{I}(\G) \subseteq \mathcal{I}(P)$ and $\mathcal{I}(\G') \subseteq \mathcal{I}(P)'$. Then both $S_{\G}^r$ and $S_{\G'}^r$ grow at $O_p(\log{N})$ but since $\G$ has smaller dimension than $\G'$, we have that $S(\G) < S(\G')$ almost surely for sufficiently large $N$.
\end{proof}

In principal, any Markov property can be used to construct an imset for scoring. \citet{hu2023towards} showed that if the maximal head size is $k$, then the imset using the ROMP can be constructed in $O(kn^k(n+e))$ time, while there is no polynomial bound on computing the global Markov property or the ordered local Markov property \citep{richardlocalmarkov}.  In addition, it provides the most minimal description of MAG models currently available.

If one assumes additional graphoid axioms hold, then the \emph{pairwise} Markov property is shown to be equivalent to the global Markov property and hence can be used for scoring \citep{sadeghi2014markov}. It can also be constructed in polynomial time. However, as we will see empirically, since the pairwise Markov property requires conditioning on ancestors of non-adjacent pair of nodes, its performance is worse than the refined Markov property if the ancestral relations are complicated. This is because it requires the estimation of the entropy of large collections of variables.

Now that all the theory has been introduced, we are able to describe the full algorithm to score a MEC represented by an arrow complete PAG.

Suppose we have a $n \times p$ data matrix $\mathcal{D}$, where $\mathcal{D}_{ij}$ is the $i$th observation of $j$th variable. Algorithm \ref{algo: score PAGs} computes the score of $\mathcal{P}$ by using the imset from the refined Markov property. We let $[n]$ denote the set $\{1, \dots, n\}$.

\begin{algorithm}\SetAlgoRefName{\FuncSty{SCORE}}
\caption{}
\label{algo: score PAGs}
\SetAlgoLined
\SetKw{KwDef}{define}
\KwIn{An arrow complete PAG $\mathcal{P}$, an $N \times n$ data matrix $\mathcal{D}$ and a topological ordering of $\mathcal{P}$}
\KwResult{A score from the refined Markov property}
Let $\G = \ref{algo: select representative MAG}(\mathcal{P})$\;
\If{$\G$ is not a MAG}{
\Return{$\infty$}
}
Compute $u_{\G}^r$ based on the given ordering and dimension $d$ of the model (depending on chosen parametric form)\;

\Return{$-2N\langle \delta_{[n]}-u_{\G}^r,\hat{\sf H} \rangle+d \log{N}$}

\end{algorithm}

\section{Greedy learning algorithm}\label{sec: greedy algorithm}

We describe our MAG learning algorithm here. At each step, the Algorithm \ref{algo: GESMAG} essentially explores every possible edge to add, then every edge to delete, and finally the turning phase, scoring all PAGs returned by \ref{algo: adding adjacency}, \ref{algo: deleting adjacency} and \ref{algo: Turning phase}. If there is a reduction in the score then we update both it and the new locally optimal PAG. We only list the addition phase here, as the deletion and turning phases are similar. 

\begin{algorithm}\SetAlgoRefName{\FuncSty{GESMAG}}
\caption{Adding phase}
\label{algo: GESMAG}
\SetAlgoLined
\SetKw{KwDef}{define}
\KwIn{A $n \times p$ data matrix $\mathcal{D}$}
\KwResult{A PAG $\mathcal{P}$}
Initialize $\mathcal{P}$  as an empty graph with $n$ nodes\;
$\Score = \ref{algo: score PAGs}(\mathcal{P},\mathcal{D})$ and $\Move= \{\{i,j\} \mid 1 \leq i \neq j \leq n\}$ \;
$\Update = {\tt TRUE}$\;
\While{$\Update$}{\label{algo: GESMAG while}
$\Update = {\tt FALSE}$\;
$\mathcal{P}_{prev} =\mathcal{P}$\;
 \For{$\{i,j\} \in \Move$}{\label{algo: GESMAG 1st for}
 $\mathcal{O} = \ref{algo: adding adjacency}(\mathcal{P}_{prev},\{i,j\})$\;
   \For{$\mathcal{P}' \in \mathcal{O}$}{\label{algo: GESMAG 2nd for}
   $\Score_{new} = \ref{algo: score PAGs}(\mathcal{P}', \mathcal{D})$\label{algo: GESMAG score line}\;
   \If{$\Score_{new} < \Score$ }{
   $\Score = \Score_{new}$ and $\mathcal{P} = \mathcal{P}'$\;
   $\Update = {\tt TRUE}$\;
   }
   }
 } 
 orient tail of $\mathcal{P}$\;
}

\Return{$\mathcal{P}$}

\end{algorithm}

\subsection{$\mathcal{I}$-maps given maximal head size}

In \ref{algo: GESMAG}, we also implement a choice for searching with restricted maximal head size. It has a few practical advantages compared to no such restriction as we will show in our Experiments (Section \ref{sec: experiments}). But first, let us justify such a restriction.

\begin{proposition}\label{prop: Imaps maximal head size}
Given a MAG $\G$ with maximal head size $k \geq 2$, then there exists a MAG $\G'$ with maximal head size $ 1 \leq k' < k$ such that $\mathcal{I}(\G') \subseteq \mathcal{I}(\G)$.    
\end{proposition}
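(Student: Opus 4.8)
The plan is to prove the statement by exhibiting an explicit submodel all of whose heads are singletons, namely a DAG, which automatically has maximal head size $1$. The key observation is structural: in a DAG there are no bidirected edges, so in any induced subgraph $\G'_{\an(H)}$ every district is a singleton; hence condition (ii) of Definition \ref{def: heads and tails} forces every head $H$ of a DAG to be a singleton, and the maximal head size of any DAG equals $1$. It therefore suffices to produce a DAG $\G'$ on $V$ with $\mathcal{I}(\G') \subseteq \mathcal{I}(\G)$, since then $k' = 1 < k$ as required.

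First I would fix any linear order $\prec$ on $V$. The cheapest construction is to let $\G'$ be the \emph{complete} DAG on $\prec$: this is trivially a MAG (a complete acyclic directed graph is ancestral because it has no siblings, and maximal because it has no nonadjacent pairs), and since it encodes no conditional independences we have $\mathcal{I}(\G') = \emptyset \subseteq \mathcal{I}(\G)$. A more informative choice is the minimal I-map DAG relative to $\prec$: for each $v$ let $\pa_{\G'}(v)$ be a smallest $S \subseteq \pre(v)$ with $v \perp_m (\pre(v)\setminus S)\mid S$ in $\G$. The ordered local Markov property for DAGs, together with the (semi-)graphoid axioms satisfied by m-separation, then yields that $\G'$ is a DAG that is an I-map of $\G$, i.e. $\mathcal{I}(\G')\subseteq \mathcal{I}(\G)$. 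Either construction closes the argument, and I expect no real obstacle here: the only facts to verify are the I-map property (standard, and vacuous for the complete DAG) and the singleton-head claim above.

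If instead one wants the sharper statement that the head size drops by exactly one ($k'=k-1$), the natural route is local surgery at a maximal head $H$: destroy the barren structure of $H$ by forcing one vertex of $H$ to become an ancestor of another, and then maximalize (maximalization only adds edges between pairs joined by inducing paths, so it preserves the m-separation model and keeps $\mathcal{I}\subseteq\mathcal{I}(\G)$). Here the hard part is genuine: a single edge change restructures districts and can simultaneously remove $H$ and create new heads elsewhere, so controlling the \emph{global} maximal head size is delicate, and one must also watch that introducing $a \in \an(b)$ does not cascade into an ancestrality violation. Worse, when $H$ induces a bidirected clique there is no nonadjacent pair inside $H$ to add an edge to (as in the saturated triangle $1 \leftrightarrow 2 \leftrightarrow 3 \leftrightarrow 1$), so pure edge addition is insufficient and one is pushed into edge-mark changes whose effect on $\mathcal{I}$ is not monotone. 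For this reason I would present the DAG construction as the clean route to existence and treat the one-step reduction only as a refinement.
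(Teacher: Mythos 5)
Your argument is correct for the statement as literally written, but it takes a genuinely different route from the paper's, and the difference matters for how the proposition is used. You certify existence by jumping straight to a DAG (the complete DAG on an arbitrary order, or a minimal I-map DAG), all of whose heads are singletons because districts in a DAG are singletons; since $k \geq 2$ this gives $k' = 1 < k$ and $\mathcal{I}(\G') \subseteq \mathcal{I}(\G)$, so the literal claim follows essentially for free. The paper instead performs local surgery: it picks a size-$k$ head $H$ and $x, y \in H$, adds the directed edge $x \to y$ (acyclicity is safe because $\barren_{\G}(H) = H$), and projects the resulting ADMG back to a MAG; since the projection preserves heads and tails (Proposition 3.6 of \citet{hu2020faster}), it remains only to check that $\barren_{\G'}(H)$ is a head of size $k-1$ and that no new head of size $\geq k$ is created, which is done via the set $H'' = \barren_{\G}(\an_{\G'}(H'))$. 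What the paper's version buys is exactly what your construction discards: $\G'$ differs from $\G$ by a single edge, so iterating yields a chain of single-edge-addition I-maps from $\G$ down to any prescribed head bound, each with skeleton containing that of $\G$; this is what underwrites the restricted-head-size greedy search (reachability by forward additions under Meek's conjecture) and the claimed generalization of \citet{ogarrio2016hybrid}, whereas the complete DAG is the degenerate endpoint of that chain and carries no structural information about $\G$. Finally, the obstacle you raise against the one-step reduction --- that a bidirected clique head leaves no nonadjacent pair to join --- is not an obstacle in the paper's approach: the edge $x \to y$ is added even when $x \leftrightarrow y$ is already present, yielding an ADMG rather than a MAG, and the head-and-tail-preserving projection absorbs the resulting bow; monotonicity of $\mathcal{I}$ holds because adding edges and ancestral relations can only destroy m-separations. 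Your other worry, that the surgery might create new large heads elsewhere, is genuine and is precisely the part of the paper's proof that requires an argument.
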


\begin{proof}
It is sufficient to prove that there exists such a MAG $\G'$ with maximal head size $k-1$. Consider any head $H$ in $\G$ with size $k$. Let $x,y \in H$ and $x \neq y$. We add $x \rightarrow y$ to $\G$ and let the resulting ADMG be $\G'$. Then by Proposition 3.6 in \citet{hu2020faster}, which says that the ADMG to MAG projection preserves heads and tails, it is sufficient to prove that there is no new head in $\G'$ with size greater or equal to $k$ and there is a head in $\G'$ with size $k-1$.

Let $H' := \barren_{\G'}(H)$, since $H$ is bidirected-connected in $\G_{\an(H)}$, clearly $H'$ is also bidirected-connected in $\G'_{\an(H')}$. Hence by definition, $H'$ is a head and its size is clearly $k-1$. 

Now suppose there is a new head $H'$ in $\G'$ which has size at least $k$ and is not a head in $\G$. Then the reason for this must be that the vertices in $H'$ do not lie in the same district in $\G$. Consider $H'' := \barren_{\G}(\an_{\G'}(H'))$. By construction, $H''$ is a head in $\G$ and $H' \subseteq H''$. If $H' = H''$ then $H'$ would be a head in $\G$. Hence $H' \subset H''$ and $H''$ has size larger than $k$. This contradicts our assumption.
\end{proof}

Our proof is constructive but not for constructing a minimal $\mathcal{I}$-map.

Proposition 
\ref{prop: Imaps maximal head size} essentially generalizes the result in  \citet{ogarrio2016hybrid}, which shows that the skeleton of output of GES will consistently contain the skeleton of underlying true MAG in the limit of infinite sample size. Assuming Meek's conjecture, we can restrict to only add adjacencies which are also contained in the skeleton of output of GES. This greatly reduces the computational complexity of our algorithm.

We should also point out that although Proposition 
\ref{prop: Imaps maximal head size} ensures that with restricted head size, in infinite sample size, the independence model of the \ref{algo: GESMAG} output is contained in the true model, its PAG may contain additional invariant edge marks that have no causal meaning.

\subsection{Bounding the complexity}

We show here that under some sparsity assumptions on the graph structure, the complexity of \ref{algo: GESMAG} can be bounded in polynomial time in terms of the number of variables, maximal degree, maximal head size, and number of discriminating paths. This is similar to the result in \citet{claassen2013learning}, which shows that the constrained-based approach FCI+ is of polynomial time by considering sparse graphs. We prove a similar result for our score-based approach, though since our method is by its nature more complicated and time consuming than constrained-based algorithms, we require further assumptions.

\begin{proposition}\label{thm: GESMAG is polynomial}
The complexities of the adding and deleting phase of \ref{algo: GESMAG} are polynomial, if the following are restricted: maximal degree, maximal head size and maximal number of discriminating path.  
\end{proposition}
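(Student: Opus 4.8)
The plan is to trace Algorithm \ref{algo: GESMAG} and bound the cost of each of its nested components as a function of $n$, the maximal degree $d$, the maximal head size $k$, and the bound $D$ on the number of discriminating paths; the product of these bounds is the total running time, and I argue it is polynomial in $n$ whenever $d$, $k$, and $D$ are held fixed. I would present the adding phase in full, the deleting phase being symmetric since its subroutines (Algorithms \ref{algo: obtain possible unshielded colliders when deleting adjacency}--\ref{algo: deleting adjacency}) mirror Algorithms \ref{algo: obtain definite/possible unshielded colliders}--\ref{algo: adding adjacency} adjacency for adjacency.

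First I would bound the outer control flow. Because the adding phase only ever inserts adjacencies, every pass of the \texttt{while} loop (Line \ref{algo: GESMAG while}) that sets $\Update = \mathtt{TRUE}$ replaces $\mathcal{P}$ by a PAG with strictly more edges; hence the loop runs at most $\binom{n}{2}+1 = O(n^2)$ times. Within a pass, the \texttt{for} loop of Line \ref{algo: GESMAG 1st for} ranges over at most $\binom{n}{2}=O(n^2)$ candidate edges. The crux is then to bound (i) the number $|\mathcal{O}|$ of PAGs that \ref{algo: adding adjacency} returns for one candidate edge, and (ii) the cost of scoring each one via \ref{algo: score PAGs}.

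For (i), the number of new unshielded triples centred at $i$ or $j$ is at most the degree, so $|UC^{p}_i|, |UC^{p}_j| \le d$, and the three \texttt{for} loops of \ref{algo: adding adjacency} enumerate $1 + 2^{d}+2^{d}+2^{2d} = O(2^{2d})$ incomplete PAGs in $S$. Each is completed by repeated calls to \ref{algo: create branch for R4}, which branches only when $\mathcal{R}4$ fires on an edge whose orientation is not already forced by Proposition \ref{prop: invariant edge marks}; the number of such branchings along a completion is at most the number of discriminating paths, bounded by $D$, so each element of $S$ spawns at most $2^{D}$ arrow complete PAGs. Thus $|\mathcal{O}| = O(2^{2d}2^{D})$, constant in $n$, and since one exhaustive application of $\mathcal{R}0$--$\mathcal{R}4$ to a graph on $n$ vertices is polynomial (Zhang's orientation procedure is sound, complete, and runs in $\mathrm{poly}(n)$), producing $\mathcal{O}$ costs $\mathrm{poly}(n)$. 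For (ii), scoring a PAG first builds a representative MAG (\ref{algo: select representative MAG}) and checks the MAG conditions, both polynomial, and then evaluates $-2N\langle \delta_{[n]} - u_{\G}^r, \hat{\sf H}\rangle + d\log N$. Here I would invoke the result of \citet{hu2023towards} that, with maximal head size $k$, the imset $u_{\G}^r$ is computable in $O(k n^{k}(n+e))$ time; bounded degree gives $e = O(n)$, so this is polynomial and in particular $u_{\G}^r$ has polynomially many nonzero entries. Each such entry needs one entropy estimate $\hat{\sf H}(X_A)$, which for the chosen parametric family (e.g.\ a Gaussian covariance log-determinant over $|A|\le n$ variables) is polynomial. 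Multiplying the five factors---$O(n^2)$ passes, $O(n^2)$ edges, $O(2^{2d}2^{D})$ PAGs, $\mathrm{poly}(n)$ to orient, $\mathrm{poly}(n)$ to score---yields a polynomial bound, and the same accounting covers the deleting phase.

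The step I expect to be the main obstacle is the bound on $|\mathcal{O}|$, specifically the claim that the number of $\mathcal{R}4$ branchings along a single completion is genuinely controlled by $D$: one must verify that the dynamic branching loop of \ref{algo: adding adjacency} does not multiply or revisit discriminating paths beyond the stated count, and that Proposition \ref{prop: invariant edge marks} really does suppress branching whenever the relevant edge mark was already an invariant arrowhead or tail in $\mathcal{P}$. Pinning this down is precisely what forces all three restrictions to appear together in the hypothesis---degree to cap the subsets of possible unshielded colliders, head size to keep the score computable, and the discriminating-path count to cap the $\mathcal{R}4$ branching.
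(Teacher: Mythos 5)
Your proof follows essentially the same route as the paper's: bound the two outer loops by $O(n^2)$ each, bound the number of candidate PAGs produced per edge using the degree restriction (for the subsets of possible unshielded colliders) and the discriminating-path restriction (for the $\mathcal{R}4$ branching), and invoke Proposition 4.5 of \citet{hu2023towards} for polynomial-time construction of the imset under bounded head size. Your write-up is considerably more explicit than the paper's one-sentence-per-bound proof --- in particular the $O(2^{2d}2^{D})$ accounting for $|\mathcal{O}|$ is left implicit there --- but the decomposition and the key ingredients are identical.
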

\begin{proof}
It is sufficient to prove that the complexity of the adding phase is bounded. In Algorithm \ref{algo: GESMAG}. The first and second loop at Line \ref{algo: GESMAG while} and \ref{algo: GESMAG 1st for} repeats at most $n(n-1)/2$ times. Because the maximal degree and maximal number of discriminating paths are bounded, the third loop at Line \ref{algo: GESMAG 2nd for} is bounded. Now if we fix the maximal head size, Proposition 4.5 of \citet{hu2023towards} showed that the imsets from the refined Markov property can be constructed in polynomial time. Hence scoring at Line \ref{algo: GESMAG score line} is also polynomial.   
\end{proof}

\section{Experiments}\label{sec: experiments}

We conduct experiments on simulated data. First, we simulate linear Gaussian MAGs with random edge coefficients by methods described in Section \ref{sec: simulate MAGs}. For each MAG we simulate 5000 data points. Then we run \ref{algo: GESMAG} and make comparison to GPS \citep{claassen2022greedy} (base and hybrid version), GFCI \citep{ogarrio2016hybrid} and 
classical FCI \citep{spirtes2000causation}. GPS is the only existing purely scored-based algorithm that searches in the space of MECs. GFCI is a hybrid learning algorithm that first performs GES and then runs FCI on the skeleton of GES output. FCI is a purely constraint-based algorithm. Compared to other score-based algorithm that explore in the space of MAGs, \citeauthor{claassen2022greedy}'s GPS shows superior performance, so we do not include other approaches in the experiment section.

\subsection{Simulate MAGs}\label{sec: simulate MAGs}

For each $n \in \{5,10,15,20\}$ and $p_{d} \in \{0.8,0.6,0.4\}$, we randomly generate 100 ADMGs with $n$ nodes and such that the average degree is $3$. For each edge, the probability of it being directed is $p_{d}$, and otherwise it is bidirected. Then we project each ADMG to a Markov equivalent MAG \citep{richardson2002} and we simulate a linear Gaussian MAG graphical model such that the coefficients of directed and bidirected edges are drawn uniformly from $\pm [0.1,1]$. 

Most previous score-based algorithms simulated graphs with small districts size (two or three) \citep{chen2021integer} or low probability of bidirected edges \citep{claassen2022greedy} and hence small maximal head size. Part of the reason for this is that BIC does not perform well when districts are large. We will show empirically that the imset score performs better than BIC, and not only when head size is small.

\subsubsection{Different maximal head size}

Before we proceed, we will empirically study how $p_d$ affects the maximal head size, and this will be helpful for our later analysis of the performance of different algorithms.
By \emph{maximal head size}, we mean the number of vertices in the largest head in the corresponding MAG. For each simulated MAG, we compute this quantity, then we use histograms in Figure \ref{fig: hist maximal head size} to illustrate different frequencies of maximal head size under different probabilities of directed edges and each $n \in \{10,15,20\}$ (for $n=5$, there is not much difference, and this plot is given in Appendix \ref{Appendix: extra plots}). This is important as they can partly explain the variation of performance of algorithms.

\begin{figure}
    \centering
    \includegraphics[scale=0.3]{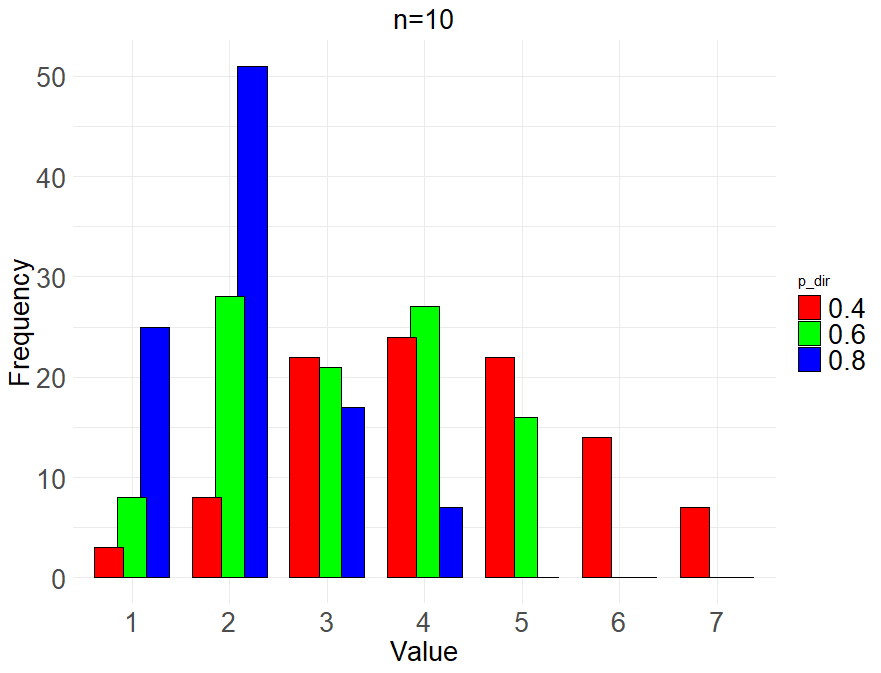}

    \medskip
    \includegraphics[scale=0.3]{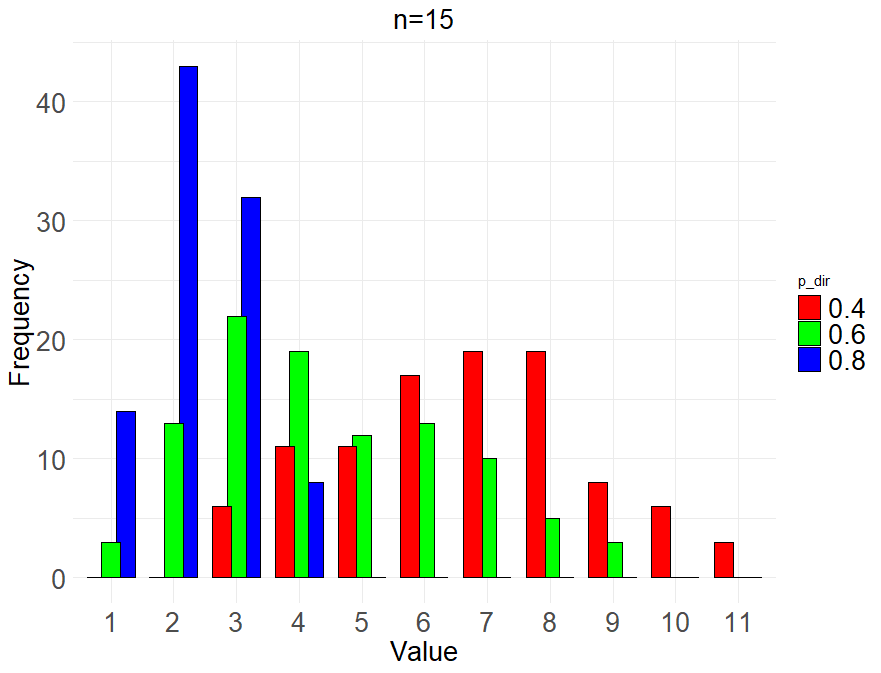}

    \medskip
    \includegraphics[scale=0.3]{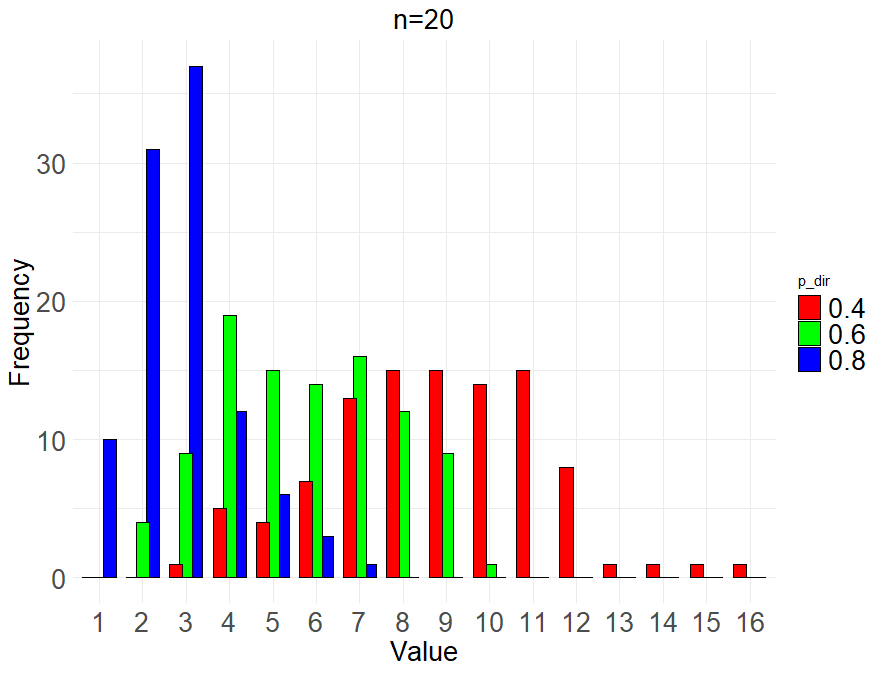}
    \caption{Histograms of maximal head size for $n = 10,15,20$}
    \label{fig: hist maximal head size}
\end{figure}

Unsurprisingly, as $p_d$ decreases, we become more likely to see larger heads. In particular, the largest maximal head appears when $n=20$ and $p_d=0.4$, and is almost double the largest maximal head size when $n=20$ and $p_d=0.6$. 

As there are many variation of our algorithms, we first compare the performance of \ref{algo: GESMAG} under different turning stages; later we compare \ref{algo: GESMAG} to other algorithms with the number of simultaneous turns to consider set as $t=1$ (see Section \ref{sec:turning_phase}), which shows the best performance.

\subsection{Metrics for performance}

A common approach to evaluate the performance of structure learning algorithms is the \emph{accuracy} of edge marks by comparing the edge marks on the output PAG with the ground truth PAG \citep{claassen2022greedy, rantanen2021maximal}. Note that the divisor here is twice the number of edges present in \emph{either} graph. In addition to this, we also include $TP$ (\emph{true positive rate}) and $FP$ (\emph{false positive rate}) for each kind of edge in Appendix \ref{Appendix: extra plots}.


Another metric we use is the logarithm (for scale purpose) of difference between BIC of true model and BIC of estimated model. The lower it is, the closer the estimated model is to the true model.

\subsection{Performance of algorithms}
Notice that the baseline version of GPS considers new triples with order to be noncolliders by default, whereas we explore both options and hence our algorithm should be compared to hybrid or extended versions of GPS. The extended version of GPS showed similar performance compared to its hybrid version in terms of accuracy and average BIC, but its computation time is longer; hence we omit it in the plot.

\ROMP($i$,$j$) 
stands for scoring by the refined (ordered) Markov property, searching with restricted maximal head size $i$ and turning-phase parameter $t=j$, and anc($j$) stands for scoring by the pairwise Markov property \citep{sadeghi2014markov}.  Also, \ROMP($j$) stands for scoring by refined (ordinary) Markov property, searching with unrestricted head size and $t=j$.

\subsubsection{Comparison of \ref{algo: GESMAG} with different hyper parameters}

In Figure \ref{fig: diff_romp_accu}, we plot accuracy of our algorithms with different restricted head size against number of variables. There are three plots corresponding to each $p_{d} \in \{0.8, 0.6,0.4\}$.

\begin{figure}
    \centering
    \includegraphics[scale=0.38]{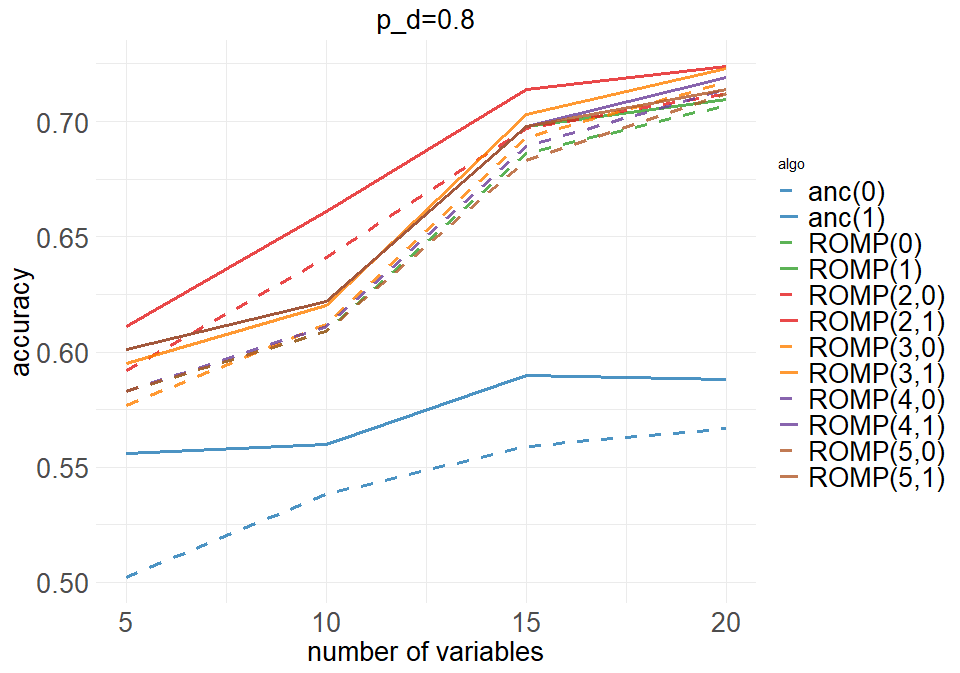}
    
    \medskip
    \includegraphics[scale=0.38]{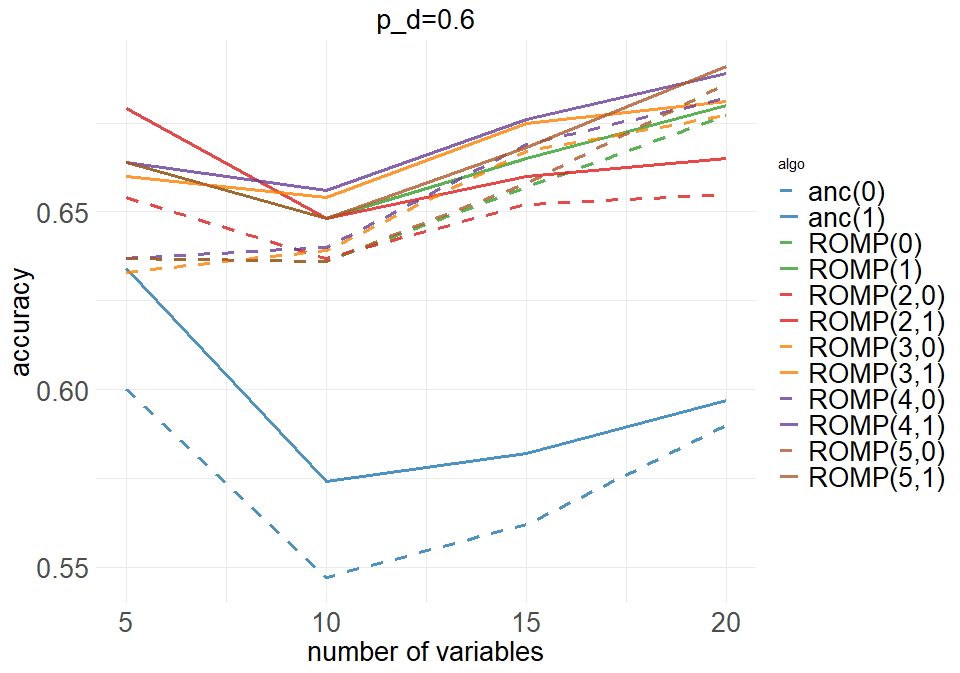}
    
    \medskip
    \includegraphics[scale=0.38]{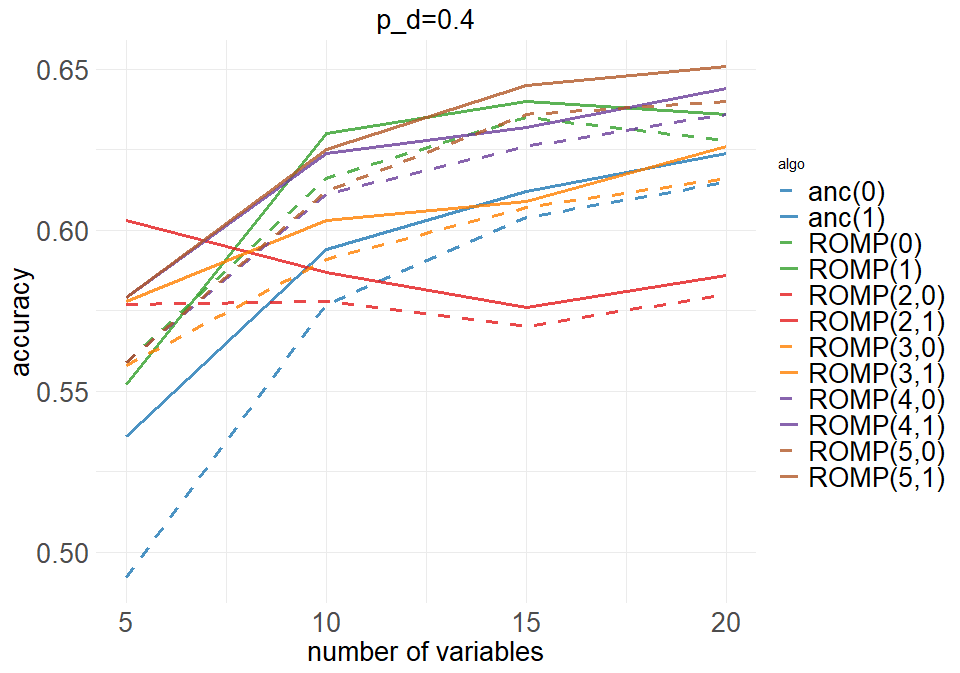}
    \caption{Accuracy of algorithms that score by using imsets}
    \label{fig: diff_romp_accu}
\end{figure}

Despite a moderately noisy plot, there is a tendency for increasing accuracy as number of variables grows. This is because we fixed the average degree to three, therefore, as graphs grow, they become sparser and hence it becomes easier to determine their edge mark orientations.  
The performance of 
\ROMP($i$,0) 
for any $i$ is always worse than the corresponding 
\ROMP($i$,1) 
at the cost of more computational time; we will analyse this later. Moreover, one can observe that for different $p_d$, the best performance of 
\ROMP($i$,$j$) 
is of different restricted head size $i$. This can be explained by the following: suppose the maximal head size of underlying true MAG is $i$, and if we search by not restricting head size or restricting to larger head size, then we explore more MECs and empirically this means that, at each step, it is more likely to move into local optimum or make a false decision. Hence we suggest that if one has prior knowledge about size of head or district, restricting the search space can lead to more robust results.

Similarly, one can observe the above phenomenon for logarithm of difference between true BIC and BIC of estimated PAG; see Figure \ref{fig: diff_romp_log_diff_BIC}. 
We also plot the logarithm of computation time for each variation of \ref{algo: GESMAG} in Figure \ref{fig: log_time_ROMPs}. 

\begin{figure}
    \centering
    \includegraphics[scale=0.38]{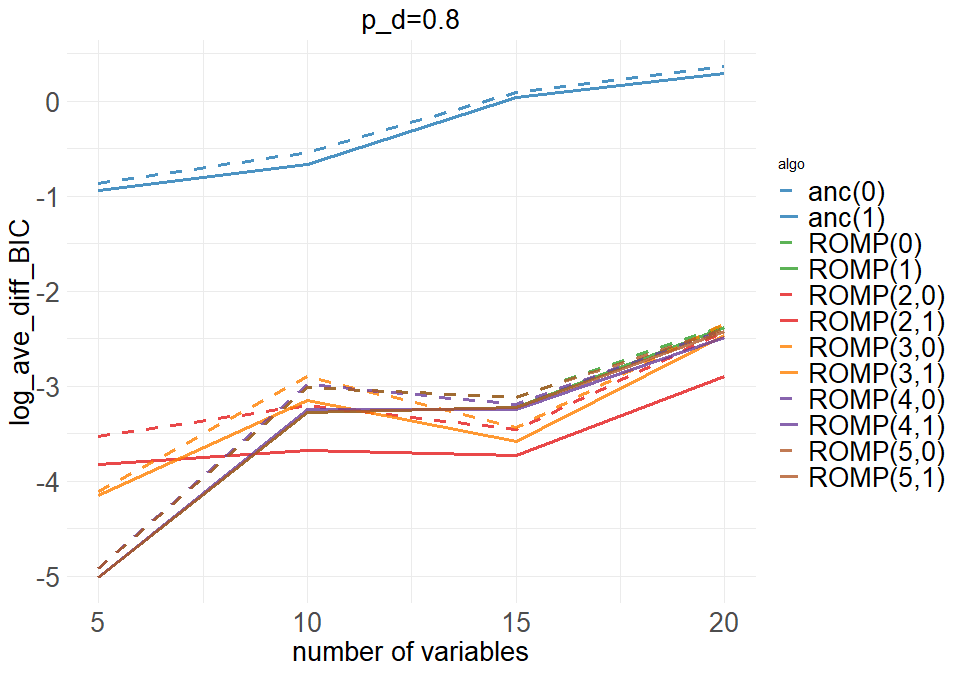}    
    
    \medskip
    \includegraphics[scale=0.38]{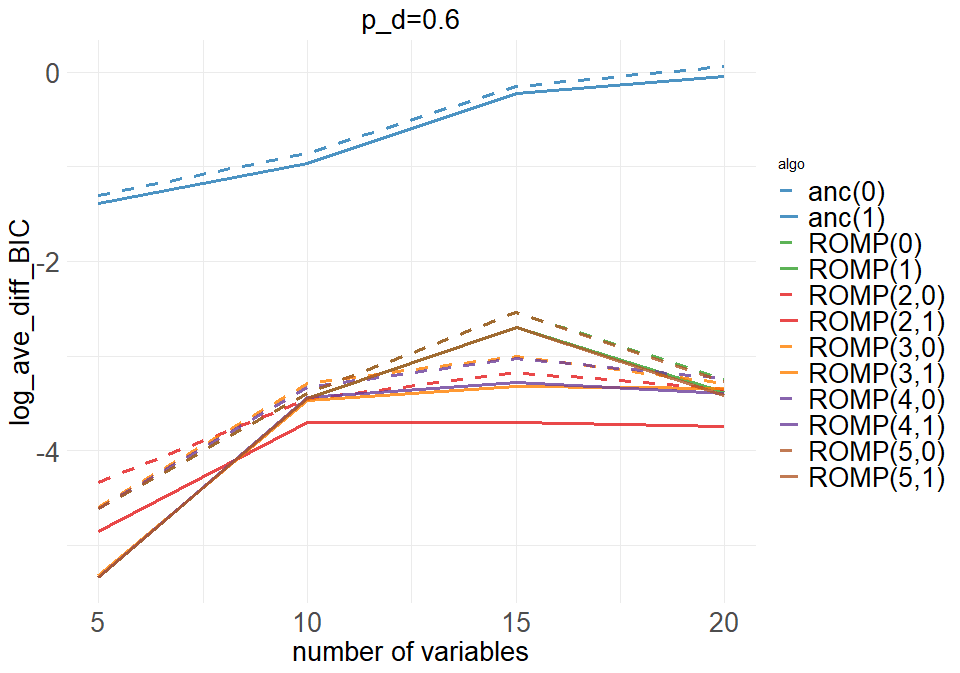}

    \medskip
    \includegraphics[scale=0.38]{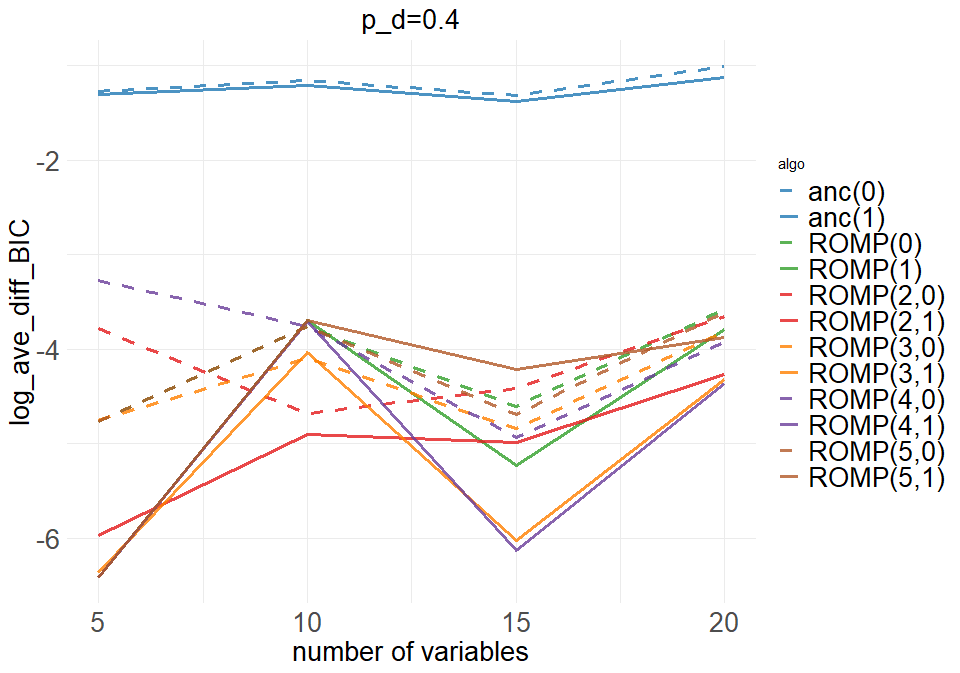}
    \caption{Log of average difference in BIC of algorithms that score by using imsets}
    \label{fig: diff_romp_log_diff_BIC}
\end{figure}

\begin{figure}
    \centering
    \includegraphics[scale=0.38]{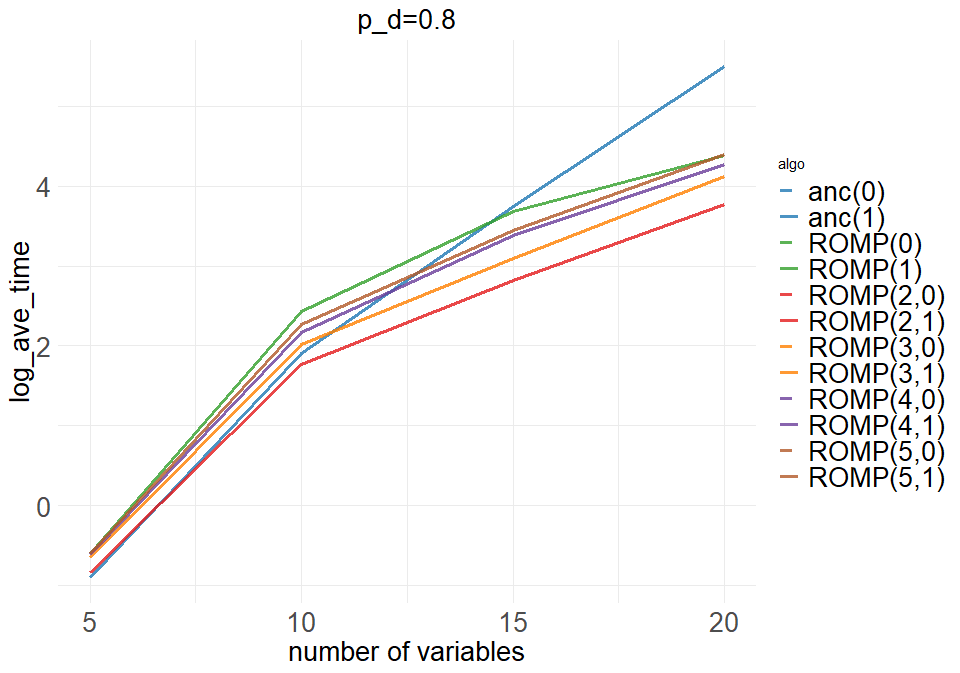}
         
    \medskip
    \includegraphics[scale=0.38]{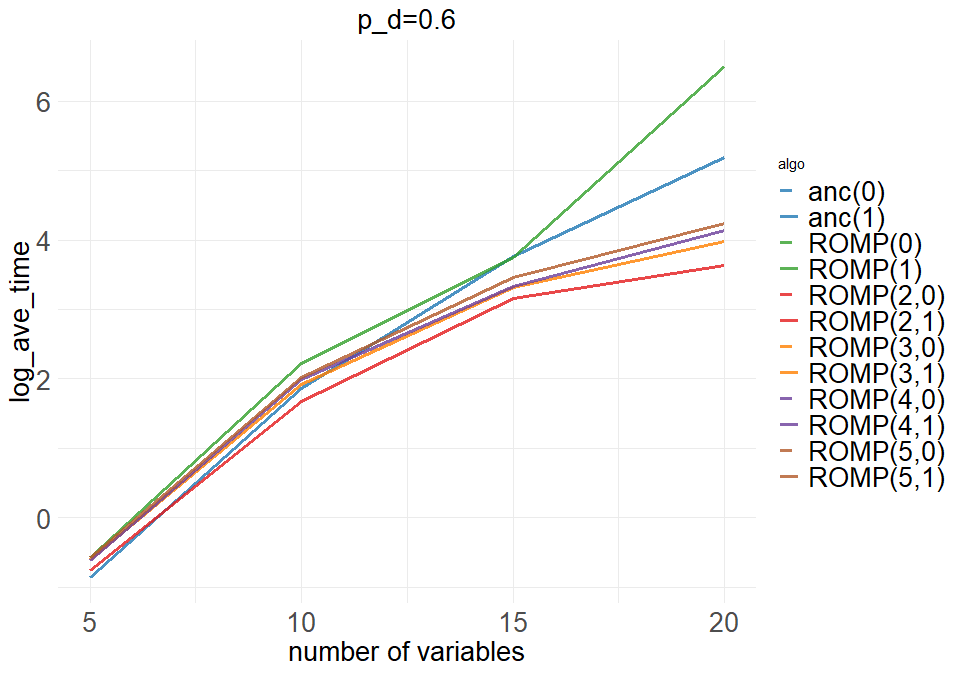}

    \medskip
    \includegraphics[scale=0.38]{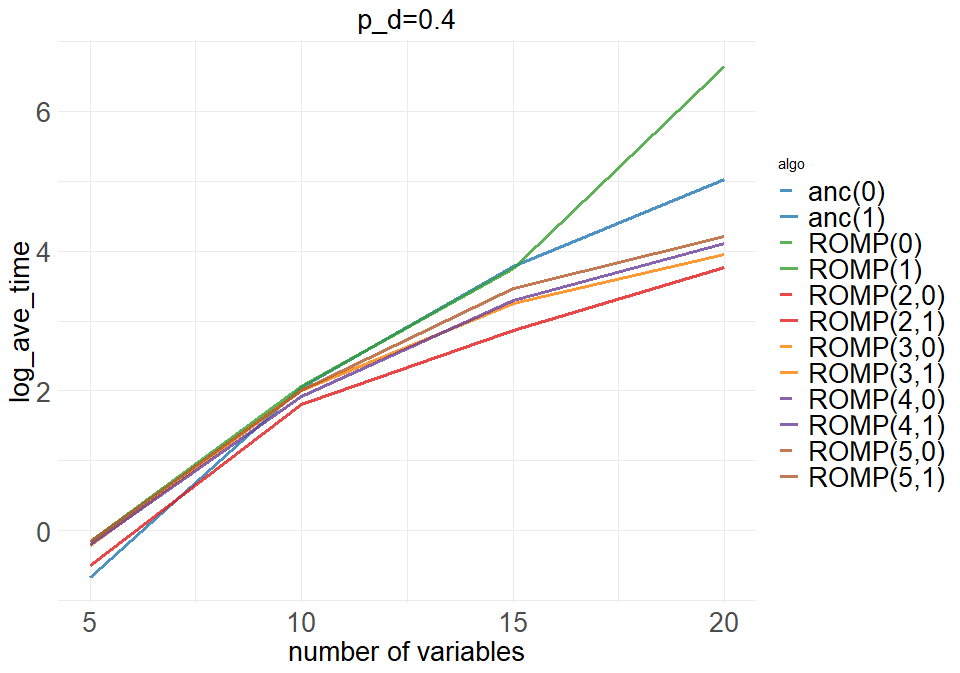}
    \caption{logarithm of computation time of algorithms that score by using imsets}
    \label{fig: log_time_ROMPs}
\end{figure}

There are two key observations here. Firstly, scoring by taking imsets from pairwise Markov property in general is more time consuming, where the computational cost grows faster than imsets from refined ordered Markov property. The complexity of computing the pairwise Markov property, though, can be bounded in polynomial time. Secondly, if we do not restrict head size, 
\ROMP(1) 
spends much longer time than others, except for when $p_d=0.8$. This is because we expect much larger head sizes when $n=20$ and there are more bidirected edges, as we have seen in Figure \ref{fig: hist maximal head size}; and also that the refined Markov property is computed iteratively, and its computational time grows exponentially as size of heads grows. 

\subsubsection{Comparison of \ref{algo: GESMAG} and other algorithms}

Now we compare our algorithms to other approaches to MAG learning. 
In Figures \ref{fig: ROMPvsOther_accu} and  \ref{fig: ROMPvsOthers_log_diff_BIC}, we compare different variations of \ref{algo: GESMAG}, to the baseline and hybrid version of GPS, FCI, and GFCI. We show the accuracy plots and the plots of logarithm of average difference in BIC, respectively.

\begin{figure}
    \centering
    \includegraphics[scale=0.38]{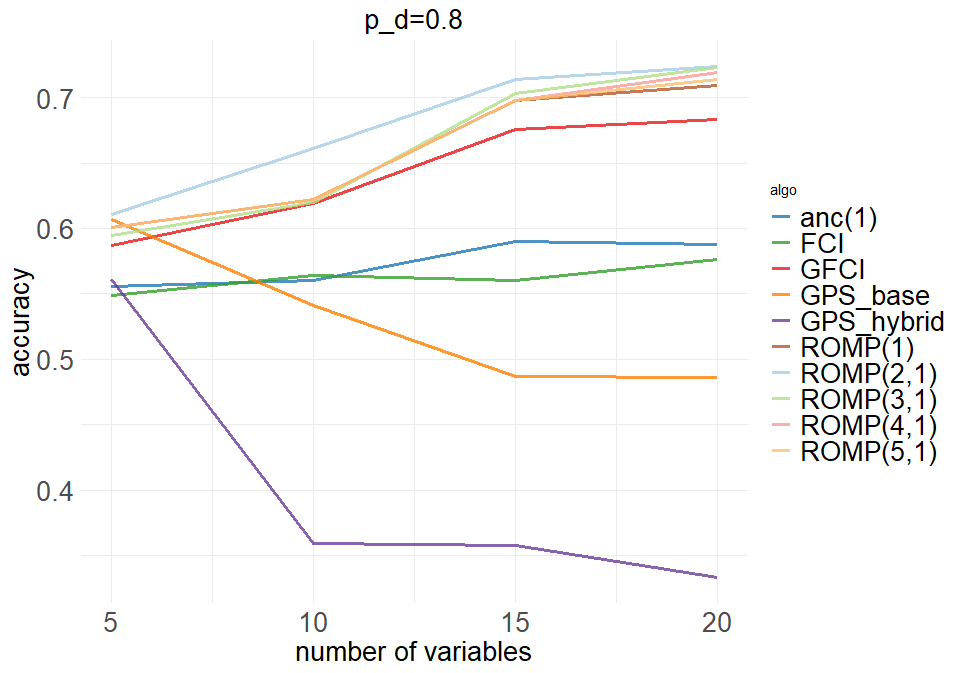}

    \medskip
    \includegraphics[scale=0.38]{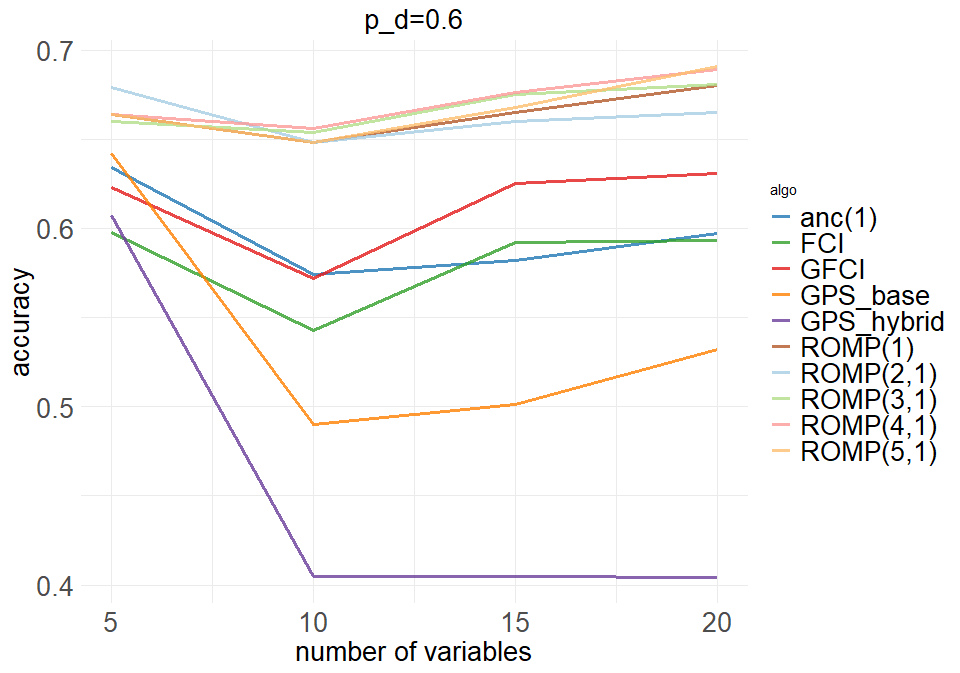}

    \medskip
    \includegraphics[scale=0.38]{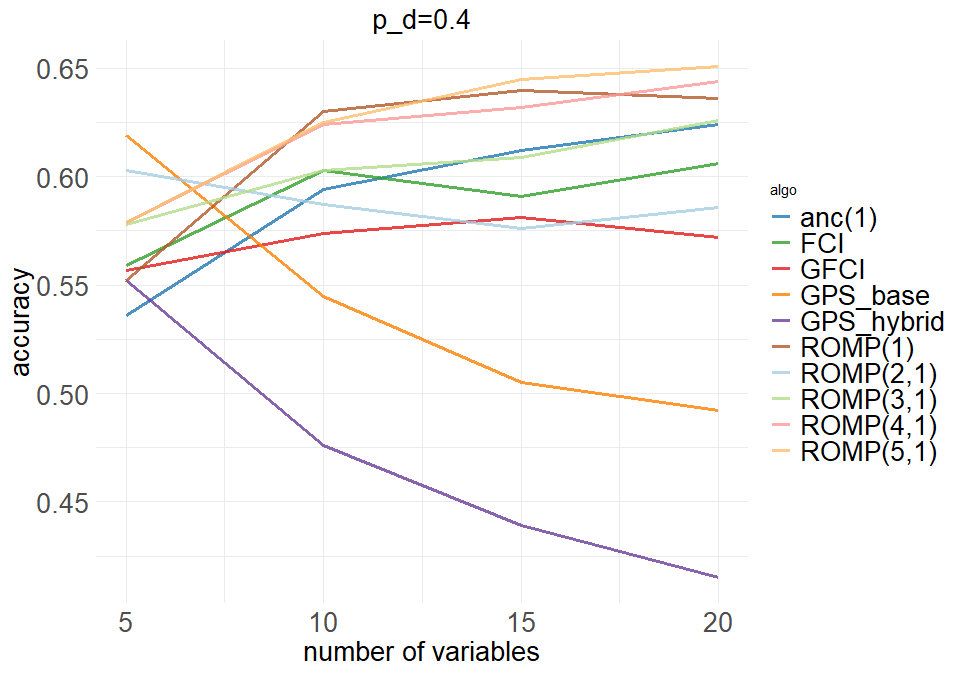}
    \caption{Accuracy of different algorithms}
    \label{fig: ROMPvsOther_accu}
\end{figure}

\begin{figure}
    \centering
    \includegraphics[scale=0.38]{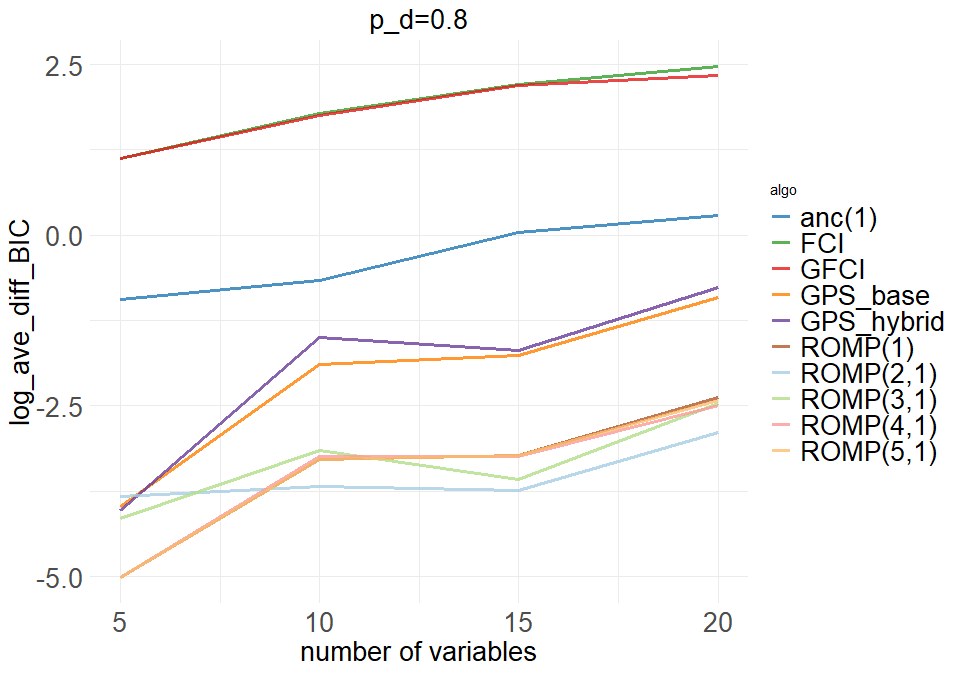}

    \medskip
    \includegraphics[scale=0.38]{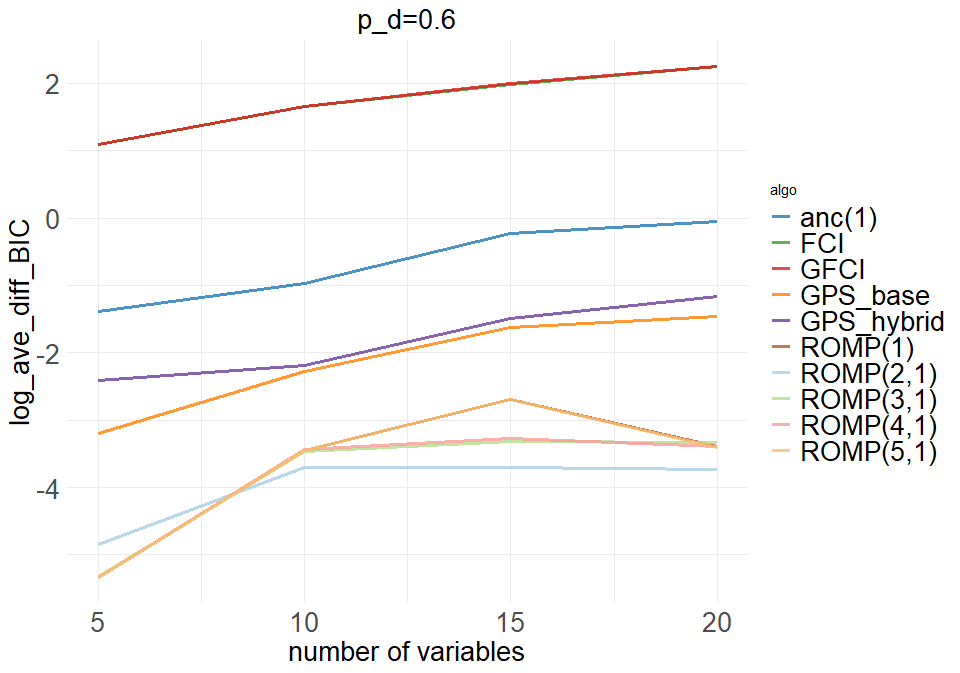}

    \medskip
    \includegraphics[scale=0.38]{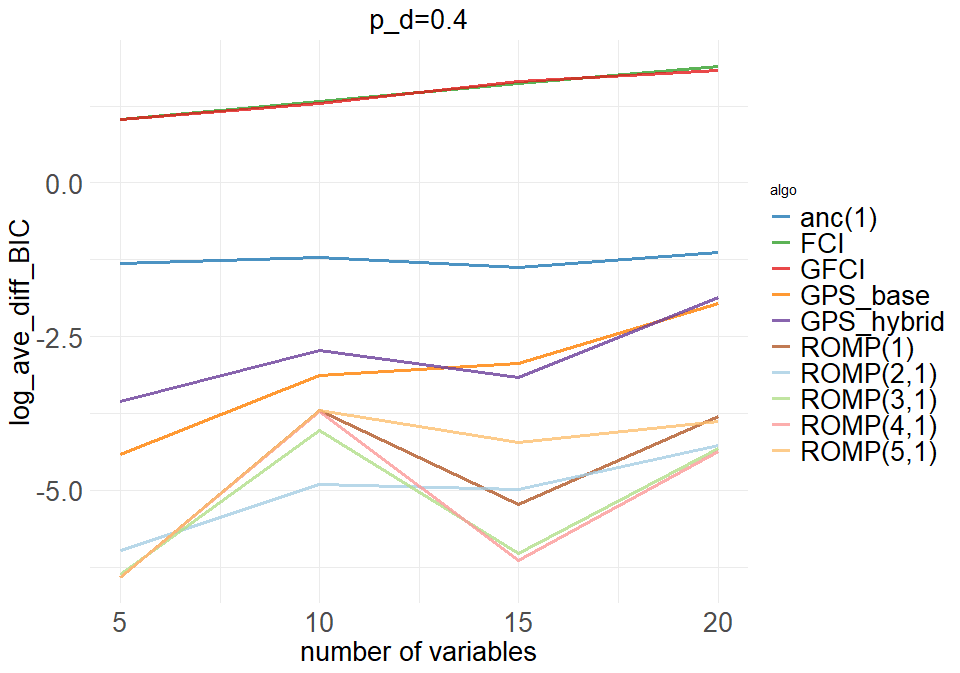}
    \caption{Log of average difference in BIC of different algorithms}
    \label{fig: ROMPvsOthers_log_diff_BIC}
\end{figure}

One can see that variations of \ref{algo: GESMAG} outperform other algorithms. Compared to baseline or hybrid versions of GPS, both FCI and GFCI show better performance in terms of edge mark accuracy, but much worse performance in terms of BIC. This is not surprising as GPS uses BIC as its objective, which is not true of either FCI or GFCI.

\begin{figure}
    \centering
    \includegraphics[scale=0.38]{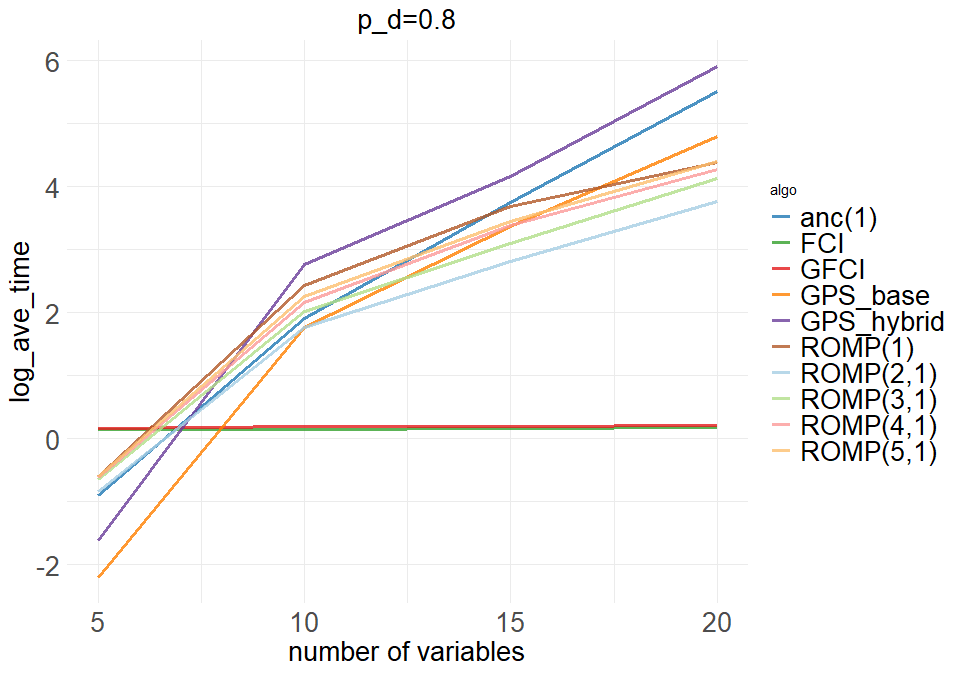}

    \medskip
    \includegraphics[scale=0.38]{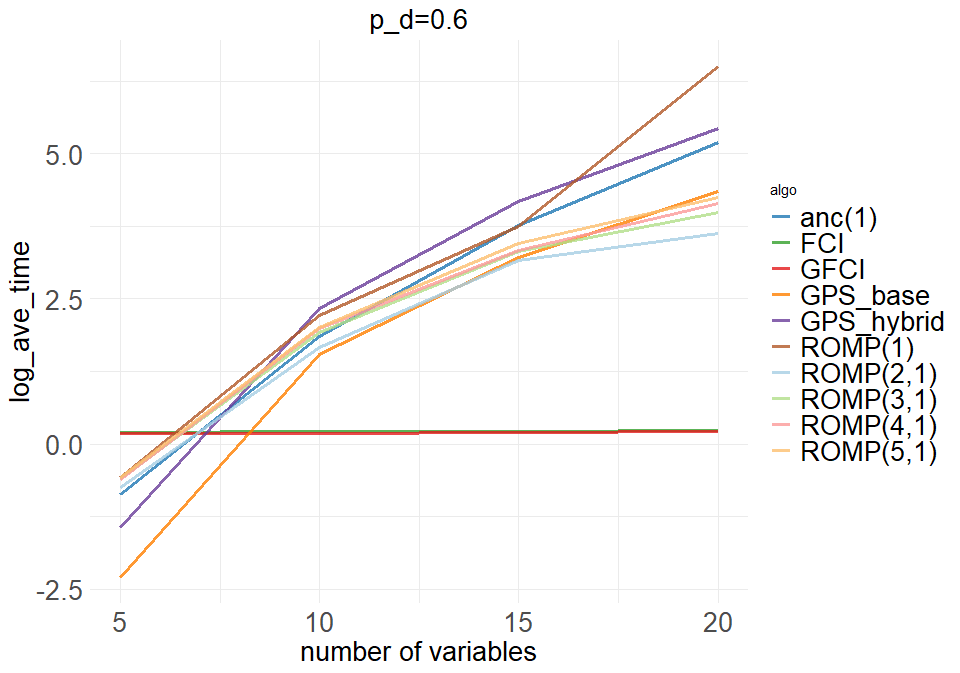}

    \medskip
    \includegraphics[scale=0.38]{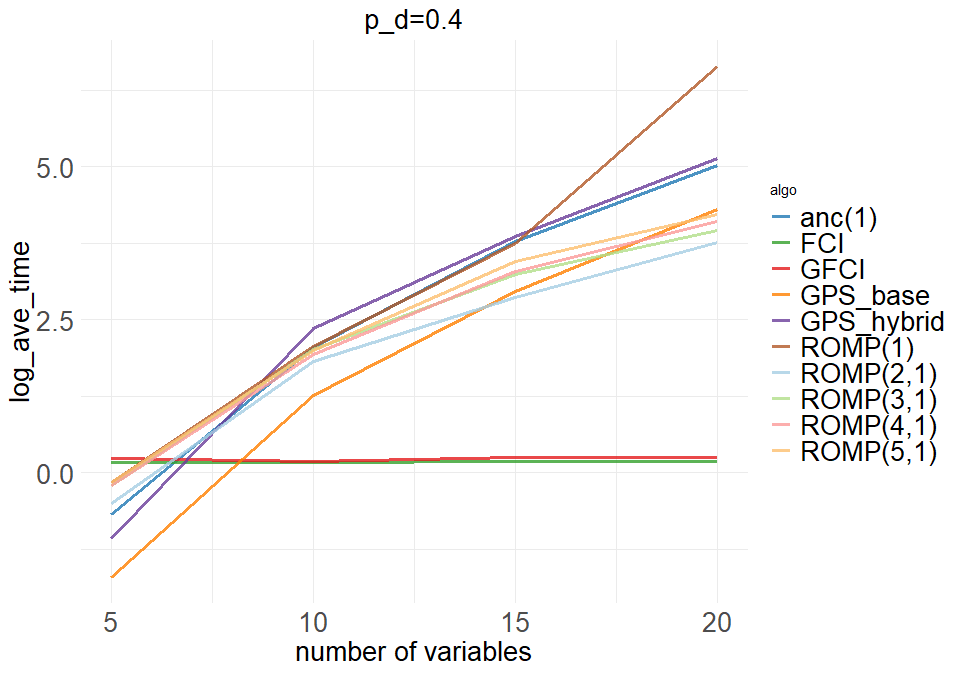}
    \caption{logarithm of computational time of different algorithms}
    \label{fig: log_time_other_algo}
\end{figure}

For computational time, FCI and GFCI each spend around 1.2 seconds for each data set regardless of number of variables. This is because of the well designed package (\emph{rcausal} in R) that supports the algorithms and the constraint-based nature of those methods. They explore significantly fewer number of MECs than score-based approaches. On the other hand, one can clearly see that the time required for hybrid version of GPS grows much faster than time for \ref{algo: GESMAG}. While the computational time of the base version of GPS is close to \ref{algo: GESMAG}, this baseline version has some fairly basic flaws in nature. In brief, it sets any new triple with orders to be a noncollider by default; since it does not explore the collider alternative, it easily becomes stuck in a local optimum. 

Moreover, as $p_d$ decreases, the running time of hybrid version of GPS increases for fixed number of variables, this suggests that using BIC as score may more easily to fall into local optimum if district or maximum head size is large. On the other hand, although \ref{algo: GESMAG} without restricting head size requires more time if sizes of district or head grow large, the algorithms retain high accuracy.

We split our contribution into two parts. The average percentage of time spent on scoring ranges from around $40\%$ to $60\%$ as head size varies from two to five while GPS usually spent around $40\%-50\%$ on scoring. As the overall computational time is improved, we conclude that the revised search strategy improves search efficiency compared to GPS. This improvement is however not significant and our main contribution is to propose scoring by imsets from various Markov property, in particular the refined Markov property clearly shows best performance in terms of both edge mark accuracy and BIC.


\section{Conclusion and future work}\label{sec: conclusion and future work}

We have presented a score-based approach for learning MAGs, which explores in the space of Markov equivalence classes (MECs). Compared to the most comparable previous work \citep{claassen2022greedy}, we use different (i) representations of the MECs; (ii) methods to move between MECs; (iii) scoring criteria, and all these three factors contribute to better performance.  On each of these points, there is certainly room for improvement. 

For representation of MECs, we use PAGs to represent the MEC and our method to modify PAG may not result in a valid MEC. Theoretical characterization for when such local modification is valid can be beneficial, in analogue to the results for CPDAGs \citep{chickering2002optimal}, where CPDAGs for new MECs are obtained by performing local operation on the CPDAG of previous MEC. One can also focus on efficient proposal for valid possible sets of unshielded collider triples similar to Algorithm \ref{algo: obtain definite/possible unshielded colliders}, since the completeness and soundness of orientation rules by \citet{zhang2012characterization} ensures that if the given set of unshielded colliders triples are valid, then whether orienting as colliders or noncolliders when $\mathcal{R}4$ is called will result in a valid MEC. On the other hand, it is not necessary to use the full PAG as a representation of the MEC; any representation that results in efficient computation of scores and quick traversal between MECs would work.

Scoring by Markov property can also be improved via the following two possible directions. The refined Markov property is not score-equivalent and can be simplified for some graphs, as shown by \citet{hu2023towards}. More conditional independences added to the imset mean that it is more likely to make empirical mistakes given finite sample size. 

Further, each time we visit a new MEC, we compute the representative MAG and its refined Markov property. We did not use information from previous MEC and obviously for some nodes, their associated conditional independences in the refined Markov property are unchanged after modifying the PAG, and can be used without re-computation. Previous score-based algorithms that use BIC all use the decomposibility of Gaussian BIC into districts. Therefore, if the district has not changed, as well as the parents of the district, the local score of this district would not be changed. This may hold for the refined Markov property. Consider $1 \leftrightarrow 3 \leftrightarrow 4 \leftarrow 2 \leftarrow 1$ with numerical ordering. If we remove $1 \to 2$, 
the component for $4$ in the power DAG would be changed,  since $\{1,3,4\}$ now becomes a head and the two other heads remain in the graph. However, the list of independences associated with this component remains the same. Still even if we recalculate the score for each MEC we visit, empirically our algorithm outperforms BIC-based methods in terms of efficiency.

\bibliographystyle{abbrvnat}
\bibliography{refs}

\begin{thebibliography}{40}
\providecommand{\natexlab}[1]{#1}
\providecommand{\url}[1]{\texttt{#1}}
\expandafter\ifx\csname urlstyle\endcsname\relax
  \providecommand{\doi}[1]{doi: #1}\else
  \providecommand{\doi}{doi: \begingroup \urlstyle{rm}\Url}\fi

\bibitem[Ahmed and Gokhale(1989)]{ahmed1989entropy}
N.~A. Ahmed and D.~Gokhale.
\newblock Entropy expressions and their estimators for multivariate
  distributions.
\newblock \emph{IEEE Transactions on Information Theory}, 35\penalty0
  (3):\penalty0 688--692, 1989.

\bibitem[Ali et~al.(2005)Ali, Richardson, Spirtes, and Zhang]{ali2012towards}
A.~R. Ali, T.~S. Richardson, P.~L. Spirtes, and J.~Zhang.
\newblock Towards characterizing {Markov} equivalence classes for directed
  acyclic graphs with latent variables.
\newblock \emph{arXiv preprint arXiv:1207.1365}, 2005.

\bibitem[Ali et~al.(2009)Ali, Richardson, and Spirtes]{ali2009}
R.~A. Ali, T.~S. Richardson, and P.~Spirtes.
\newblock Markov equivalence for ancestral graphs.
\newblock \emph{Annals of Statistics}, 37\penalty0 (5B):\penalty0 2808--2837,
  10 2009.

\bibitem[Andrews et~al.(2022)Andrews, Cooper, Richardson, and
  Spirtes]{andrews22}
B.~J. Andrews, G.~F. Cooper, T.~S. Richardson, and P.~Spirtes.
\newblock The $m$-connecting imset and factorization for {ADMG} models.
\newblock \emph{arXiv preprint:2207.08963}, 2022.

\bibitem[Basharin(1959)]{basharin1959statistical}
G.~P. Basharin.
\newblock On a statistical estimate for the entropy of a sequence of
  independent random variables.
\newblock \emph{Theory of Probability \& Its Applications}, 4\penalty0
  (3):\penalty0 333--336, 1959.

\bibitem[Chen et~al.(2021)Chen, Dash, and Gao]{chen2021integer}
R.~Chen, S.~Dash, and T.~Gao.
\newblock Integer programming for causal structure learning in the presence of
  latent variables.
\newblock In \emph{International Conference on Machine Learning}, pages
  1550--1560. PMLR, 2021.

\bibitem[Chickering(2002)]{chickering2002optimal}
D.~M. Chickering.
\newblock Optimal structure identification with greedy search.
\newblock \emph{Journal of machine learning research}, 3\penalty0
  (Nov):\penalty0 507--554, 2002.

\bibitem[Claassen and Bucur(2022)]{claassen2022greedy}
T.~Claassen and I.~G. Bucur.
\newblock Greedy equivalence search in the presence of latent confounders.
\newblock In \emph{Proceedings of the 38th Conference on Uncertainty in
  Artificial Intelligence (UAI-2022)}. PMLR, 2022.

\bibitem[Claassen et~al.(2013)Claassen, Mooij, and
  Heskes]{claassen2013learning}
T.~Claassen, J.~Mooij, and T.~Heskes.
\newblock Learning sparse causal models is not np-hard.
\newblock In \emph{Proceedings of the 29th Conference on Uncertainty in
  Artificial Intelligence (UAI-2013)}. PMLR, 2013.

\bibitem[Colombo et~al.(2012)Colombo, Maathuis, Kalisch, and
  Richardson]{colombo2012learning}
D.~Colombo, M.~H. Maathuis, M.~Kalisch, and T.~S. Richardson.
\newblock Learning high-dimensional directed acyclic graphs with latent and
  selection variables.
\newblock \emph{Annals of Statistics}, pages 294--321, 2012.

\bibitem[Drton et~al.(2009)Drton, Eichler, and Richardson]{drton2009computing}
M.~Drton, M.~Eichler, and T.~S. Richardson.
\newblock Computing maximum likelihood estimates in recursive linear models
  with correlated errors.
\newblock \emph{Journal of Machine Learning Research}, 10\penalty0 (10), 2009.

\bibitem[Evans(2020)]{evans20model}
R.~J. Evans.
\newblock Model selection and local geometry.
\newblock \emph{Annals of Statistics}, 48\penalty0 (6):\penalty0 3513--3544,
  2020.

\bibitem[Evans and Richardson(2010)]{evans2010maximum}
R.~J. Evans and T.~S. Richardson.
\newblock Maximum likelihood fitting of acyclic directed mixed graphs to binary
  data.
\newblock In \emph{Proceedings of the 26th Conference on Uncertainty in
  Artificial Intelligence (UAI-2010)}. PMLR, 2010.

\bibitem[Evans and Richardson(2013)]{MLL}
R.~J. Evans and T.~S. Richardson.
\newblock Marginal log-linear parameters for graphical {Markov} models.
\newblock \emph{Journal of the Royal Statistical Society, Series B},
  75\penalty0 (4):\penalty0 743--768, Sep 2013.

\bibitem[Evans and Richardson(2014)]{Evans2014}
R.~J. Evans and T.~S. Richardson.
\newblock Markovian acyclic directed mixed graphs for discrete data.
\newblock \emph{Annals of Statistics}, 42\penalty0 (4):\penalty0 1452--1482,
  2014.

\bibitem[Hauser and B{\"u}hlmann(2012)]{hauser2012characterization}
A.~Hauser and P.~B{\"u}hlmann.
\newblock Characterization and greedy learning of interventional {Markov}
  equivalence classes of directed acyclic graphs.
\newblock \emph{The Journal of Machine Learning Research}, 13\penalty0
  (1):\penalty0 2409--2464, 2012.

\bibitem[Hu and Evans(2020)]{hu2020faster}
Z.~Hu and R.~Evans.
\newblock Faster algorithms for {Markov} equivalence.
\newblock In \emph{Proceedings of the 36th Conference on Uncertainty in
  Artificial Intelligence (UAI-2020)}. PMLR, 2020.

\bibitem[Hu and Evans(2023+)]{hu2023towards}
Z.~Hu and R.~Evans.
\newblock Towards standard imsets for maximal ancestral graphs.
\newblock \emph{Bernoulli (to appear)}, 2023+.

\bibitem[Kaltenpoth and Vreeken(2023)]{kaltenpoth2023causal}
D.~Kaltenpoth and J.~Vreeken.
\newblock Causal discovery with hidden confounders using the algorithmic markov
  condition.
\newblock In \emph{Uncertainty in Artificial Intelligence}, pages 1016--1026.
  PMLR, 2023.

\bibitem[Linusson et~al.(2023)Linusson, Restadh, and Solus]{linusson2023greedy}
S.~Linusson, P.~Restadh, and L.~Solus.
\newblock Greedy causal discovery is geometric.
\newblock \emph{SIAM Journal on Discrete Mathematics}, 37\penalty0
  (1):\penalty0 233--252, 2023.

\bibitem[Meek(1997)]{meek1997graphical}
C.~Meek.
\newblock \emph{Graphical Models: Selecting causal and statistical models}.
\newblock PhD thesis, PhD thesis, Carnegie Mellon University, 1997.

\bibitem[Misra et~al.(2005)Misra, Singh, and Demchuk]{misra2005estimation}
N.~Misra, H.~Singh, and E.~Demchuk.
\newblock Estimation of the entropy of a multivariate normal distribution.
\newblock \emph{Journal of multivariate analysis}, 92\penalty0 (2):\penalty0
  324--342, 2005.

\bibitem[Nowzohour et~al.(2017)Nowzohour, Maathuis, Evans, and
  B{\"u}hlmann]{nowzohour2017distributional}
C.~Nowzohour, M.~H. Maathuis, R.~J. Evans, and P.~B{\"u}hlmann.
\newblock Distributional equivalence and structure learning for bow-free
  acyclic path diagrams.
\newblock 2017.

\bibitem[Ogarrio et~al.(2016)Ogarrio, Spirtes, and Ramsey]{ogarrio2016hybrid}
J.~M. Ogarrio, P.~Spirtes, and J.~Ramsey.
\newblock A hybrid causal search algorithm for latent variable models.
\newblock In \emph{Conference on probabilistic graphical models}, pages
  368--379. PMLR, 2016.

\bibitem[Peters et~al.(2017)Peters, Janzing, and
  Sch{\"o}lkopf]{peters2017elements}
J.~Peters, D.~Janzing, and B.~Sch{\"o}lkopf.
\newblock \emph{Elements of causal inference: foundations and learning
  algorithms}.
\newblock The MIT Press, 2017.

\bibitem[Ramsey et~al.(2006)Ramsey, Spirtes, and Zhang]{ramsey06adjacency}
J.~Ramsey, P.~Spirtes, and J.~Zhang.
\newblock Adjacency-faithfulness and conservative causal inference.
\newblock In \emph{Proceedings of the 22nd Conference on Uncertainty in
  Artificial Intelligence (UAI-2006)}. PMLR, 2006.

\bibitem[Rantanen et~al.(2021)Rantanen, Hyttinen, and
  J{\"a}rvisalo]{rantanen2021maximal}
K.~Rantanen, A.~Hyttinen, and M.~J{\"a}rvisalo.
\newblock Maximal ancestral graph structure learning via exact search.
\newblock In \emph{Proceedings of the 37th Conference on Uncertainty in
  Artificial Intelligence (UAI-2021)}. PMLR, 2021.

\bibitem[Richardson(2003)]{richardlocalmarkov}
T.~S. Richardson.
\newblock {M}arkov properties for acyclic directed mixed graphs.
\newblock \emph{Scandinavian Journal of Statistics}, 30\penalty0 (1):\penalty0
  145--157, 2003.

\bibitem[Richardson(2009)]{richardson2014factorization}
T.~S. Richardson.
\newblock A factorization criterion for acyclic directed mixed graphs.
\newblock In \emph{Proceedings of the 25th Conference on Uncertainty in
  Artificial Intelligence (UAI-09)}, pages 462--470, 2009.

\bibitem[Richardson and Spirtes(2002)]{richardson2002}
T.~S. Richardson and P.~Spirtes.
\newblock Ancestral graph {Markov} models.
\newblock \emph{Annals of Statistics}, 30\penalty0 (4):\penalty0 962--1030, 08
  2002.

\bibitem[Sadeghi et~al.(2014)Sadeghi, Lauritzen, et~al.]{sadeghi2014markov}
K.~Sadeghi, S.~Lauritzen, et~al.
\newblock Markov properties for mixed graphs.
\newblock \emph{Bernoulli}, 20\penalty0 (2):\penalty0 676--696, 2014.

\bibitem[Schwarz(1978)]{schwarz1978estimating}
G.~Schwarz.
\newblock Estimating the dimension of a model.
\newblock \emph{The annals of statistics}, pages 461--464, 1978.

\bibitem[Spirtes and Richardson(1997)]{Spirtes97apolynomial}
P.~Spirtes and T.~S. Richardson.
\newblock A polynomial time algorithm for determining {DAG} equivalence in the
  presence of latent variables and selection bias, 1997.

\bibitem[Spirtes et~al.(2000)Spirtes, Glymour, Scheines, and
  Heckerman]{spirtes2000causation}
P.~Spirtes, C.~N. Glymour, R.~Scheines, and D.~Heckerman.
\newblock \emph{Causation, Prediction, and Search}.
\newblock MIT Press, 2000.

\bibitem[Studen\'y(2006)]{studeny2006probabilistic}
M.~Studen\'y.
\newblock \emph{Probabilistic conditional independence structures}.
\newblock Springer Science \& Business Media, 2006.

\bibitem[Triantafillou and Tsamardinos(2016)]{triantafillou2016score}
S.~Triantafillou and I.~Tsamardinos.
\newblock Score-based vs constraint-based causal learning in the presence of
  confounders.
\newblock In \emph{Proceedings of the 32nd Conference on Uncertainty in
  Artificial Intelligence (UAI-2016)}. PMLR, 2016.

\bibitem[Zhang(2007)]{zhang2012characterization}
J.~Zhang.
\newblock A characterization of {Markov} equivalence classes for directed
  acyclic graphs with latent variables.
\newblock \emph{arXiv preprint arXiv:1206.5282}, 2007.

\bibitem[Zhang and Spirtes(2005)]{zhang2012transformational}
J.~Zhang and P.~L. Spirtes.
\newblock A transformational characterization of {M}arkov equivalence for
  directed acyclic graphs with latent variables.
\newblock \emph{arXiv preprint arXiv:1207.1419}, 2005.

\bibitem[Zhang and Hyvarinen(2009)]{zhang2009identifiability}
K.~Zhang and A.~Hyvarinen.
\newblock On the identifiability of the post-nonlinear causal model.
\newblock In \emph{Proceedings of the 25th Conference on Uncertainty in
  Artificial Intelligence (UAI-2009)}. PMLR, 2009.

\bibitem[Zhao et~al.(2005)Zhao, Zheng, and Liu]{Zhao2005}
H.~Zhao, Z.~Zheng, and B.~Liu.
\newblock On the {Markov} equivalence of maximal ancestral graphs.
\newblock \emph{Science in China Series A: Mathematics}, 48\penalty0
  (4):\penalty0 548--562, Apr 2005.

\end{thebibliography}
\newpage
\begin{appendices}
\appendix

\section{Full definition of the refined Markov property in Section \ref{sec:pre}}\label{apx: full def of refined Markov property}

\subsection{Complete power DAGs}\label{sec:complete power DAG}

\begin{definition}\label{def:power DAG}
Consider a MAG $\G$ with a topological ordering.  Given a set $S \subseteq \mathcal{V}$ 
we say that $s \in S$ is a \emph{marginalization vertex} if it is in $\barren_\G(S)$ and 
is not maximal in $S$.

We firstly define the \emph{complete power DAG} $\mathfrak{I}(\G)$ as a graph with vertices $\mathcal{H}(\G)$.  An edge is added from $H \to H'$ if there is a marginalization 
vertex $k \in H$ such that $H \to^k H'$.  In this case 
we call $H$ a \emph{parent head} of $H'$. 
There is a unique component for each 
vertex $i$, which we denote $\mathfrak{I}_i(\G)$.

\end{definition}

In Appendix D of \citet{hu2023towards}, they justify that the resulting graphs $\mathfrak{I}_i(\G)$ are indeed DAGs together with some useful facts.

Now we define the list of independences associated with the complete power DAGs; let $[n]$ be the set $\{1,\dots, n\}$.

\begin{definition}\label{def: head and tail Markov property}
 For a MAG $\G$ and any $i$, we associate $\mathfrak{I}^\G_i$ with a collection of independences  $\mathbb{L}^\G_i$ that contains:
 \begin{itemize}
     \item[($a$)] $i \indep [i-1] \setminus \mb_\G(i,[i]) \mid \mb_\G(i,[i])$, and
     \item[($b$)] for every head $H$ (except $\{i\}$) whose maximal element is $i$:
\begin{align*}
    &i \indep (H \cup T) \setminus (H' \cup T' \cup k ) \mid H' \cup T' \setminus \{i\}  && \text{for } k \in H \setminus \{i\},
\end{align*}
 where $H \to^{k} H'$, and $T=\tail_\G(H)$ and $T'=\tail_\G(H')$.
 \end{itemize}
\end{definition}

The following result is Theorem D.4 in \citet{hu2023towards}. 

\begin{theorem}\label{thm:head and tail Markov property}
For a MAG $\G$, the collection $\mathbb{L}^\G = \bigcup_i \mathbb{L}^\G_i$ is equivalent to the list of independences implied by the ordered local Markov property for $\G$.
\end{theorem}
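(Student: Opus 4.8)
The plan is to show that $\mathbb{L}^\G$ and the ordered local Markov property have the same semi-graphoid closure, hence cut out exactly the same distributions. Since the ordered local Markov property is already known to be equivalent to the global Markov property for $\G$ \citep{richardlocalmarkov}, it suffices to compare the two lists directly. I would prove the two inclusions separately, expecting the inclusion ``$\mathbb{L}^\G$ lies in the closure of the ordered local Markov property'' to be routine and the converse to be the real work.

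For the easy inclusion, each statement of type $(a)$ in Definition \ref{def: head and tail Markov property} is literally the ordered local Markov statement for the ancestral set $[i]$: in a topological ordering $[i]$ is ancestral, and $\mb_\G(i,[i])$ is the Markov blanket of $i$ in $\G_{[i]}$. Each statement of type $(b)$ is exactly the conclusion of Lemma \ref{lemma: head marginalization and indep}, whose proof already derives it from the ordered local Markov property. Hence every member of $\mathbb{L}^\G$ lies in the semi-graphoid closure of the ordered local Markov property.

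For the converse I would use a double induction. The outer induction is on the topological ordering: assume that for every $j < i$ all ordered local Markov statements centred on $j$ already follow from $\mathbb{L}^\G$. Fixing $i$, I run an inner, downward induction on ancestral sets $A \subseteq [i]$ with $i \in A$, aiming to derive $i \indep A \setminus (\{i\} \cup \mb_{\G_A}(i)) \mid \mb_{\G_A}(i)$ for every such $A$. The base case $A = [i]$ is statement $(a)$, and every ancestral $A$ is reached from $[i]$ by deleting barren non-$i$ vertices in reverse topological order. If the deleted vertex $k$ lies outside the district of $i$ in $\G_A$ it only leaves the free side, and decomposition suffices; the substantive case is when $k$ is barren in $A$ and in the district of $i$, so that $k$ is a marginalization vertex of the head $H = \barren_{\G_A}(\dis_{\G_A}(i))$ and the edge $H \to^{k} H'$ of the complete power DAG (Definition \ref{def:power DAG}) supplies the incremental statement $(b)$. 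The plan is to combine the statement already known for $A$, the incremental statement $(b)$, and an ordered local Markov statement centred on $k$ (available from the outer hypothesis, as $k < i$) by contraction followed by decomposition, to obtain the statement for $A \setminus \{k\}$.

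The main obstacle is precisely this combination step. The naive hope that the statement for $A$ together with $(b)$ alone yields the statement for $A \setminus \{k\}$ fails, because $k$ sits in the conditioning set (the Markov blanket) of the statement for $A$ but is entirely absent from the statement for $A \setminus \{k\}$, and dropping a variable from a conditioning set is not a semi-graphoid consequence. The resolution I would make precise is that the independence needed to integrate out $k$ is supplied by an ordered local Markov statement centred on $k$ itself, of the form $k \indep \cdot \mid \cdot$, which by contraction with the statement for $A$ lets $k$ be absorbed and then removed by decomposition. Verifying that the conditioning sets of these three statements align so that contraction applies --- essentially tracking how $\mb_{\G_A}(i)$, $\mb_{\G_{A \setminus k}}(i)$ and the blanket of $k$ overlap under the definition of $H \to^{k} H'$ --- is the delicate set-theoretic core. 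Should this bookkeeping prove unwieldy, an alternative is to bypass the explicit semi-graphoid derivation and instead show directly, by induction on the ordering, that any distribution satisfying $\mathbb{L}^\G$ satisfies every ordered local Markov independence.
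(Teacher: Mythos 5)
The paper does not actually prove this statement: it is imported as Theorem D.4 of \citet{hu2023towards}, so there is no in-paper argument to compare yours against, and I can only judge the proposal on its own terms. Your architecture is right. The easy inclusion is correct as you describe it: the type~$(a)$ statements are the ordered local Markov statements for the ancestral sets $[i]$, and the type~$(b)$ statements are exactly Lemma \ref{lemma: head marginalization and indep}, which is derived from the ordered local property. The converse via a downward induction on ancestral sets $A \ni i$, peeling off one barren vertex $k$ at a time and splitting on whether $k \in \dis_{\G_A}(i)$, is also the right skeleton, and you correctly isolate the crux: the statement for $A$ conditions on $k$, while the statement for $A \setminus \{k\}$ must not mention $k$ at all.

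The gap is that the mechanism you propose for the crux is not quite the right object, and the step is left unexecuted even though it is the entire content of the theorem. Write $M = H \cup \tail(H) = \{i\} \cup \mb_\G(i,A)$ and $M' = H' \cup \tail(H')$. What is needed is $k \indep A \setminus M \mid M \setminus \{i,k\}$: with this, symmetry and contraction against the inductive statement $i \indep A \setminus M \mid M \setminus \{i\}$ absorb $k$ into the left-hand side, decomposition discards it to give $i \indep A \setminus M \mid M \setminus \{i,k\}$, and one further contraction with the type-$(b)$ statement (taking $Z = M'\setminus\{i\}$ and $W = M \setminus (M' \cup \{k\})$) yields the goal for $A \setminus \{k\}$. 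But this auxiliary independence is generally \emph{not} ``an ordered local Markov statement centred on $k$'': its conditioning set $M \setminus \{i,k\}$ need not equal $\mb_\G(k,B)$ for any ancestral $B$ (deleting $i$ can disconnect the district containing $k$, so the blanket of $k$ loses vertices that $M \setminus \{i,k\}$ retains). What your outer hypothesis actually delivers is all ordered local statements centred on $1,\dots,i-1$, hence --- by the already-known equivalence of the ordered local and global Markov properties, applied to the ancestral subgraph $\G_{[i-1]}$ --- every m-separation statement of $\G_{[i-1]}$. You must therefore add and prove the graphical lemma that $k \perp_m A \setminus M \mid M \setminus \{i,k\}$ holds in $\G_{A\setminus\{i\}}$ (every path out of $k$ is blocked either at a parent of $\dis_{\G_A}(i)$, which is a noncollider in the conditioning set, or at the noncollider where a bidirected segment inside the district exits along a directed edge). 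With that lemma the plan closes; without it, the induction does not.
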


To fully define the reduced Markov property, we need the following extra definitions.

\begin{definition}\label{ceiling}
For a MAG $\G$ and a set of vertices $W$, define the \emph{ceiling} of $W$ as 
$$
\ceil_\G(W) = \{w \in W: W \cap \an_\G(w) = w\}.
$$
Given a head $H$ we define its \emph{Hamlet}\footnote{This nomenclature makes sense on understanding that the \emph{Claudius} of $H$, within a set such that $H$ is barren, is the subset of vertices after strict siblings of $H$ and their descendants are removed.  Note that this set that has been removed is precisely the Hamlet of $H$.} as 
$$
\ham_\G(H) = \sib_\G(\dis_{\an(H)}(H))\setminus \dis_{\an(H)}(H).
$$
\end{definition}

Intuitively, $\ham_\G(H)$ serves as the bidirected boundary of $H$ and so must be contained in the marginalization set to reach a graph in which $H$ is the maximal head. Also clearly the last marginalization vertex must be in the ceiling of the Hamlet, otherwise the barren subset of the district will contain some vertices not in $H$. For more discussion, see \citet{hu2023towards}.

We are ready to introduce the refined power DAGs and the refined Markov property. We define a partial order on heads by saying a that a head $H$ precedes another head $H'$ if $\an_\G(H') \subset \an_\G(H)$; this is shown to be a partial order in \citet{MLL}.

\begin{definition}\label{def: refined power DAG}
 For a MAG $\G$ and a topological order $<$, the \emph{refined power DAG} 
 $\widetilde{\mathfrak{I}}^\G_<$
 for $\G,<$ consists of a component for each vertex $i$. Denote this by $\widetilde{\mathfrak{I}}^\G_i$; it has vertices given by the set of heads that have $i$ as their maximal vertex.
 An edge $H' \to^{k} H$ is present in $\widetilde{\mathfrak{I}}^\G_<$ if
 \begin{align*}
        k &= \min \ceil_\G(\ham_\G(H)), \text{ and} \\
        H' &= \max \{H'' : H'' \in \pa_{\mathfrak{I}_i (\G)}(H) \text{ and } H'' \to^{k} H\}.
 \end{align*}
That is, for each head, we only take at most one edge and therefore at most one independence into it. 
\end{definition}

Next we define the list of independences associated with the refined power DAGs $\widetilde{\mathfrak{I}}^\G_i$. 

\begin{definition}\label{def: simplified head and tail Markov property}
 For a MAG $\G$ and each $i$, let $\widetilde{\mathbb{L}}^\G_i$ be a list of independences, such that:
 \begin{itemize}
     \item[($a$)] $\widetilde{\mathbb{L}}^\G_i$ contains $i \indep [i-1] \setminus \mb_\G(i,[i]) \mid \mb_\G(i,[i])$, and 
     \item[($b$)] for every head $H'$ other than the maximal one, $\widetilde{\mathbb{L}}^\G_i$ contains the independence associated with the unique edge into it in $\widetilde{\mathfrak{I}}^\G_i$
\end{itemize}
We will refer to the collection $\widetilde{\mathbb{L}}^\G = \bigcup_i \widetilde{\mathbb{L}}^\G_i$ as the \emph{refined (ordered) Markov property}.
\end{definition}

The next result is Proposition 4.3 in \citet{hu2023towards}.   We say that a conditional independence $X_A \indep X_B \mid X_C$ is \emph{smaller} than $X_{A'} \indep X_{B'} \mid X_{C}$ if $A \subseteq A'$ and $B \subseteq B'$ with at least one of these being strict. 

\begin{proposition}\label{prop:equiv between refined and local}
For a MAG $\G$, the refined ordered Markov property is equivalent to the ordered local Markov property. Further, given a fixed topological ordering, if the lists of independences differ, then the refined ordered Markov property contains either fewer or smaller independences than the reduced ordered local Markov property.
\end{proposition}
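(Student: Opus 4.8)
The statement has two parts: the logical equivalence of the refined ordered Markov property with the ordered local Markov property, and the termwise size comparison against the reduced ordered local Markov property. The plan is to reduce the equivalence to a statement about the complete power DAG. Since Theorem~\ref{thm:head and tail Markov property} already gives that the complete list $\mathbb{L}^\G$ is equivalent to the ordered local Markov property, it suffices to show $\widetilde{\mathbb{L}}^\G$ is equivalent to $\mathbb{L}^\G$. One direction is immediate: by construction the independence attached to each non-maximal head in $\widetilde{\mathbb{L}}^\G$ is exactly the $\mathbb{L}^\G$ entry indexed by the pair (parent head, marginalization vertex) that labels its unique incoming edge, so $\widetilde{\mathbb{L}}^\G \subseteq \mathbb{L}^\G$ as sets of triples, and hence $\mathbb{L}^\G$ trivially implies $\widetilde{\mathbb{L}}^\G$.

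For the reverse direction I would argue by induction along the refined power DAG $\widetilde{\mathfrak{I}}^\G_i$, processing heads in topological order, i.e.\ by decreasing ancestral set; note that each non-maximal head has a single incoming edge, so each component is an arborescence rooted at the maximal head. The inductive claim is that, for each head $H'$ with maximal vertex $i$, the retained independences derive the ordered local independence $i \indep \an(H') \setminus (H' \cup T') \mid (H' \cup T') \setminus \{i\}$ for the ancestral set $\an(H')$; collecting these over all heads yields the (reduced) ordered local property. The base case is the source of the component, the maximal head, whose full independence is exactly item~($a$) of Definition~\ref{def: simplified head and tail Markov property}. For the inductive step at a head $H'$ with incoming edge $H \to^{k} H'$, I have the retained independence $i \indep (H \cup T) \setminus (H' \cup T' \cup k) \mid (H' \cup T') \setminus \{i\}$ together with the already-established full independence of its parent $H$ (available since $\an(H) \supsetneq \an(H')$). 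Writing $C = (H \cup T) \setminus \{i\}$ and $C' = (H' \cup T') \setminus \{i\}$ for the two Markov blankets, the retained independence reads $i \indep C \setminus (C' \cup \{k\}) \mid C'$ and the parent's reads $i \indep \an(H) \setminus (C \cup \{i\}) \mid C$, while the goal is $i \indep \an(H') \setminus (C' \cup \{i\}) \mid C'$. The plan is to grow the separated set of the retained independence up to all of $\an(H') \setminus (C' \cup \{i\})$ by repeated application of decomposition, weak union and contraction, feeding in the parent's independence (and, where needed, those of further ancestors in the component) to cover the vertices of $\an(H')$ that the single marginalization step does not reach.

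The size comparison I would prove by exhibiting the head-indexed correspondence with the reduced ordered local property directly. Both associate to each distinct Markov blanket of $i$ a single triple with conditioning set $C' = (H' \cup T') \setminus \{i\}$; the reduced ordered local triple separates $i$ from $A^\ast \setminus (H' \cup T')$, where $A^\ast$ is the largest ancestral set with $\mb_\G(i, A^\ast) = C'$, whereas the refined triple separates $i$ from $(H \cup T) \setminus (H' \cup T' \cup k)$. I would establish $(H \cup T) \setminus \{k\} \subseteq A^\ast$, so that the two triples share a conditioning set and the refined separated set is contained in the reduced one; this is precisely where the choices $k = \min \ceil_\G(\ham_\G(H'))$ and $H = \max\{H'' : H'' \to^{k} H'\}$ (the maximal admissible parent) enter, since together they make $(H \cup T) \setminus \{k\}$ as large an ancestral subset of $A^\ast$ as the construction allows. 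The definition of ``smaller'' then applies termwise, and whenever some inclusion is strict, or a refined triple becomes vacuous and is dropped, the refined list is strictly fewer-or-smaller.

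The main obstacle is the semi-graphoid bookkeeping in the inductive step. The difficulty is that the marginalization vertex $k$ lies in the parent's Markov blanket $C$ but not in the child's $C'$, so the parent's independence conditions on a variable the target does not; since deleting a variable from a conditioning set is not a valid semi-graphoid operation, the two conditioning sets cannot be reconciled by weak union and contraction in isolation. I expect the resolution to require ordering the composition of the ancestors' independences so that $k$ is always eliminated together with a block of separated variables by a single contraction, rather than on its own, and to exploit the arborescence structure of $\widetilde{\mathfrak{I}}^\G_i$, which supplies a unique composition path to each head. Verifying that this composition never stalls --- equivalently, that the retained sub-collection is a genuine spanning set of independences for each component --- is the crux of the argument.
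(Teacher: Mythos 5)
First, a point of reference: the paper does not actually prove this proposition --- it is imported verbatim as Proposition 4.3 of \citet{hu2023towards}, with no argument reproduced here --- so there is no in-paper proof to compare yours against. Judged on its own terms, your reduction to showing that $\widetilde{\mathbb{L}}^\G$ and $\mathbb{L}^\G$ are equivalent and then invoking Theorem \ref{thm:head and tail Markov property} is the natural strategy, and the easy inclusion $\widetilde{\mathbb{L}}^\G \subseteq \mathbb{L}^\G$ is fine. But your inductive target for the hard direction is mis-stated. You propose to derive, at each head $H'$, the independence $i \indep \an(H') \setminus (H' \cup T') \mid (H' \cup T') \setminus \{i\}$. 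The entries of $\mathbb{L}^\G$ that must be recovered separate $i$ from vertices that are ancestors of the \emph{parent} head $H$ but need not be ancestors of $H'$; the relevant ancestral set is $\an(H)\setminus\{k\}$, not $\an(H')$. In the paper's own example (Figure \ref{exp: marginalizing example}), with $H=\{3,4,5\}$, $k=3$, $H'=\{5\}$ and $T'=\{2\}$, your target reads $5 \indep \emptyset \mid 2$ --- vacuous --- while the entry of $\mathbb{L}^\G$ to be derived is $5 \indep \{1,4\} \mid 2$. So the induction, as set up, cannot close.

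Beyond this, the step you yourself flag as the crux --- eliminating the marginalization vertex $k$, which sits in the parent's conditioning set but not the child's, by an appropriately ordered sequence of decompositions and contractions along the arborescence --- is precisely the substantive content of the result and is left entirely unresolved; you also assume without argument that every non-maximal head has an incoming edge in $\widetilde{\mathfrak{I}}^\G_i$, i.e.\ that $\min \ceil_\G(\ham_\G(H'))$ is always realised as a marginalization vertex from some parent head, which is needed for the arborescence structure you rely on. The size-comparison argument via the maximal ancestral set $A^*$ with a given Markov blanket is plausible in outline, but the key containment $(H\cup T)\setminus\{k\} \subseteq A^*$ and the role of the specific choices of $k$ and $H$ are only gestured at. As it stands, the submission is a proof plan with an identified hole and one incorrectly formulated intermediate claim, rather than a proof.
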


\section{Details of PAGs in Section \ref{recoverPAG}}
\subsection{Orientation rules for invariant arrowheads}\label{sec:orientation rules}
\begin{itemize}
    \item[$\mathcal{R}0$] For every unshielded triple of vertices ($a,b,c$), if it is an unshielded collider in $\mathcal{G}$, then orient the triple as $a\sto b \getss c$.
    (Here $*$ means the specific mark is not important, but if it remains a $*$ afterwards we keep the original mark.)
    \item[$\mathcal{R}1$] If $a\sto b \cseg c$ and $a,c$ are not adjacent, then orient the triple as $a\sto b \rightarrow c$.
    \item[$\mathcal{R}2$] If $a \rightarrow b \sto c$ or $a\sto b \rightarrow c$, and $c \cseg a$, then orient $c \cseg a$ as $c \getss a$.
    \item[$\mathcal{R}3$] If $a\sto b \getss  c, a \sceg d \cseg c$, $a$ and $c$ are not adjacent, and $d \sceg b$, then orient $d \sceg b$ as $d\sto  b$
    \item[$\mathcal{R}4$] If $\pi = \langle d, \ldots,a,b,c \rangle$ is a discriminating path between $d$ and $c$ for $b$ in $\mathcal{P}$, and $b \cseg c$; then if the edge $b \rightarrow c$ is present in $\mathcal{G}$, orient $b \cseg c$ as $b \rightarrow c$; otherwise, orient the triple ($a,b,c$) as $a \leftrightarrow b \leftrightarrow c$.
\end{itemize}
\subsection{Orientation rules for invariant tails}
Let \emph{partially mixed graphs} (PMGs) denote the intermediate graphs obtained during orientation of PAGs.

We need the following definitions first.
\begin{definition}\label{uncoveredpath}
In a PMG, a path $\pi = \langle v_0, \ldots,v_n \rangle$ is said to be \emph{uncovered} if for every $1 \leq i \leq n-1$, $v_{i-1}$ and $v_{i+1}$ are not adjacent.
\end{definition}
\begin{definition}\label{pdirectedpath}
In a PMG, a path $\pi = \langle v_0, \ldots,v_n \rangle$ is said to be \emph{potentially directed} (\emph{p.d.}) from $v_0$ to $v_n$ if for every $1 \leq i \leq n$, the edge between $v_{i-1}$ and $v_{i}$ is neither $v_{i-1} \getss v_i $ nor $v_{i-1} \sun v_{i}$.
\end{definition}
\begin{definition}
In a PMG, a path $\pi$ is a \emph{circle path} if every edge on the path is of the form $\dcircleedge$.
\end{definition}
The additional rules provided by \citet{zhang2012characterization} are:
\begin{itemize}
    \item[$\mathcal{R}5$] For every $a \dcircleedge b$ if there is an uncovered circle path $\pi = \langle a,c, \ldots, d,b \rangle$ for $a,b$ such that $a,d$ are not adjacent and $b,c$ are not adjacent, then orient $a \dcircleedge b$ and all the edges on $\pi$ as undirected edges;
    \item[$\mathcal{R}6$] If $a - b \cseg  c$, then orient $b \cseg c$ as $b \uns c$;
    \item[$\mathcal{R}7$] If $a \rcircleedge b \cseg c$, and $a,c$ are not adjacent, then orient $b \cseg c$ as $b \uns c$;
    \item[$\mathcal{R}8$] If $a \rightarrow b \rightarrow c$ or $a \rcircleedge b \rightarrow c$, and $a \circlearrow c$, then orient $a \circlearrow c$ as $a \rightarrow c$.
    \item[$\mathcal{R}9$] If $a \circlearrow c $, and $\pi = \langle a,b, \ldots, c \rangle$ is an uncovered p.d.~path from $a$ to $c$ such that $b$ and $c$ are not adjacent, then orient $a \circlearrow c $ as $a \rightarrow c$.
    \item[$\mathcal{R}10$] Suppose $a \circlearrow c$ and $b \rightarrow c \leftarrow d$, $\pi_1$ is an uncovered p.d.~path from $a$ to $b$, and $\pi_2$ is an uncovered p.d.~path from $a$ to $d$. Let $x$ be the vertex adjacent
to $a$ on $\pi_1$, and $y$ be the vertex
adjacent to $a$ on $\pi_2$. If $x$ and $y$ are
distinct, and are not adjacent, then orient $a \circlearrow c$
as $a \rightarrow c$.

\end{itemize}

\subsection{Construct PAG given parametrizing set}\label{sec: construct PAG use parametrizing set}
We define $[\mathcal{S}]$ to be the set of all MAGs that have the parameterizing set $\mathcal{S}$, so given a MAG $\mathcal{G}$, $[\mathcal{G}]$ = $[\mathcal{S}(\mathcal{G})]$ and naturally we can define $\mathcal{P}_{\mathcal{S}}$ to denote the PAG that characterizes the Markov equivalence class $[\mathcal{S}]$ in the same manner as Definition \ref{PAGdef}.  Since the parameterizing sets also characterise $[\mathcal{G}]$, we can also compute $\mathcal{P}_{\mathcal{S}}$ given such a set $\Sset$. Now we demonstrate how to achieve this. The method relies much on \citet{zhang2012characterization} and \citet{ali2012towards}.

Given a MAG, the algorithm to construct the PAG begins with a graph $\mathcal{P}$ that has
the same adjacencies as $\mathcal{G}$ and only one kind of edge $\dcircleedge$. Then exhaustively apply the orientation rules.

Instead of a MAG $\mathcal{G}$, suppose now we are only given a parameterizing set $\mathcal{S}$ (we may not necessarily know $\mathcal{G}$). We will show that with a slight change of the above rules, we are able to identify all the invariant arrow heads in $\mathcal{P}_{\mathcal{S}}$. 

Firstly notice that we can obtain adjacencies from $\mathcal{S}$, so we can construct the initial graph $\mathcal{P}$ as \citet{zhang2012characterization} does. Also notice that only $\mathcal{R}0$ and $\mathcal{R}4$ require information from graphs, so it is sufficient to construct replacements for these two rules. The originals are:
\begin{itemize}
    \item[$\mathcal{R}0$] For every unshielded triple of vertices ($a,b,c$), if it is an unshielded collider in $\mathcal{G}$, then orient the triple as $a \sto b \getss c$.
    
    \item[$\mathcal{R}4$] If $\pi = \langle d, \ldots,a,b,c \rangle$ is a discriminating path between $d$ and $c$ for $b$ in $\mathcal{P}$, and $b \cseg c$; then if the edge $b \rightarrow c$ is present in $\mathcal{G}$, orient $b \cseg c$ as $b \rightarrow c$; otherwise, orient the triple ($a,b,c$) as $a \leftrightarrow b \leftrightarrow c$.
\end{itemize}
Our adapted rules are:
\begin{itemize}
    \item[$\mathcal{R}0'$] For every unshielded triple of vertices ($a,b,c$), if it is in $\mathcal{S}$, then orient the triple as $a\sto b \getss c$.
    \item[$\mathcal{R}4'$] If $\pi = \langle d, \ldots,a,b,c \rangle$ is a discriminating path between $d$ and $c$ for $b$ in $\mathcal{P}$, and $b \cseg c$; then if the triple $(d,b,c)$ is not present in $\mathcal{S}$, orient $b \cseg c$ as $b \rightarrow c$; otherwise, orient the triple ($a,b,c$) as $a \leftrightarrow b \leftrightarrow c$.
\end{itemize} 

Recall that the \emph{parametrizing sets} of $\G$, denoted by $\mathcal{S}(\G)$ is defined as:
$$\mathcal{S}(\G) = \{H\cup A:H \in \mathcal{H}(\G)\text{ and } \emptyset \subseteq A \subseteq \tail(H)\}.$$

We also define $\mathcal{S}_{k}(\mathcal{G})$ for $k \geq 2$ as: 
$$\mathcal{S}_{k}(\mathcal{G}) = \{S \in \mathcal{S}(\mathcal{G}): 2 \leq \abs{S} \leq k\}.$$ 

 In particular, Corollary 3.2.1 in \citet{hu2020faster} shows that two MAGs are Markov equivalent if and only if they agree on the following sets: 
\begin{align*}
\Tilde{\mathcal{S}_{3}}(\mathcal{G}) &= \{S \in \mathcal{S}_{3}(\mathcal{G}) \mid \text{there are 1 or 2 adjacencies} \\ 
 &\qquad \qquad \text{among the vertices in }S\}.
\end{align*}

Hence $\Tilde{\mathcal{S}_{3}}(\mathcal{G})$ is a set representation of the Markov equivalence class of $\G$ and we should be above to construct the PAG given only $\Tilde{\mathcal{S}_{3}}(\mathcal{G})$.

\begin{proposition}\label{adapted rules2}
The orientation rules: $\mathcal{R}0'$, $\mathcal{R}1$, $\mathcal{R}2$, $\mathcal{R}3$, $\mathcal{R}4'$ and $\mathcal{R}5$ to $\mathcal{R}10$ are sound and complete for constructing $\mathcal{P}_{\mathcal{S}}$ given $\mathcal{S}$. Further if we are only given $\Tilde{\mathcal{S}}_3$, these rules are sufficient to construct $\mathcal{P}_{\mathcal{S}}$.
\end{proposition}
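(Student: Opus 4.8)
The plan is to lean on the soundness and completeness of the original rules $\mathcal{R}0$–$\mathcal{R}10$ proved by \citet{zhang2012characterization}, and to reduce everything to a step-for-step comparison of the two adapted rules against their originals. First I would observe that, of all the rules, only $\mathcal{R}0$ and $\mathcal{R}4$ ever consult the graph $\G$ itself; the remaining rules ($\mathcal{R}1$, $\mathcal{R}2$, $\mathcal{R}3$ and $\mathcal{R}5$–$\mathcal{R}10$) act purely on the current partially mixed graph. Consequently, if I can show that $\mathcal{R}0'$ fires exactly when $\mathcal{R}0$ fires and that $\mathcal{R}4'$ makes the same orientation choice as $\mathcal{R}4$ using only $\mathcal{S}$, then the two procedures perform identical modifications in identical situations, terminate at the same graph, and so the adapted procedure inherits soundness and completeness. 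Since $\mathcal{S}$ determines the Markov equivalence class by Theorem \ref{thm: ME of MAGs}, this also certifies that the output equals $\mathcal{P}_{\mathcal{S}}=\mathcal{P}_{\G}$ for any representative $\G$ with $\mathcal{S}(\G)=\mathcal{S}$.

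For $\mathcal{R}0'$ I would prove that an unshielded triple $(a,b,c)$ is a collider in $\G$ if and only if $\{a,b,c\}\in\mathcal{S}(\G)$, which follows directly from Proposition \ref{prop:parametrizing set and independence}. The only nonadjacent pair in $\{a,b,c\}$ is $(a,c)$, and the containment requirement $\{a,b,c\}\subseteq C\cup\{a,c\}$ forces $b\in C$. If $b$ is a noncollider, a separator of $a,c$ containing $b$ exists, so $\{a,b,c\}\notin\mathcal{S}(\G)$; if $b$ is a collider, every separator of $a,c$ excludes $b$ while the two adjacent pairs cannot be separated at all, so $\{a,b,c\}\in\mathcal{S}(\G)$.

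The main obstacle is the corresponding statement for $\mathcal{R}4'$: for a discriminating path $\pi=\langle d,\ldots,a,b,c\rangle$ for $b$, I must show that $b$ is a collider (equivalently, $b\to c$ is absent) if and only if $\{d,b,c\}\in\mathcal{S}(\G)$. Applying Proposition \ref{prop:parametrizing set and independence} to $W=\{d,b,c\}$, the pair $(b,c)$ is adjacent and hence useless, while for $(d,c)$ the containment condition forces $b\in C$; the discriminating-path separation facts recalled in the proof of Lemma \ref{lemma: not I-maps for non-equivalent MAGs with same skeleton} (following \citet{richardson2002}) state exactly that a separator of $d,c$ containing $b$ exists iff $b$ is a noncollider, which settles that direction. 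The delicate part is the collider case, where I must additionally rule out the pair $(d,b)$: I would argue that when $a\leftrightarrow b\leftrightarrow c$, the subpath $\langle d,\ldots,a,b\rangle$ is a collider path whose interior vertices are all parents of $c$, so conditioning on any set containing $c$ activates every such collider and m-connects $d$ and $b$; hence no separator of $d,b$ can contain $c$, the containment condition fails for this pair, and therefore $\{d,b,c\}\in\mathcal{S}(\G)$. Some care is needed over whether $d$ and $b$ happen to be adjacent, but in either case that pair contributes nothing to the criterion.

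Finally, for the $\Tilde{\mathcal{S}}_3$ refinement I would verify that every membership query raised by the adapted rules concerns a set already recorded there. The skeleton is recovered from the size-two members of $\mathcal{S}$, which are precisely the adjacencies and lie in $\Tilde{\mathcal{S}}_3$. Every query of $\mathcal{R}0'$ is a three-set $\{a,b,c\}$ carrying the two adjacencies $a\!-\!b$ and $b\!-\!c$, and every query of $\mathcal{R}4'$ is a three-set $\{d,b,c\}$ with $b\!-\!c$ adjacent and $d,c$ nonadjacent, hence carrying one or two adjacencies. All such sets fall within the scope of $\Tilde{\mathcal{S}}_3$, so each query can be answered from $\Tilde{\mathcal{S}}_3$ alone, which completes the argument.
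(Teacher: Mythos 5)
Your proposal is correct, but it takes a more self-contained route than the paper. The paper's own proof is essentially a one-line citation: it invokes Proposition 3.4 of \citet{hu2020faster}, which already packages the two equivalences you prove by hand (an unshielded triple is a collider iff it lies in $\mathcal{S}(\G)$, and for a discriminating path $\langle d,\ldots,a,b,c\rangle$ the vertex $b$ is a collider iff $\{d,b,c\}\in\mathcal{S}(\G)$), together with Corollary 3.2.1 of the same reference for the $\Tilde{\mathcal{S}}_3$ claim; the only additional remark the paper makes is that a discriminating path present in $\mathcal{P}_{\mathcal{S}}$ is present in every MAG in $[\mathcal{S}]$, so the query posed by $\mathcal{R}4'$ is well defined across the class. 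Your derivation of both equivalences directly from Proposition \ref{prop:parametrizing set and independence} is sound: the reduction to the pair $(a,c)$ (resp.\ $(d,c)$) with the containment condition forcing $b$ into the separator, and, in the collider case of $\mathcal{R}4'$, the elimination of the pair $(d,b)$ by observing that the collider subpath into $b$ consists of parents of $c$ and is therefore activated by any conditioning set containing $c$, is exactly the content that the cited proposition abstracts away. What your version buys is independence from the external reference and an explicit check of the one genuinely delicate point (the $(d,b)$ pair); what the paper's version buys is brevity. Your final paragraph on $\Tilde{\mathcal{S}}_3$ correctly observes that every membership query concerns a set of size two or three with one or two adjacencies, matching the definition of $\Tilde{\mathcal{S}}_3$, so no gap remains.
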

\begin{proof}
This follows immediately from Proposition 3.4 in \citet{hu2020faster}. Note that if a discriminating path is present in $\mathcal{P}_{\mathcal{S}}$ then it is present in all MAGs in $[\mathcal{S}]$.
\end{proof} 

\subsection{Possible Improvement}

The fact that an unshielded triple is in $\mathcal{S}$ if and only it is an unshielded collider allows us to identify two invariant arrowheads. In addition to this, one may notice that apart from unshielded triples, triples with one adjacency in $\mathcal{S}$ also inherit information on invariant arrowheads.

\begin{lemma}\label{triple with one adjacency}
For a triple $\{a,b,c\}$ in $\mathcal{S}$ with one adjacency (WLOG, $a$ and $b$ are adjacent), any MAGs in $[\mathcal{S}]$ has the edge $a \leftrightarrow b$. In other words, $a \leftrightarrow b$ in $\mathcal{P}_{\mathcal{S}}$.
\end{lemma}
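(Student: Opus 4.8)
The plan is to fix an arbitrary MAG $\G$ with $\mathcal{S}(\G)=\mathcal{S}$ and show that the single edge between $a$ and $b$ is bidirected. Since the skeleton is invariant across the equivalence class, in every member $a,b$ are adjacent while $c$ is adjacent to neither, so establishing that this edge is $\leftrightarrow$ in an arbitrary member shows the two arrowheads are invariant, i.e.\ $a\leftrightarrow b$ is present in $\mathcal{P}_{\mathcal{S}}$. Because we consider only directed MAGs, the edge is one of $a\to b$, $b\to a$, or $a\leftrightarrow b$, so it suffices to rule out the two directed orientations.

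First I would translate the membership $\{a,b,c\}\in\mathcal{S}$ into separation statements via Proposition \ref{prop:parametrizing set and independence}. Its contrapositive says that no pair of vertices of $\{a,b,c\}$ can be m-separated by a set $C$ that excludes the pair and contains the remaining vertex. Specialising to the two non-adjacent pairs gives: for every $C$ with $a\in C$ and $b,c\notin C$ we have $b\not\perp_m c\mid C$ (call this $(\star)$), and symmetrically for every $C$ with $b\in C$ and $a,c\notin C$ we have $a\not\perp_m c\mid C$.

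The heart of the argument is then a single observation combining a simple ancestry fact with maximality. Suppose $a\to b$. Then $a\in\an_{\G}(b)\subseteq\an_{\G}(\{b,c\})$, so $a$ lies in the proper-ancestor set $D:=\an_{\G}(\{b,c\})\setminus\{b,c\}$. Since $b$ and $c$ are non-adjacent, maximality of $\G$ guarantees they are m-separated, and I would invoke the standard property of maximal ancestral graphs that the proper-ancestor set itself is a valid separating set, so $b\perp_m c\mid D$. But $D$ contains $a$ and excludes $b,c$, so $(\star)$ forces $b\not\perp_m c\mid D$ --- a contradiction. The orientation $b\to a$ is excluded symmetrically, using the other separation statement together with the ancestral separator of the non-adjacent pair $\{a,c\}$. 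Hence the only surviving orientation is $a\leftrightarrow b$.

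I expect the main obstacle to be the clean justification of the maximality step, namely that a non-adjacent pair in a MAG is m-separated specifically by its set of proper ancestors, rather than merely by some unspecified set. This is exactly what lets me guarantee that the tail vertex of a hypothesised directed edge actually lands inside a separating set and thereby collides with the non-separability furnished by Proposition \ref{prop:parametrizing set and independence}. The remaining bookkeeping --- extracting $(\star)$ from the set condition $W\subseteq C\cup\{\text{pair}\}$, and checking $a\notin\{b,c\}$ so that the membership statements apply --- is routine, and notably this route avoids any head/tail case analysis.
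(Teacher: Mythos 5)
Your proof is correct, but it takes a genuinely different route from the paper's. The paper argues structurally: writing $\{a,b,c\}=H\cup A$ with $H$ a head and $A\subseteq\tail(H)$, it rules out $|H|=1$ (the tail of a singleton head is its parent set by ancestrality, which would force two adjacencies), deduces $H=\{a,b\}$ and hence $a\leftrightarrow b$ when $|H|=2$ (two-element heads are exactly sibling pairs), and notes that for $|H|=3$ barrenness forbids any ancestral relation inside the head, so the one edge cannot be directed. You instead stay entirely on the independence side: you convert $\{a,b,c\}\in\mathcal{S}$ into non-separability statements via Proposition \ref{prop:parametrizing set and independence} and collide them with the fact that a non-adjacent pair in a MAG is m-separated by its set of proper joint ancestors. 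The step you flag as the main obstacle is indeed the crux, but it is a standard, citable fact --- it is precisely the soundness half of the pairwise Markov property for MAGs (Richardson and Spirtes, 2002; see also \citet{sadeghi2014markov}, which the paper already invokes), provable by the usual observation that an m-connecting path given $\an(\{b,c\})\setminus\{b,c\}$ must be an inducing path, contradicting maximality. Your bookkeeping is right: $a\to b$ puts $a$ in the separator for the non-adjacent pair $\{b,c\}$, which is exactly the witness Proposition \ref{prop:parametrizing set and independence} needs to expel $\{a,b,c\}$ from $\mathcal{S}$. What each approach buys: the paper's proof is shorter and exposes the structural reason (the head of the triple is forced to be $\{a,b\}$ or $\{a,b,c\}$), while yours avoids the head/tail case analysis entirely and only uses the separation characterization of $\mathcal{S}$, at the price of importing the proper-ancestor-separator lemma. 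One small point worth making explicit if you write this up: you should note (as you implicitly do) that the restriction to directed MAGs is what limits the edge to the three orientations you consider, since an undirected edge $a-b$ would create no ancestral relation and would not be excluded by your argument.
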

\begin{proof}
Consider the head of the triple $\{a,b,c\}$. It cannot be a single vertex because $\{a,b,c\}$ has only one adjacency and we know the tail of a single vertex are its parents. If the head is of size 2, it has to be $a$ and $b$, because we know a pair of vertices $\{a,b\}$ is a head if and only if $a \leftrightarrow b$. If the head is of size 3 then we also have $a \leftrightarrow b$, because there is no ancestral relation inside a head.
\end{proof}

For the arrowheads identified in Step 6, we can recover $7 \leftrightarrow 8$ directly by Lemma \ref{triple with one adjacency}. Note that the arrowhead at 8 on the edge from 6 can be deduced from the fact that, were it a tail, 
the set $\{2,7,8\}$ would not be in $\Tilde{\mathcal{S}}_3$.

Here we give an example on how to recover the PAG given a parametrizing set $\Tilde{\mathcal{S}}_3$. Suppose we are given the $\Tilde{\mathcal{S}}_3$ in Table \ref{S3tilda}.
\begin{table}[ht]
\caption{$\Tilde{\mathcal{S}}_3$}\label{S3tilda}
\begin{center}
\begin{tabular}{|c|c|c|c|} 
  \hline
  \multirow{1}{*}[-1pt]{} & \multirow{1}{*}[-1pt]{adjacencies} & \multirow{1}{*}[-1pt]{unshielded colliders} & \multirow{1}{*}[-1pt]{triples with one adjacency} \\ [0.5ex] 
  \hline
  \multirow{3}{*}{$\Tilde{\mathcal{S}}_3$} & $\{1,2\},\{1,3\},\{2,4\}$ & $\{2,5,6\},\{5,6,8\}$ & $\{2,7,8\}$\\
  & $\{3,4\},\{2,5\},\{5,6\}$ &$\{5,7,8\}$ & \\
  &$\{5,7\},\{6,7\},\{6,8\}$ &  &\\ 
  &$\{7,8\}$& &\\ [1ex] 
 \hline
\end{tabular}
\end{center}
\end{table}

We first identify all the invariant tails. The steps below correspond to the graphs in Figure \ref{PAG}:
\begin{itemize}
    \item[Step 1] Begin with a graph with the adjacencies in $\Tilde{\mathcal{S}}_3$ and all the edges are $\dcircleedge$;
    \item[Step 2] Apply $\mathcal{R}0'$ to identify the invariant arrowhead from unshielded triples $\{2,5,6\},\{5,6,8\}, \{5,7,8\}$;
    \item[Step 3] Apply $\mathcal{R}1$ to $\{2,5,7\}$ so $5 \circlearrow 7$ becomes $5 \rightarrow 7$;
    \item[Step 4] Apply $\mathcal{R}2$ to the triple $\{6,5,7\}$ to recover $6 \circlearrow 7$;
    \item[Step 5] The path $\pi = \langle 2,5,6,7 \rangle$ forms a discriminating path for $6$ thus by $\mathcal{R}4'$ ($\{2,6,7\}$ is not in $\Tilde{\mathcal{S}}_3$), we can recover $6 \rightarrow 7$;
    \item[Step 6] The path $\pi = \langle 2,5,6,8,7 \rangle$ forms a discriminating path for $8$, thus by $\mathcal{R}4'$ ($\{2,7,8\}$ is in $\Tilde{\mathcal{S}}_3$), we can recover $6 \leftrightarrow 8 \leftrightarrow 7$;
\end{itemize}

And no further arrowhead can be identified. We now identify the invariant tails:
\begin{itemize}
    \item[Step 7] Apply $\mathcal{R}5$ to $1 \dcircleedge 2 \dcircleedge 4 \dcircleedge 3 \dcircleedge 1$. So all the circle edges become undirected edges;
    \item[Step 8] Apply $\mathcal{R}6$ to $4-2 \circlearrow 5$ to recover $2 \rightarrow
     5$.
\end{itemize}
And we can see that there is no circle mark in the graph now so the last figure in Figure \ref{PAG} is the PAG from the parametrizing set $\Tilde{\mathcal{S}}_3$ in Table \ref{S3tilda}. Also this is the only MAG that has the corresponding $\Tilde{\mathcal{S}}_3$.

From Lemma \ref{triple with one adjacency}, we may argue the edge mark by the presence or missingness of certain triples in $\Tilde{\mathcal{S}}_3$. For example in Step 5, if we have $6 \leftrightarrow 7$ then the triple $\{2,6,7\}$ would be in $\Tilde{\mathcal{S}}_3$, which is not true. 
\begin{figure}
  \begin{tikzpicture}
  [rv/.style={circle, draw, thick, minimum size=2mm, inner sep=0.8mm}, node distance=17mm, >=stealth]
  \pgfsetarrows{latex-latex};
\begin{scope}
  \node[rv]  (1)            {$1$};
  \node[rv, right of=1] (2) {$2$};
  \node[rv, above of=1] (3) {$3$};
  \node[rv, above of=2] (4) {$4$};
  \node[rv, right of=2] (5) {$5$};
  \node[rv, right of=5] (6) {$6$};
  \node[rv, right of=6] (7) {$7$};
  \node[rv, above right of=6, xshift = -4mm] (8) {$8$};
  \draw[o-o, very thick, black] (1) -- (2);
  \draw[o-o, very thick, black] (1) -- (3);
  \draw[o-o, very thick, black] (3) -- (4);
  \draw[o-o, very thick, black] (2) -- (4); 
  
  \draw[o-o, very thick, black] (2) -- (5);
  \draw[o-o, very thick, black] (5) -- (6);
  \draw[o-o, very thick, black] (6) -- (7);
  \draw[o-o, very thick, black] (6) -- (8);
  \draw[o-o, very thick, black] (7) -- (8);
  \draw[o-o, very thick, black] (5) to[bend left=-30](7);
  \node[below of=5, yshift = 1cm] {Step 1};
\end{scope}
\begin{scope}[xshift = 9cm]
  \node[rv]  (1)            {$1$};
  \node[rv, right of=1] (2) {$2$};
  \node[rv, above of=1] (3) {$3$};
  \node[rv, above of=2] (4) {$4$};
  \node[rv, right of=2] (5) {$5$};
  \node[rv, right of=5] (6) {$6$};
  \node[rv, right of=6] (7) {$7$};
  \node[rv, above right of=6, xshift = -4mm] (8) {$8$};
  \draw[o-o, very thick, black] (1) -- (2);
  \draw[o-o, very thick, black] (1) -- (3);
  \draw[o-o, very thick, black] (3) -- (4);
  \draw[o-o, very thick, black] (2) -- (4);
  
  \draw[o->, very thick, blue] (2) -- (5);
  \draw[<->, very thick, red]  (5) -- (6);
  \draw[o-o, very thick, black] (6) -- (7);
  \draw[<-o, very thick, blue] (7) -- (8);
  \draw[<-o, very thick, blue] (6) -- (8);
  \draw[o->, very thick, blue] (5) to[bend left=-30](7);
  \node[below of=5, yshift = 1cm] {Step 2};
\end{scope}
\begin{scope}[yshift = -3cm]
  \node[rv]  (1)            {$1$};
  \node[rv, right of=1] (2) {$2$};
  \node[rv, above of=1] (3) {$3$};
  \node[rv, above of=2] (4) {$4$};
  \node[rv, right of=2] (5) {$5$};
  \node[rv, right of=5] (6) {$6$};
  \node[rv, right of=6] (7) {$7$};
  \node[rv, above right of=6, xshift = -4mm] (8) {$8$};
    \draw[o-o, very thick, black] (1) -- (2);
  \draw[o-o, very thick, black] (1) -- (3);
  \draw[o-o, very thick, black] (3) -- (4);
  \draw[o-o, very thick, black] (2) -- (4);
  
  \draw[o->, very thick, blue] (2) -- (5);
  \draw[<->, very thick, red] (5) -- (6);
  \draw[o-o, very thick, black] (6) -- (7);
  \draw[<-o, very thick, blue] (6) -- (8);
  \draw[<-o, very thick, blue] (7) -- (8);
  \draw[->, very thick, blue] (5) to[bend left=-30](7);
  \node[below of=5, yshift = 1cm] {Step 3};
\end{scope}

\begin{scope}[xshift = 9cm, yshift = -3cm]
 \node[rv]  (1)            {$1$};
  \node[rv, right of=1] (2) {$2$};
  \node[rv, above of=1] (3) {$3$};
  \node[rv, above of=2] (4) {$4$};
  \node[rv, right of=2] (5) {$5$};
  \node[rv, right of=5] (6) {$6$};
  \node[rv, right of=6] (7) {$7$};
  \node[rv, above right of=6, xshift = -4mm] (8) {$8$};
  \draw[o-o, very thick, black] (1) -- (2);
  \draw[o-o, very thick, black] (1) -- (3);
  \draw[o-o, very thick, black] (3) -- (4);
  \draw[o-o, very thick, black] (2) -- (4);
  
  \draw[o->, very thick, blue] (2) -- (5);
  \draw[<->, very thick, red]  (5) -- (6);
  \draw[o->, very thick, blue] (6) -- (7);
  \draw[<-o, very thick, blue] (6) -- (8);
  \draw[<-o, very thick, blue] (7) -- (8);
  \draw[->, very thick, blue]  (5) to[bend left=-30](7);
  \node[below of=5, yshift = 1cm] {Step 4};
\end{scope}

\begin{scope}[yshift = -6cm]
  \node[rv]  (1)            {$1$};
  \node[rv, right of=1] (2) {$2$};
  \node[rv, above of=1] (3) {$3$};
  \node[rv, above of=2] (4) {$4$};
  \node[rv, right of=2] (5) {$5$};
  \node[rv, right of=5] (6) {$6$};
  \node[rv, right of=6] (7) {$7$};
  \node[rv, above right of=6, xshift = -4mm] (8) {$8$};
  \draw[o-o, very thick, black] (1) -- (2);
  \draw[o-o, very thick, black] (1) -- (3);
  \draw[o-o, very thick, black] (3) -- (4);
  \draw[o-o, very thick, black] (2) -- (4);
  
  \draw[o->, very thick, blue] (2) -- (5);
  \draw[<->, very thick, red]  (5) -- (6);
  \draw[->, very thick, blue]  (6) -- (7);
  \draw[<-o, very thick, blue] (6) -- (8);
  \draw[<-o, very thick, blue] (7) -- (8);
  \draw[->, very thick, blue]  (5) to[bend left=-30](7);
  \node[below of=5, yshift = 1cm] {Step 5};
\end{scope}

\begin{scope}[xshift = 9cm, yshift = -6cm]
  \node[rv]  (1)            {$1$};
  \node[rv, right of=1] (2) {$2$};
  \node[rv, above of=1] (3) {$3$};
  \node[rv, above of=2] (4) {$4$};
  \node[rv, right of=2] (5) {$5$};
  \node[rv, right of=5] (6) {$6$};
  \node[rv, right of=6] (7) {$7$};
  \node[rv, above right of=6, xshift = -4mm] (8) {$8$};
  \draw[o-o, very thick, black] (1) -- (2);
  \draw[o-o, very thick, black] (1) -- (3);
  \draw[o-o, very thick, black] (3) -- (4);
  \draw[o-o, very thick, black] (2) -- (4);
  
  \draw[o->, very thick, blue] (2) -- (5);
  \draw[<->, very thick, red]  (5) -- (6);
  \draw[->, very thick, blue]  (6) -- (7);
  \draw[<->, very thick, red]  (6) -- (8);
  \draw[<->, very thick, red]  (7) -- (8);
  \draw[->, very thick, blue]  (5) to[bend left=-30](7);
  \node[below of=5, yshift = 1cm] {Step 6};
\end{scope}

\begin{scope}[yshift = -9cm]
  \node[rv]  (1)            {$1$};
  \node[rv, right of=1] (2) {$2$};
  \node[rv, above of=1] (3) {$3$};
  \node[rv, above of=2] (4) {$4$};
  \node[rv, right of=2] (5) {$5$};
  \node[rv, right of=5] (6) {$6$};
  \node[rv, right of=6] (7) {$7$};
  \node[rv, above right of=6, xshift = -4mm] (8) {$8$};
  \draw[-, very thick, black] (1) -- (2);
  \draw[-, very thick, black] (1) -- (3);
  \draw[-, very thick, black] (3) -- (4);
  \draw[-, very thick, black] (2) -- (4);
  
  \draw[o->, very thick, blue] (2) -- (5);
  \draw[<->, very thick, red]  (5) -- (6);
  \draw[->, very thick, blue]  (6) -- (7);
  \draw[<->, very thick, red]  (6) -- (8);
  \draw[<->, very thick, red]  (7) -- (8);
  \draw[->, very thick, blue]  (5) to[bend left=-30](7);
  \node[below of=5, yshift = 1cm] {Step 7};
\end{scope}

\begin{scope}[xshift = 9cm, yshift = -9cm]
  \node[rv]  (1)            {$1$};
  \node[rv, right of=1] (2) {$2$};
  \node[rv, above of=1] (3) {$3$};
  \node[rv, above of=2] (4) {$4$};
  \node[rv, right of=2] (5) {$5$};
  \node[rv, right of=5] (6) {$6$};
  \node[rv, right of=6] (7) {$7$};
  \node[rv, above right of=6, xshift = -4mm] (8) {$8$};
  \draw[-, very thick, black] (1) -- (2);
  \draw[-, very thick, black] (1) -- (3);
  \draw[-, very thick, black] (3) -- (4);
  \draw[-, very thick, black] (2) -- (4);
  
  \draw[->, very thick, blue] (2) -- (5);
  \draw[<->, very thick, red]  (5) -- (6);
  \draw[->, very thick, blue]  (6) -- (7);
  \draw[<->, very thick, red]  (6) -- (8);
  \draw[<->, very thick, red]  (7) -- (8);
  \draw[->, very thick, blue]  (5) to[bend left=-30](7);
  \node[below of=5, yshift = 1cm] {Step 8};
\end{scope}
\end{tikzpicture}
\caption{Steps for recovering the PAG given the $\Tilde{\mathcal{S}}_3$ in Table \ref{S3tilda}}
\label{PAG}
\end{figure}
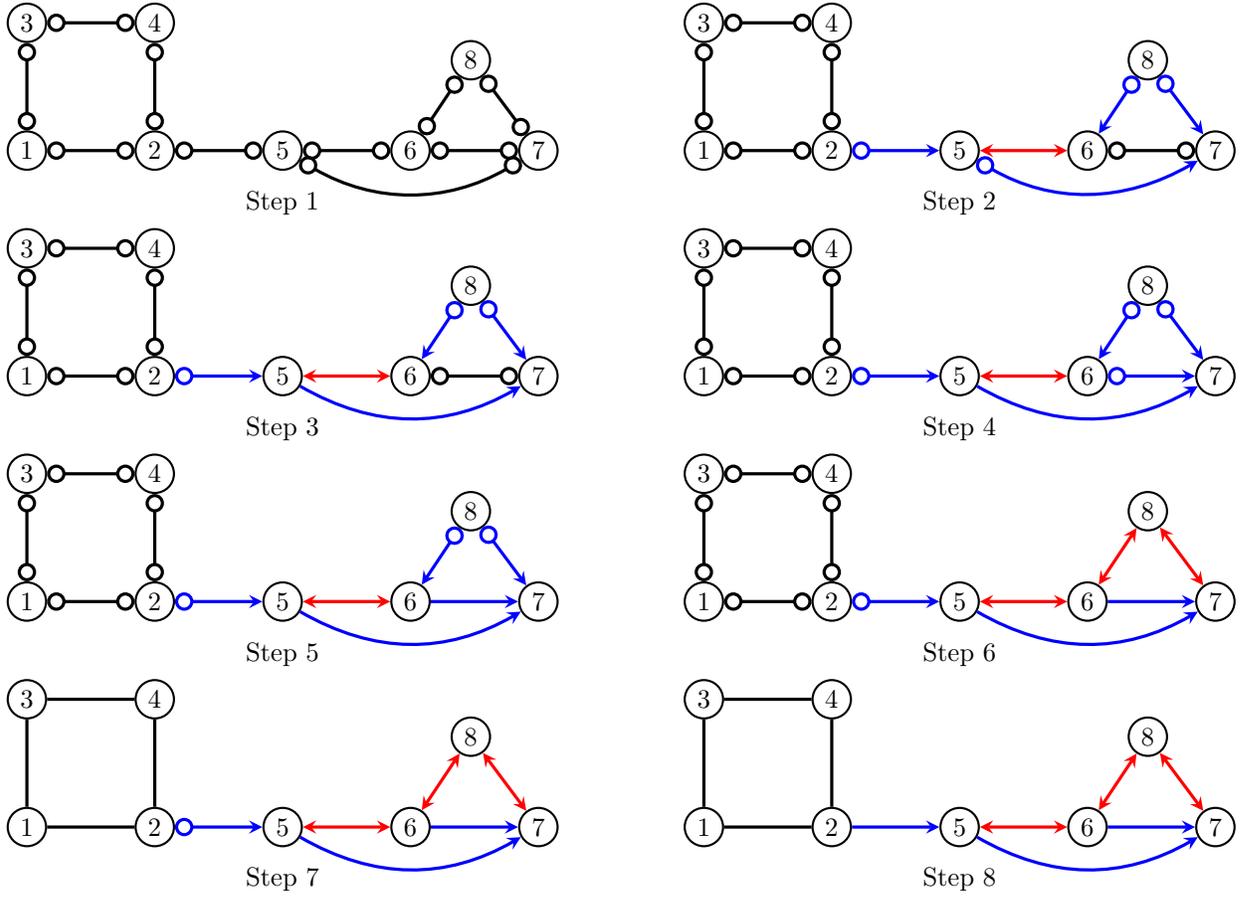

\section{Missing algorithm in Section \ref{sec: move between MEC}} \label{sec: missing algorithm}

\begin{algorithm}\SetAlgoRefName{\FuncSty{UC-triples-delete}}
\caption{}
\label{algo: obtain possible unshielded colliders when deleting adjacency}
\SetAlgoLined
\SetKw{KwDef}{define}
\KwIn{A PAG $\mathcal{P}$, $\{i,j\}$}
\KwResult{ A incomplete PAG $\mathcal{P}'$, $UC^{p}_{ij}$}
Initialize $\mathcal{P}'$ with only $\dcircleedge$ and the same skeleton as $\mathcal{P}$\;
Delete $i \dcircleedge j$ from $\mathcal{P}'$\;
Let $UC^{p}_{ij}= \emptyset$\;
Apply $\mathcal{R}0$ to $\mathcal{P}'$ by considering all triples that are both unshielded in $\mathcal{P}$ and $\mathcal{P}'$, and are colliders in $\mathcal{P}$\;
Let $A$ be sets of nodes that are adjacent to $i,j$ in $\mathcal{P}$\; 
\For{$a \in A$ }{
  \eIf{$i \sto a \getss j$ in $\mathcal{P}$}{
  orient $i \sto a \getss j$ in $\mathcal{P}'$;
  }{
  \If{ $ \quad i \sun a \suns j$ and $i \suns a \uns j$ not in $\mathcal{P}$}{
  Add $\{i,j,a\}$ to $UC^{p}_{ij}$;
  }
  } 
}
\Return{$\mathcal{P}'$,$UC^{p}_{ij}$}
\end{algorithm}

\begin{algorithm}\SetAlgoRefName{\FuncSty{ Branch-for-$\mathcal{R}4$-delete}}
\caption{}
\label{algo: create branch for R4 when deleting adjacency}
\SetAlgoLined
\SetKw{KwDef}{define}
\KwIn{A PAG $\mathcal{P}$ and an incomplete PAG $\mathcal{P}'$}
\KwResult{An arrow complete PAG $\mathcal{P}'$ or two incomplete PAGs ($\mathcal{P}'_c$, $\mathcal{P}'_n$)}
Exhaustively apply $\mathcal{R}1-\mathcal{R}4$ to $\mathcal{P}'$\;
\If{$\mathcal{R}4$ is called for an edge $b \cseg c$}{
  \eIf{$\{d,b,c\} \in \Sset(\mathcal{P})$ or $b \uns c$ in $\mathcal{P}$}{
  orient $b$ as collider or noncollider in $\mathcal{P}'$, respectively\;
  keep orienting\;
  }{
  orient $b$ as collider and noncollider, and let the resulting two incomplete PAGs be $\mathcal{P}'_c$ and $\mathcal{P}'_n$, respectively\;
  \Return{($\mathcal{P}'_c$, $\mathcal{P}'_n$)}
  }
}

\Return{$\mathcal{P}'$}
\end{algorithm}

\begin{algorithm}\SetAlgoRefName{\FuncSty{Delete-adj}}
\caption{}
\label{algo: deleting adjacency}
\SetAlgoLined
\SetKw{KwDef}{define}
\KwIn{A PAG $\mathcal{P}$ and an adjacency $\{i,j\}$ to delete}
\KwResult{A set of arrow complete PAGs}
$\mathcal{P}'$,$UC^{p}_{ij}=$  \ref{algo: obtain possible unshielded colliders when deleting adjacency}($\mathcal{P}$,$\{i,j\}$) \;
$S = \{\mathcal{P}'\}$\;
\For{$UC \subseteq UC^{p}_{ij}$}{
Apply $\mathcal{R}0$ to $\mathcal{P}'$ with additional unshielded triples $UC$\;
Add the resulting incomplete PAG to $S$.
}
$O = \emptyset$\;
\For{$\mathcal{P}' \in S$}{
$K = $ \ref{algo: create branch for R4 when deleting adjacency} ($\mathcal{P},\mathcal{P}'$)\;
\While{$|K| > 0$}{
Let $\mathcal{P}' \in K$; $K' = $ \ref{algo: create branch for R4 when deleting adjacency} ($\mathcal{P},\mathcal{P}'$)\;
\eIf{$|K'| = 1$}{
$O = O \cup K'$;
$K = K \setminus \{\mathcal{P}'\}$
}{
$K = K \cup K'$
}
}
}
\Return{$O$}
\end{algorithm}

\begin{algorithm}\SetAlgoRefName{\FuncSty{Branch-for-$\mathcal{R}4$-turning}}
\caption{}
\label{algo: create branch for R4 for turning phase}
\SetAlgoLined
\SetKw{KwDef}{define}
\KwIn{An incomplete PAG $\mathcal{P}'$}
\KwResult{An arrow complete PAG $\mathcal{P}'$ or two incomplete PAGs ($\mathcal{P}'_c$, $\mathcal{P}'_n$)}

Exhaustively apply $\mathcal{R}1-\mathcal{R}4$ to $\mathcal{P}'$\;
\If{$\mathcal{R}4$ is called for an edge $b \cseg c$}{
  orient $b$ as collider and noncollider, and let the resulting two incomplete PAGs be $\mathcal{P}'_c$ and $\mathcal{P}'_n$, respectively\;
  \Return{($\mathcal{P}'_c$, $\mathcal{P}'_n$)}
  
}

\Return{$\mathcal{P}'$}
\end{algorithm}

\begin{algorithm}\SetAlgoRefName{\FuncSty{Turning}}
\caption{}
\label{algo: Turning phase}
\SetAlgoLined
\SetKw{KwDef}{define}
\KwIn{An arrow complete PAG $\mathcal{P}$, max changes $t$}
\KwResult{A set of arrow complete PAGs}
Let $UT$ be the set of unshielded triples in $\mathcal{P}$\;
$S = \{\mathcal{P}'\}$\;
\For{$UT_{turn} \subseteq UT$ and $|UT_{turn}| \leq t$}{
Change the orientation status of triples in $UT_{turn}$ in $\mathcal{P}'$\;
Add the resulting incomplete PAG to $S$.
}
$O = \emptyset$\;
\For{$\mathcal{P}' \in S$}{
$K = $\ref{algo: create branch for R4 for turning phase} ($\mathcal{P},\mathcal{P}'$)\;
\While{$|K| > 0$}{
Let $\mathcal{P}' \in K$; $K' = $ \ref{algo: create branch for R4 for turning phase} ($\mathcal{P},\mathcal{P}'$)\;
\eIf{$|K'| = 1$}{
$O = O \cup K'$;
$K = K \setminus \{\mathcal{P}'\}$
}{
$K = K \cup K'$
}
}
}
\Return{$O$}
\end{algorithm}

\newpage
\section{Extra plots} \label{Appendix: extra plots}

\begin{figure}
    \centering
    \includegraphics[scale=0.38]{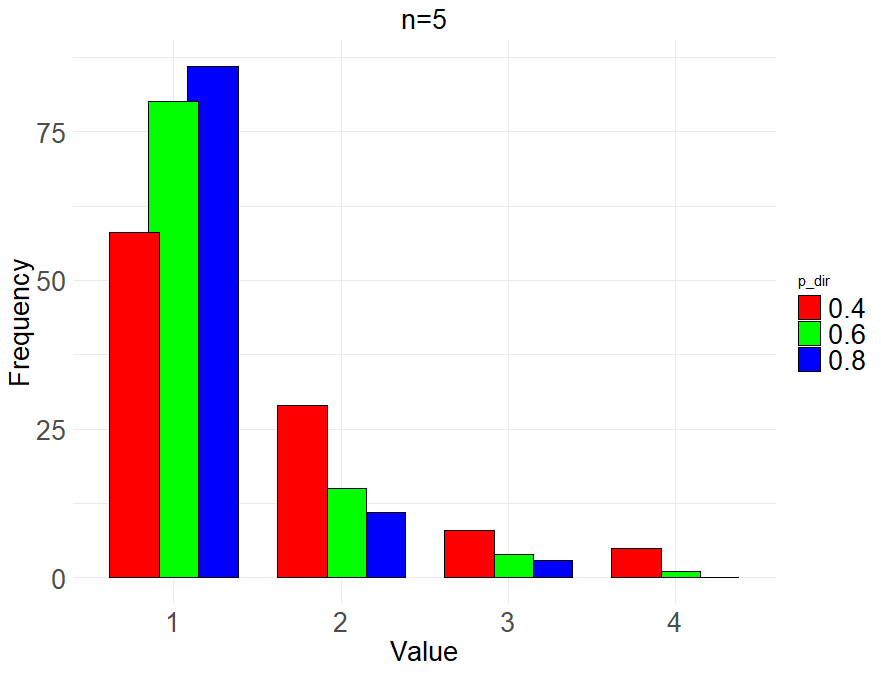}

    \caption{Histogram of maximal head size for $n=5$}
    \label{fig: head_n=5}
\end{figure}

Figure \ref{fig: head_n=5} is the histogram plot of maximal head size for $n=5$.

In addition, we provide extra plots for comparison between variations of \ref{algo: GESMAG} and other MAG learning algorithms, in terms of \emph{accuracy}, \emph{true positive rate} (TPR) and \emph{false positive rate} (FPR) of adjacencies and each kind of edge in a directed PAG: directed ($\rightarrow$), bidirected ($\leftrightarrow$), partially directed ($\circlearrow$) and not directed ($\dcircleedge$)
Since the accuracy is computed by dividing possible number of edges, which is large compared to the number of edges that are actually present in graphs, we suggest that the TPR and FPR plots better reflect the quality of the algorithms. 

For plots of adjacencies in Figures \ref{fig: edge_acc}, \ref{fig: edge_TPR}, and \ref{fig: edge_FPR}, our algorithm
 \ref{algo: GESMAG} outperforms the others. The low TPR value of FCI and GFCI suggests that the confidence level should be increased. The baseline and hybrid versions of GPS show poor performance in the edge FPR plot, suggesting that these algorithms add wrong edges more often than others.

\begin{figure}
    \centering
    \includegraphics[scale=0.38]{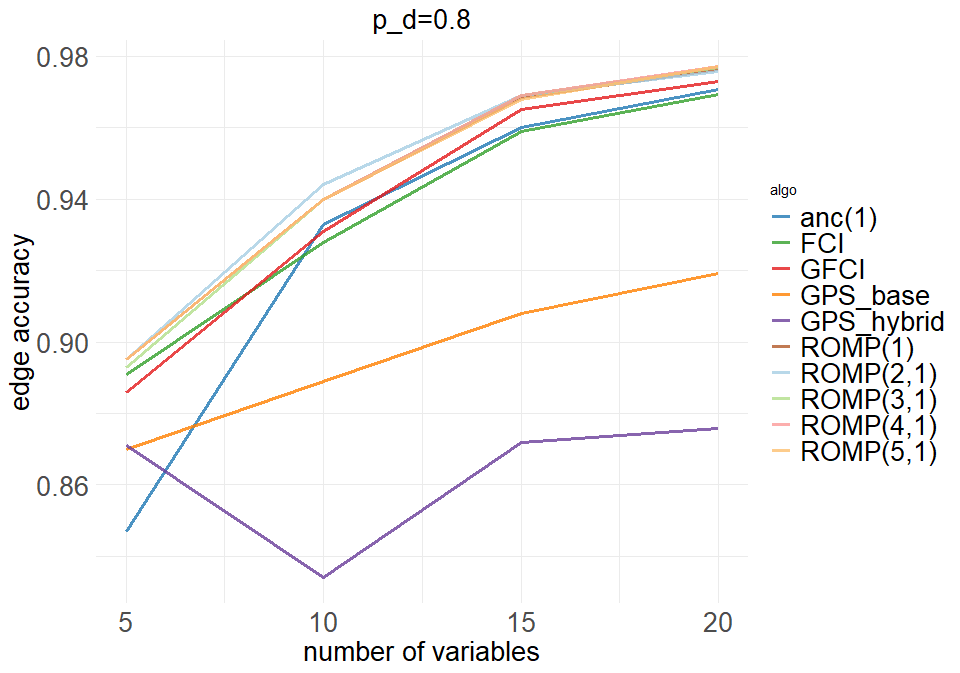}

    \medskip
    \includegraphics[scale=0.38]{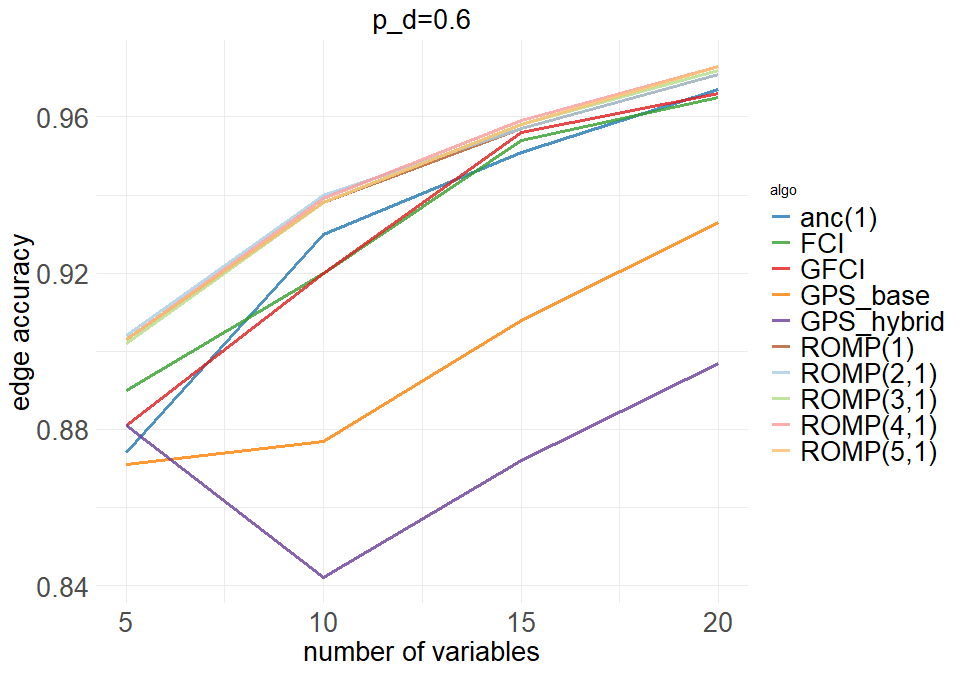}

    \medskip
    \includegraphics[scale=0.38]{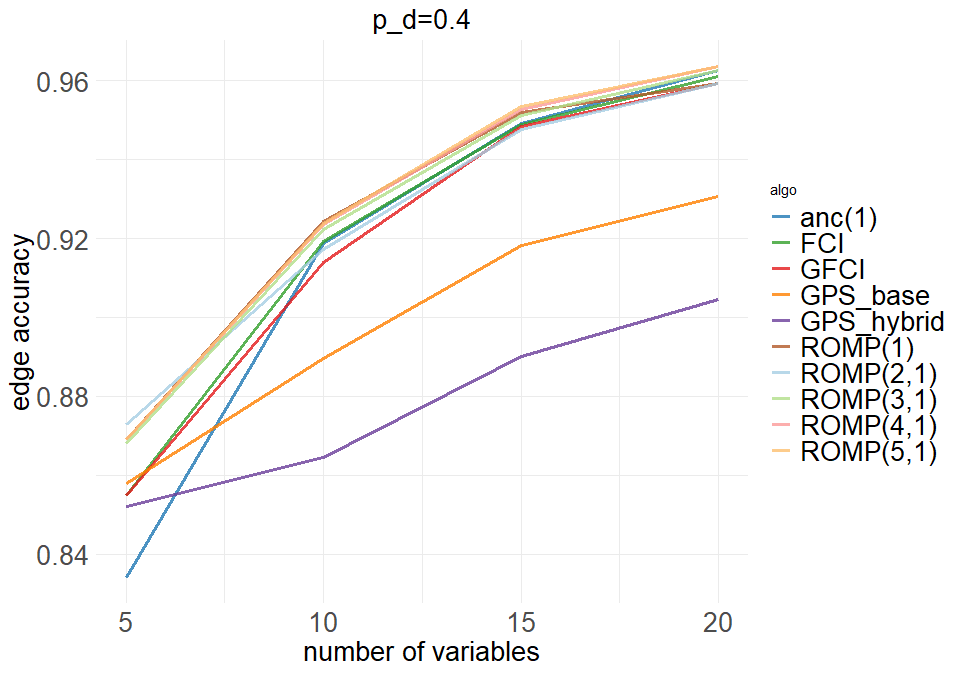}
    \caption{adjacency accuracy plots}
    \label{fig: edge_acc}
\end{figure}

\begin{figure}
    \centering
    \includegraphics[scale=0.38]{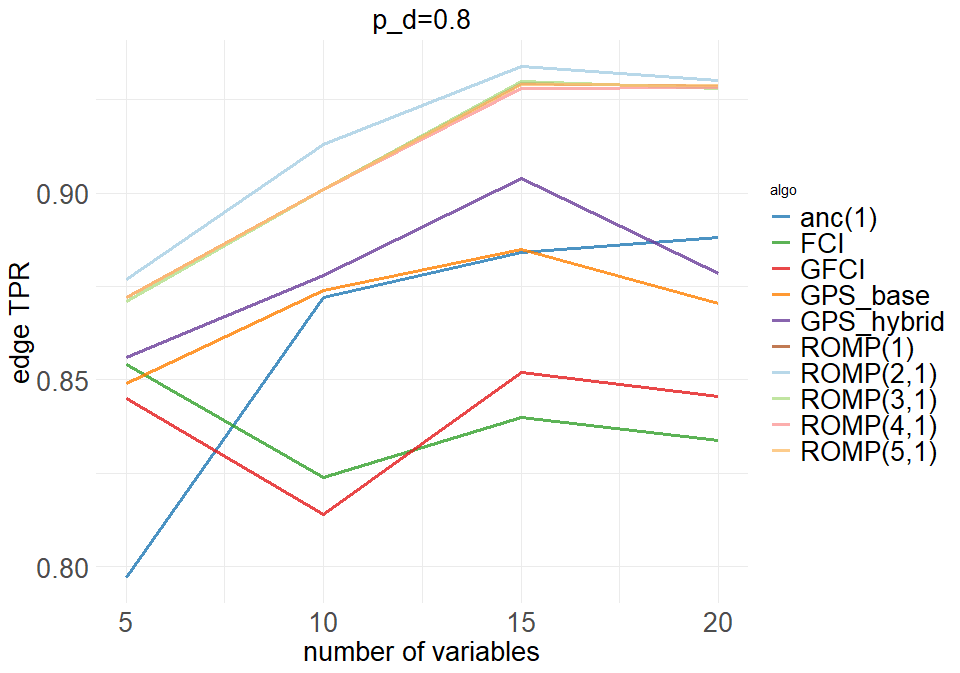}

    \medskip
    \includegraphics[scale=0.38]{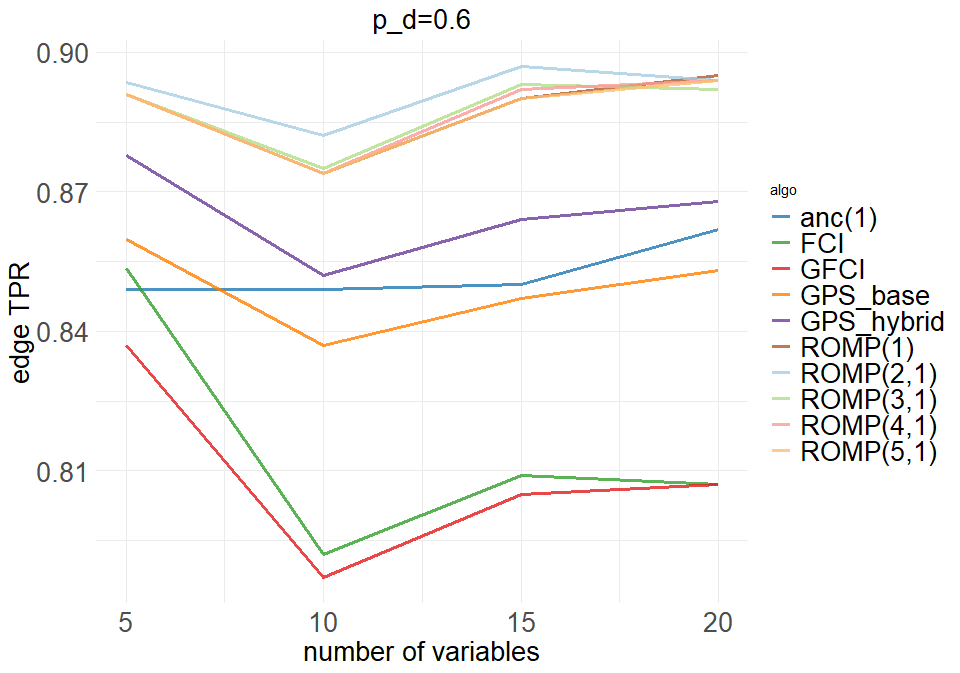}

    \medskip
    \includegraphics[scale=0.38]{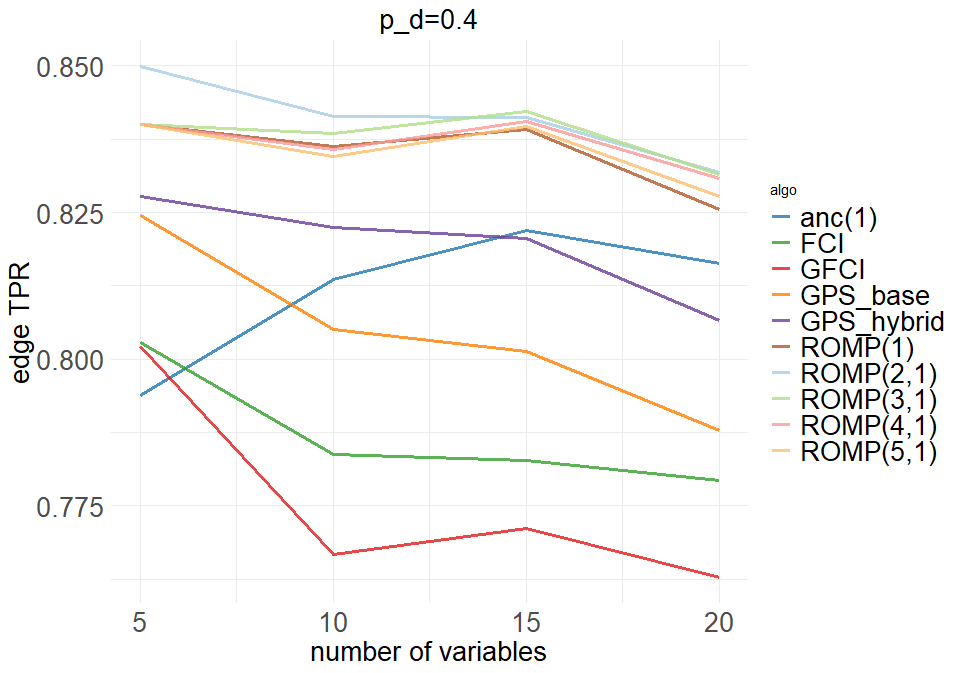}
    \caption{adjacency TPR plots}
    \label{fig: edge_TPR}
\end{figure}

\begin{figure}
    \centering
    \includegraphics[scale=0.38]{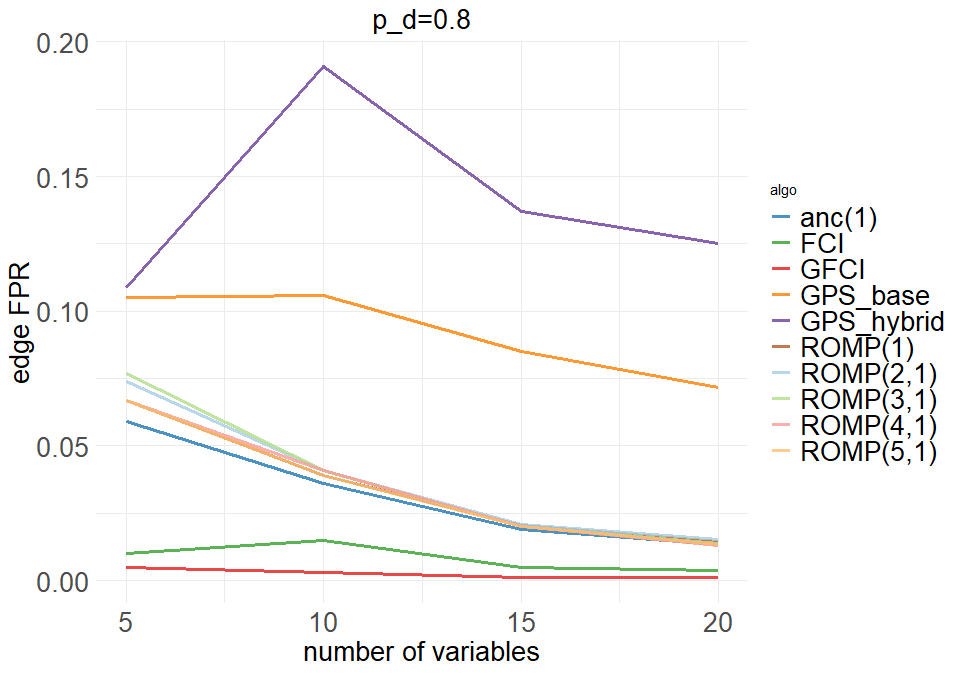}

    \medskip
    \includegraphics[scale=0.38]{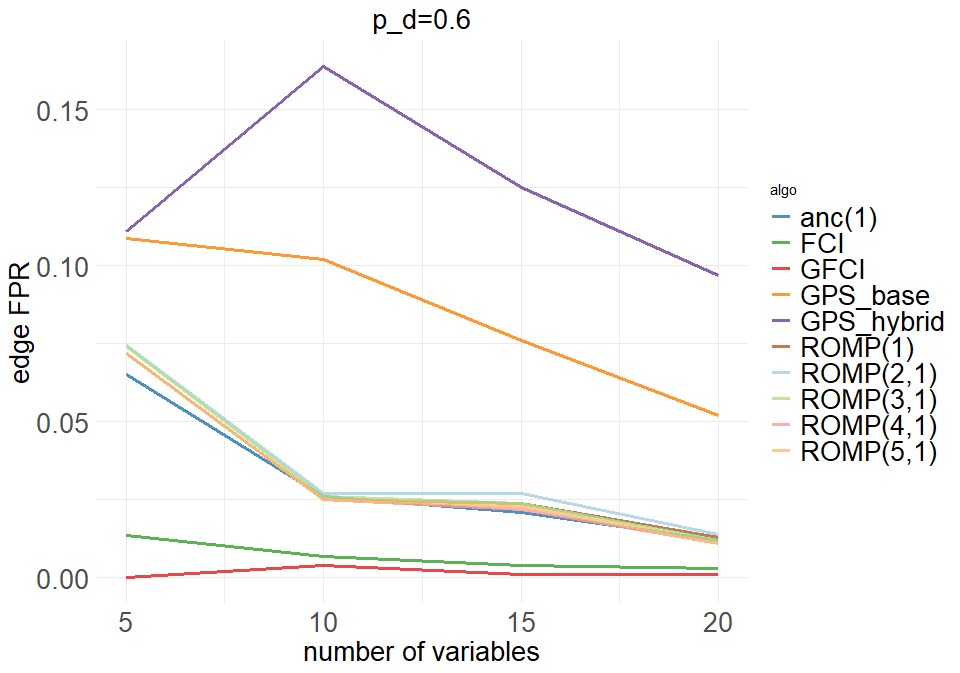}

    \medskip
    \includegraphics[scale=0.38]{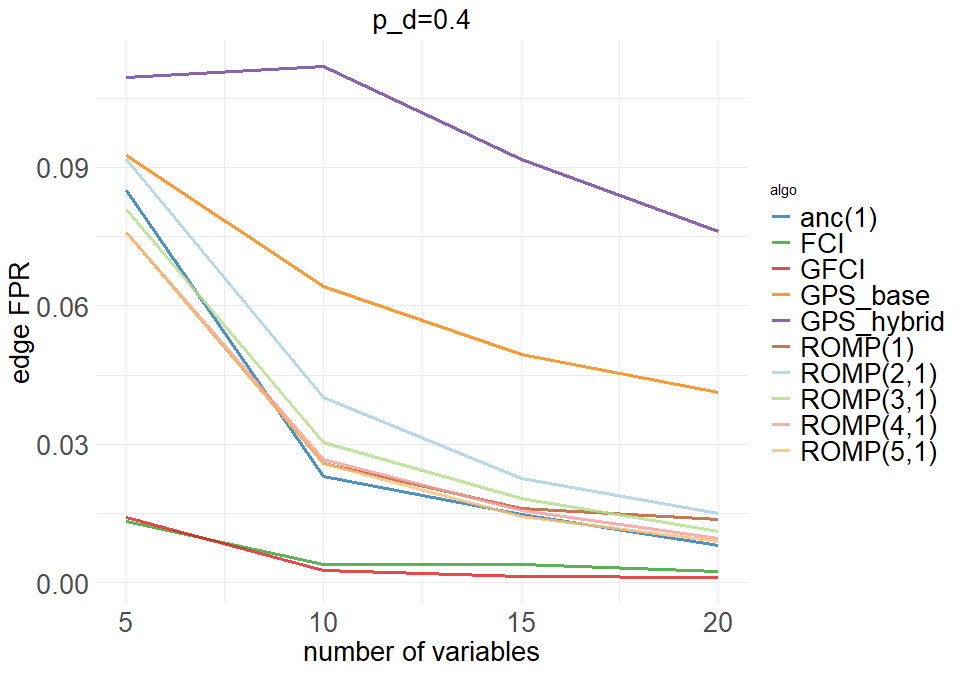}
    \caption{adjacency FPR plots}
    \label{fig: edge_FPR}
\end{figure}

For directed or bidirected edges (Figures \ref{fig: dir_acc}--\ref{fig: bidir_FPR})
although FCI and GFCI show better or close performance compared to variations of \ref{algo: GESMAG} in the accuracy plots, \ref{algo: GESMAG} is still superior in terms of TPR. Once again, GPS shows poor performance in terms of FPR, which means it often gives false directed or bidirected edges. We argue that this may result from the instability of BIC. When there are more arrows in the PAG, it is more likely to have large districts. 

\begin{figure}
    \centering
    \includegraphics[scale=0.38]{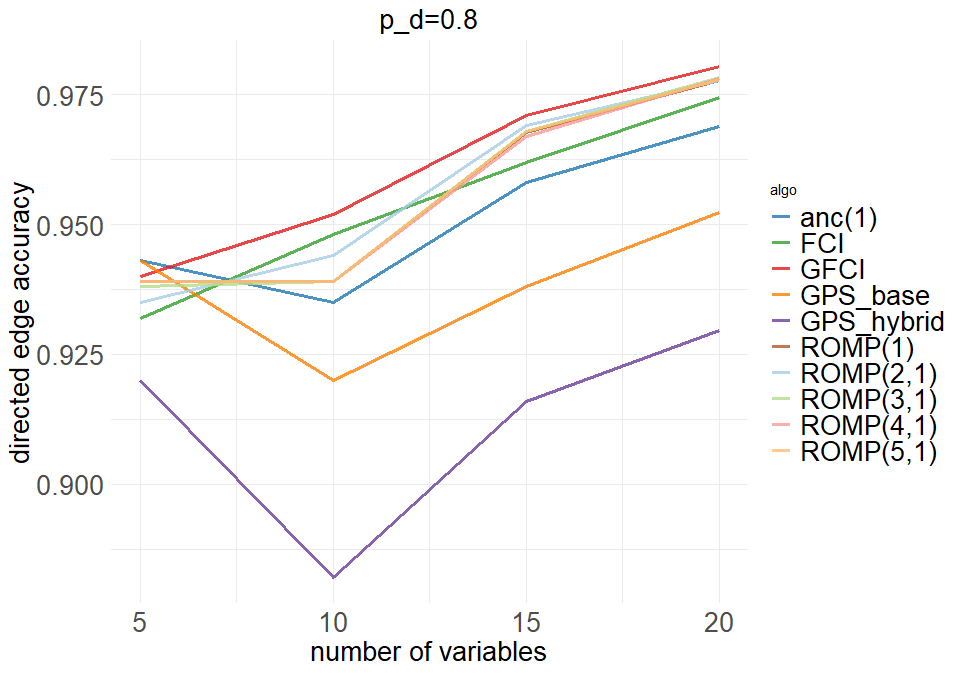}

    \medskip
    \includegraphics[scale=0.38]{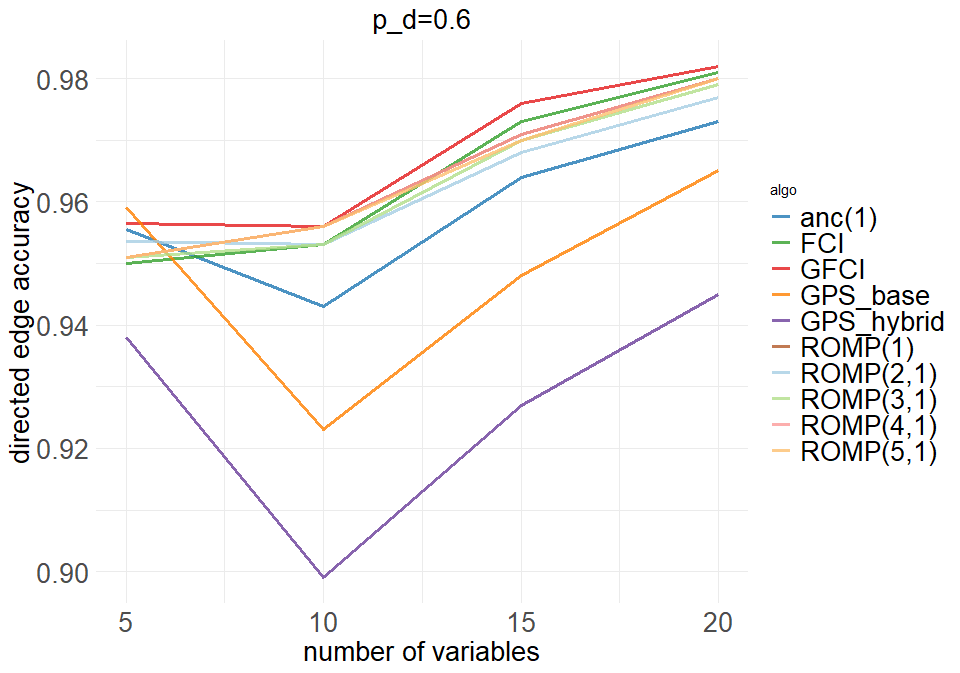}

    \medskip
    \includegraphics[scale=0.38]{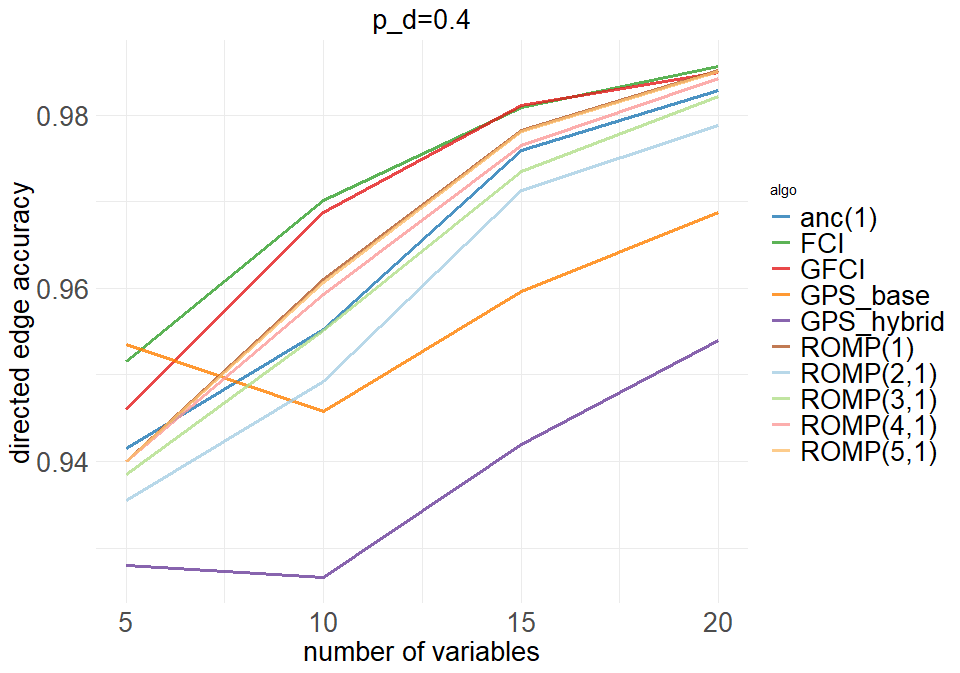}
    \caption{directed edge accuracy plots}
    \label{fig: dir_acc}
\end{figure}

\begin{figure}
    \centering
    \includegraphics[scale=0.38]{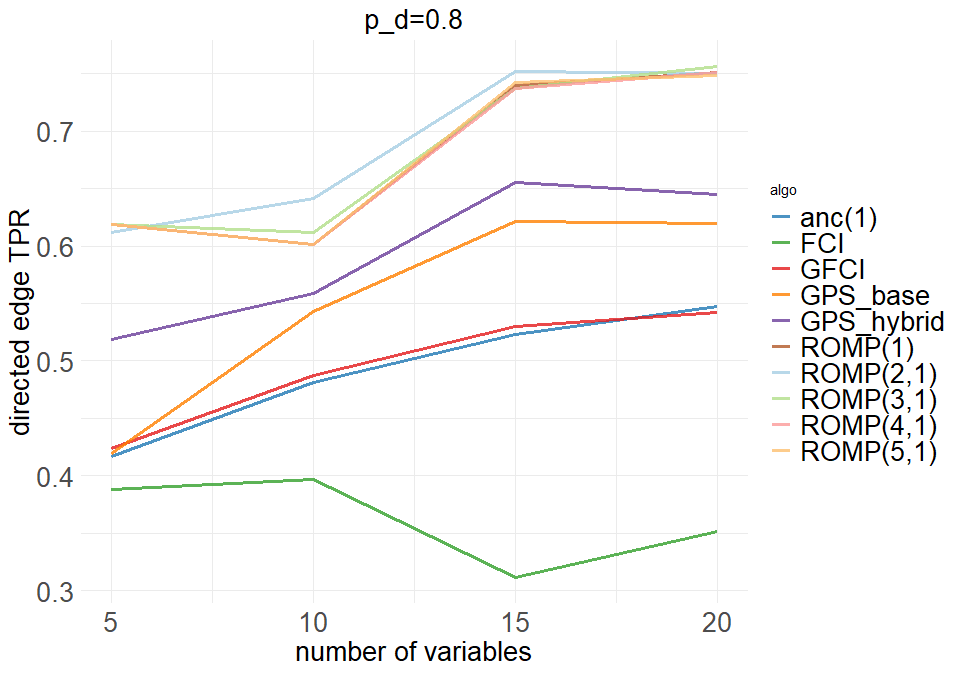}

    \medskip
    \includegraphics[scale=0.38]{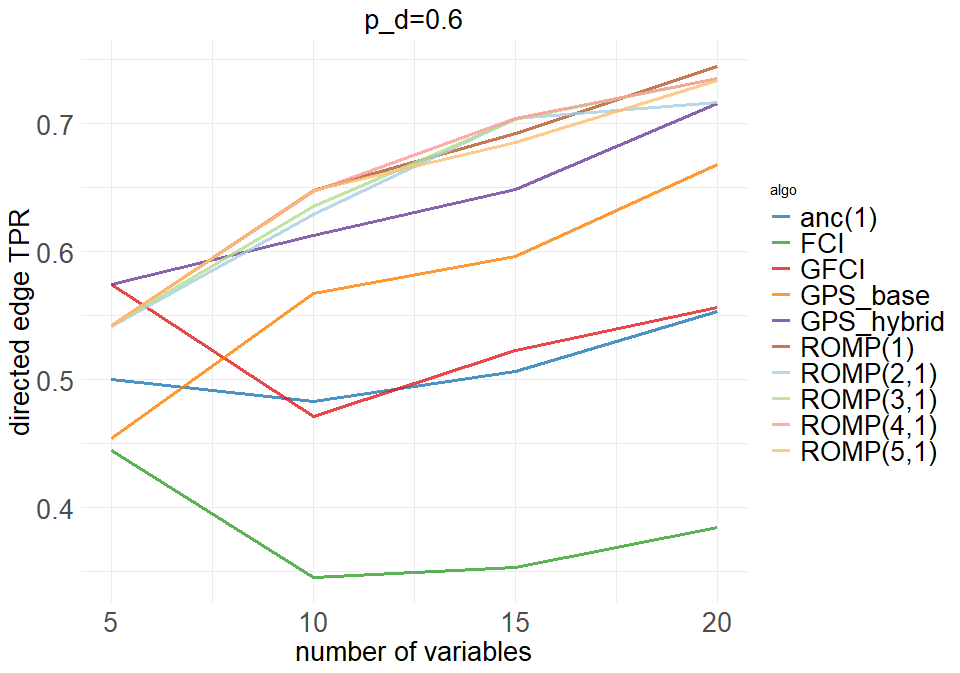}

    \medskip
    \includegraphics[scale=0.38]{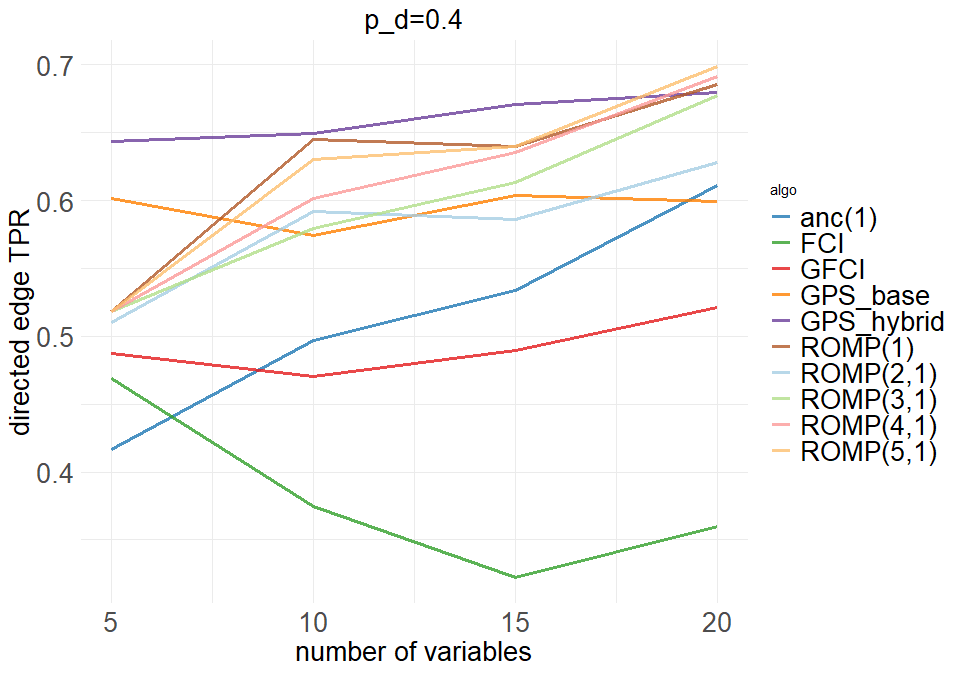}
    \caption{directed edge TPR plots}
    \label{fig: dir_TPR}
\end{figure}

\begin{figure}
    \centering
    \includegraphics[scale=0.38]{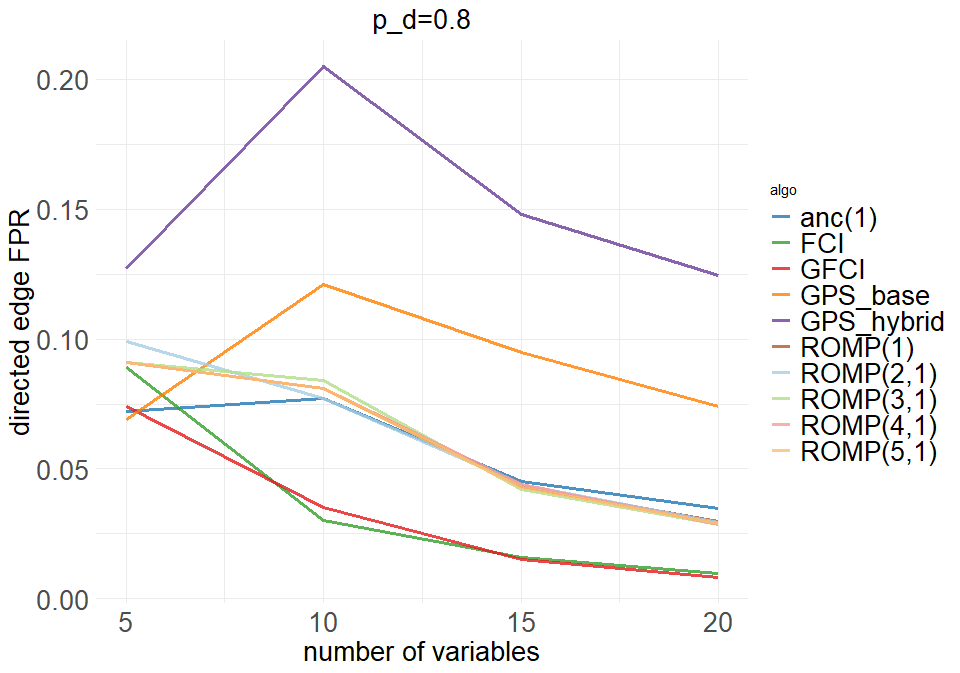}

    \medskip
    \includegraphics[scale=0.38]{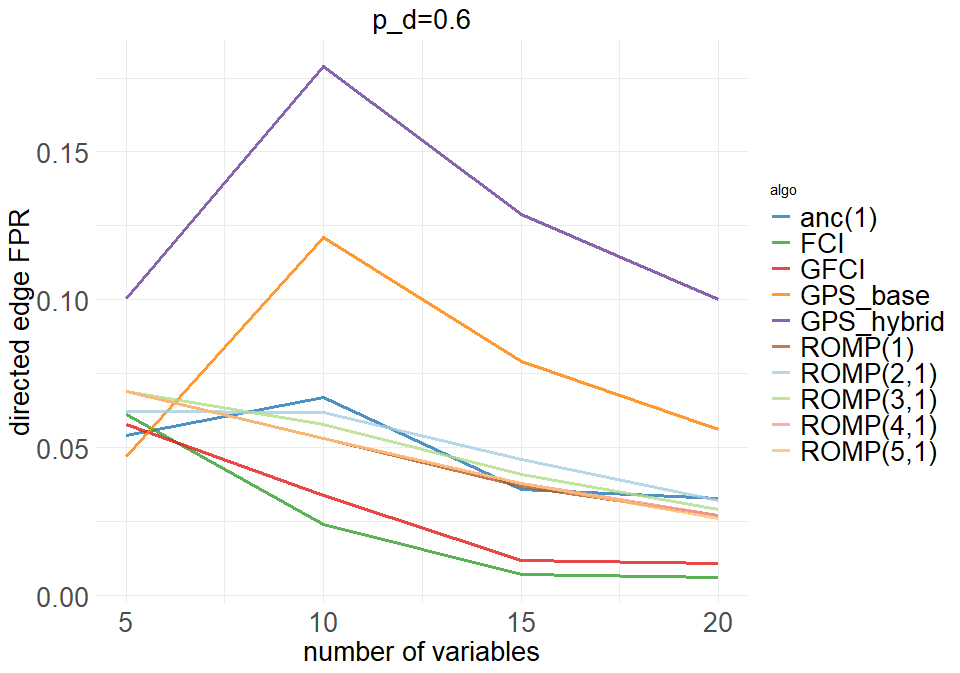}

    \medskip
    \includegraphics[scale=0.38]{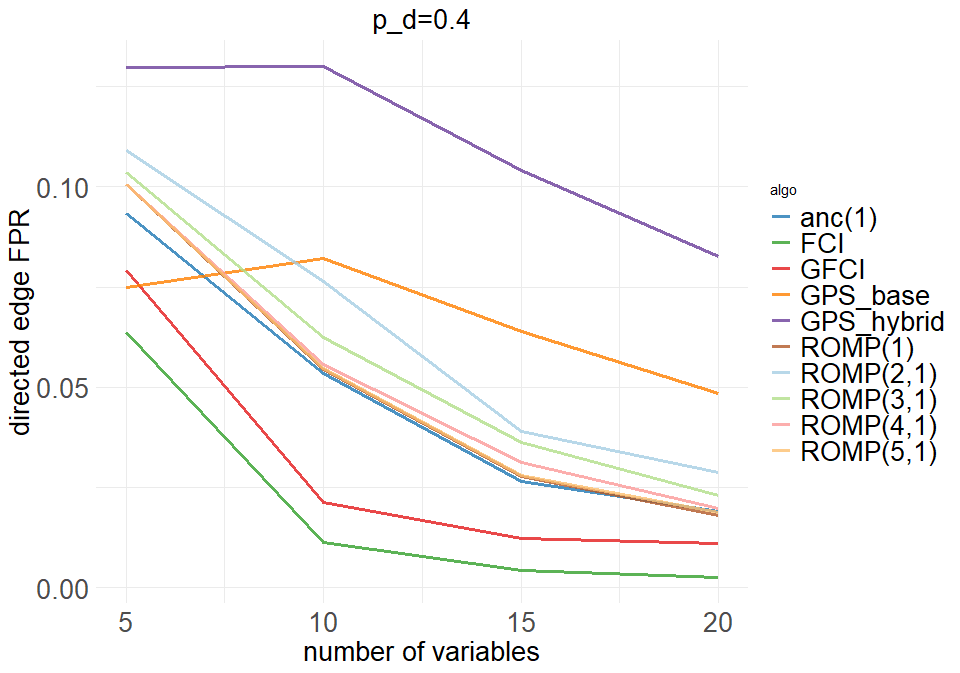}
    \caption{directed edge FPR plots}
    \label{fig: dir_FPR}
\end{figure}

\begin{figure}
    \centering
    \includegraphics[scale=0.38]{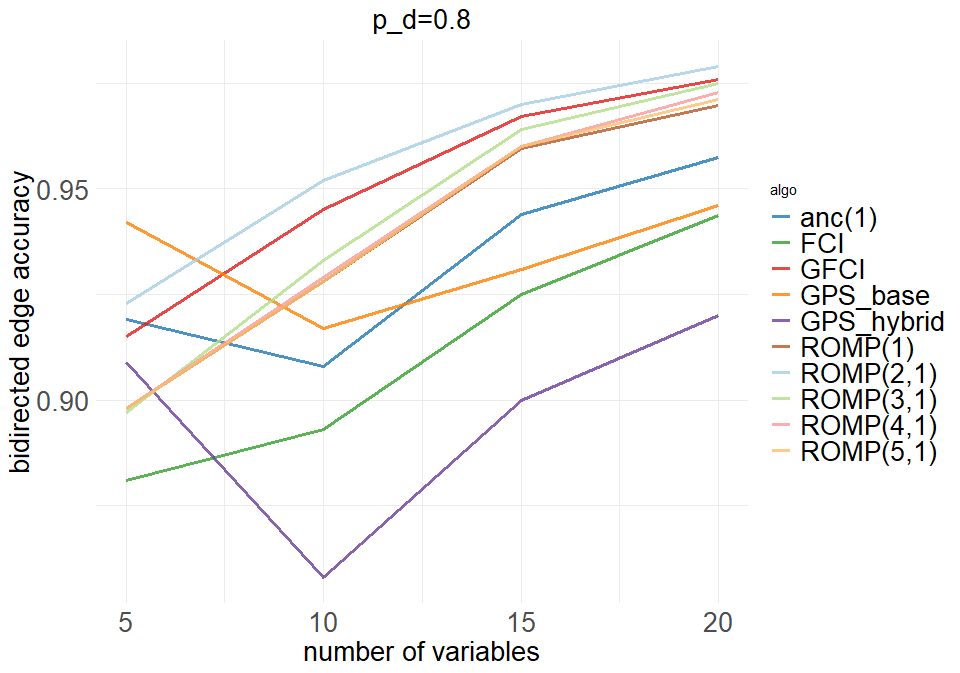}

    \medskip
    \includegraphics[scale=0.38]{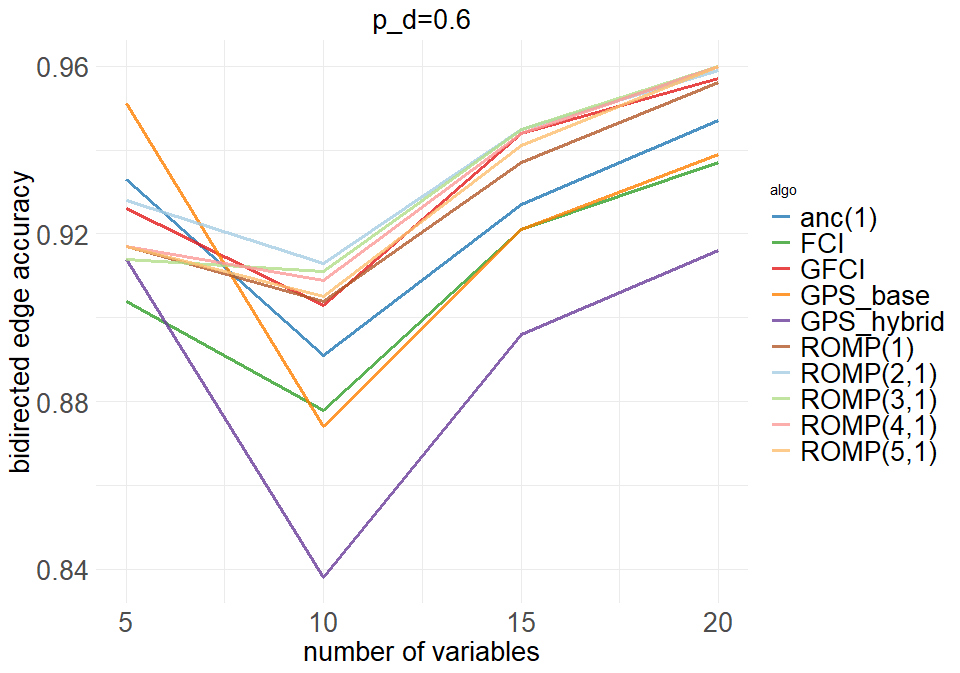}

    \medskip
    \includegraphics[scale=0.38]{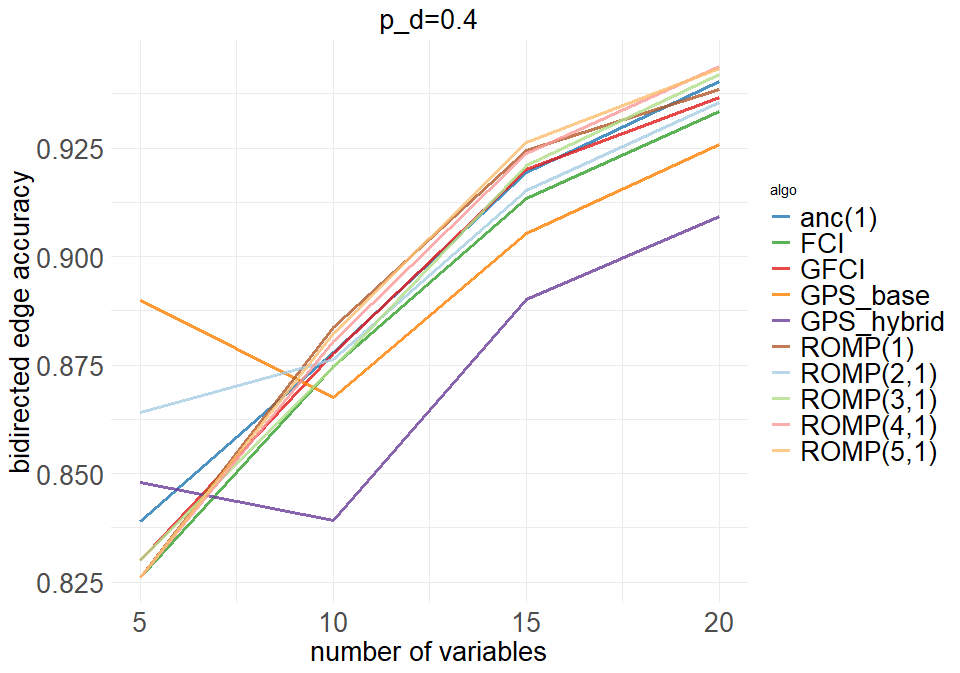}
    \caption{bidirected edge accuracy plots}
    \label{fig: bidir_acc}
\end{figure}

\begin{figure}
    \centering
    \includegraphics[scale=0.38]{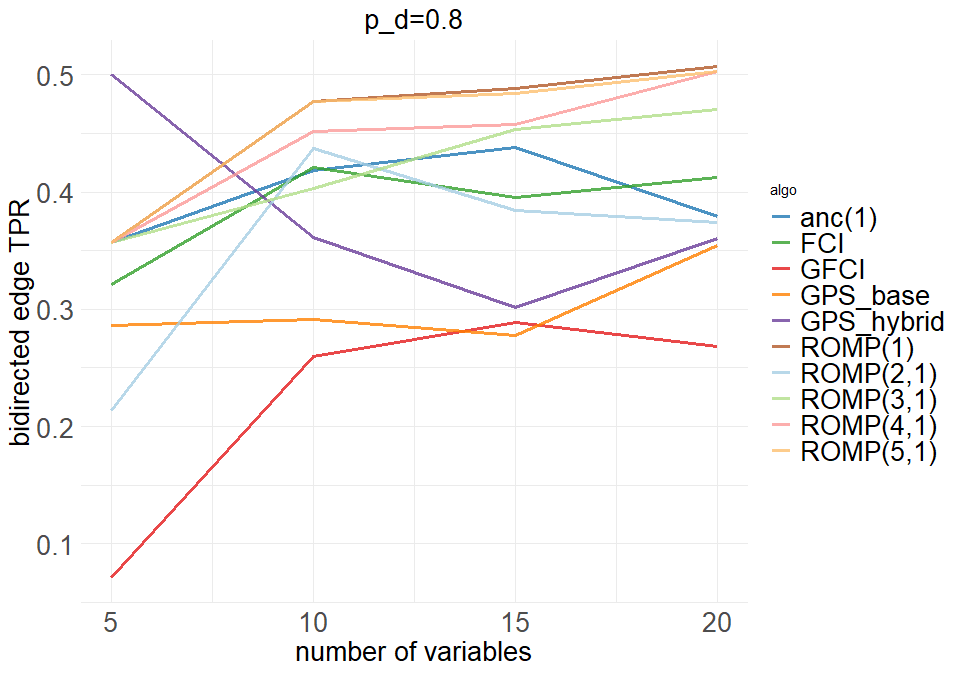}

    \medskip
    \includegraphics[scale=0.38]{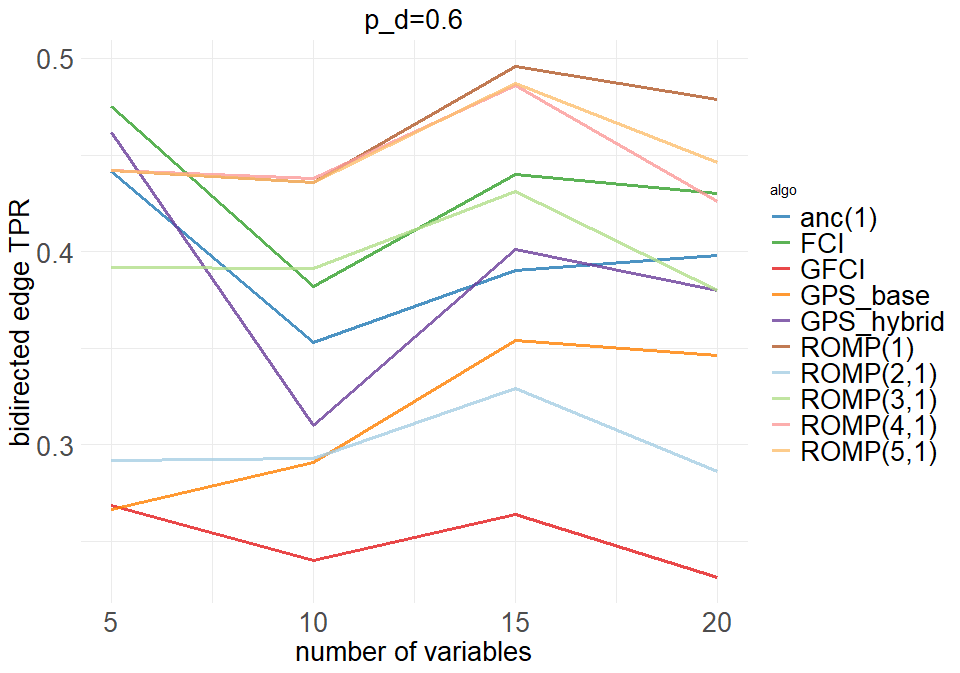}

    \medskip
    \includegraphics[scale=0.38]{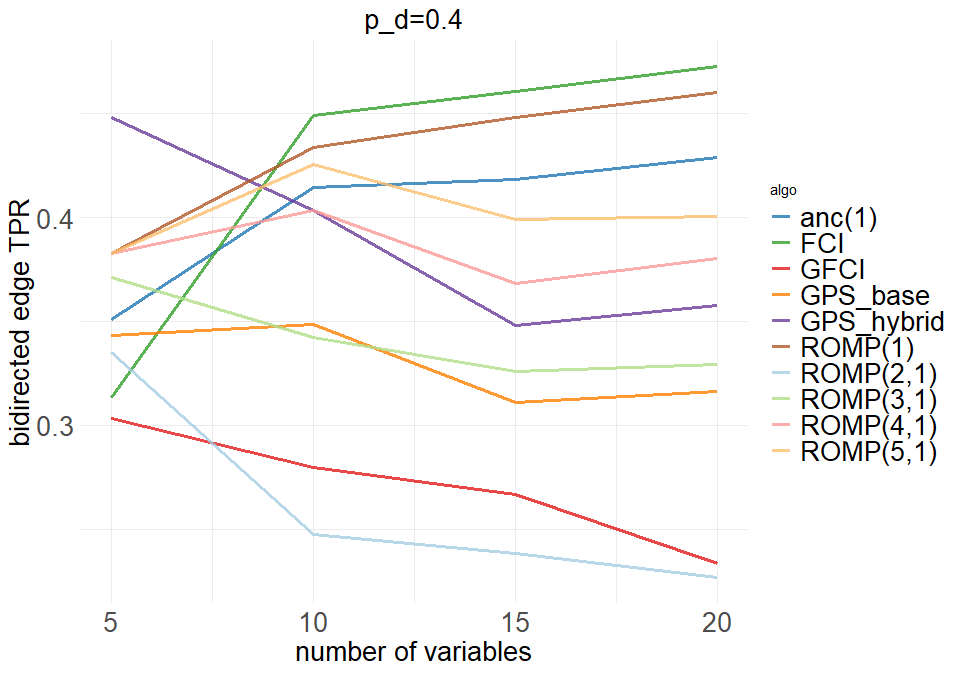}
    \caption{bidirected edge TPR plots}
    \label{fig: bidir_TPR}
\end{figure}

\begin{figure}
    \centering
    \includegraphics[scale=0.38]{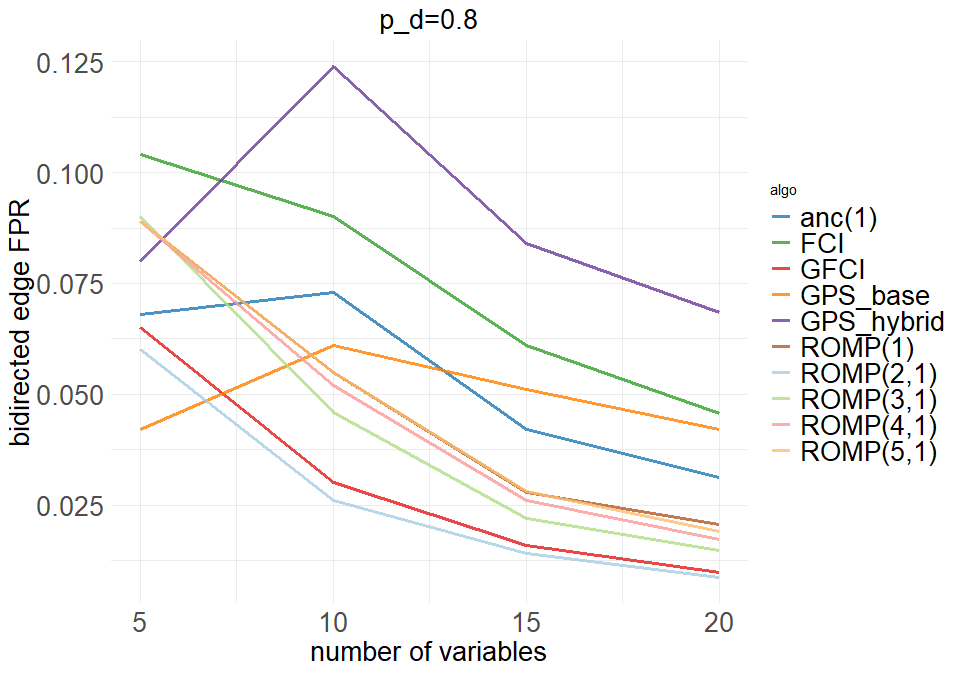}

    \medskip
    \includegraphics[scale=0.38]{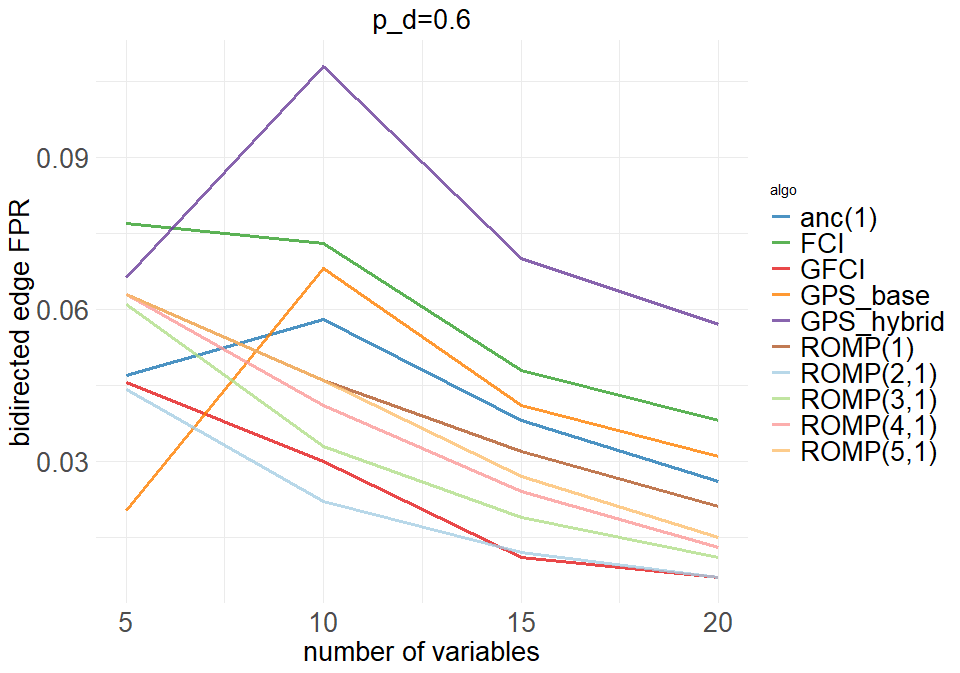}

    \medskip
    \includegraphics[scale=0.38]{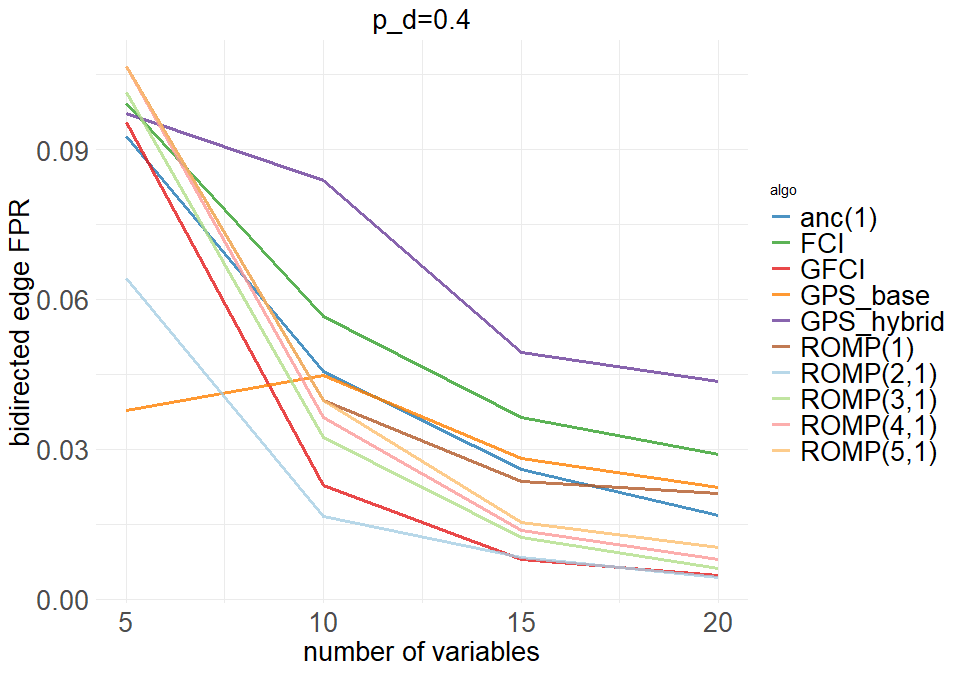}
    \caption{bidirected edge FPR plots}
    \label{fig: bidir_FPR}
\end{figure}

We also have plots for partially directed and not directed edges in Figures \ref{fig: par_dir_acc}--\ref{fig: not_dir_FPR}, 
which show that GPS performs poorly in terms of TPR, as it tends to orient triples with order as noncolliders. We believe this is also the reason that GPS performs best in terms of FPR of partially directed and `not directed' edges.

\begin{figure}
    \centering
    \includegraphics[scale=0.38]{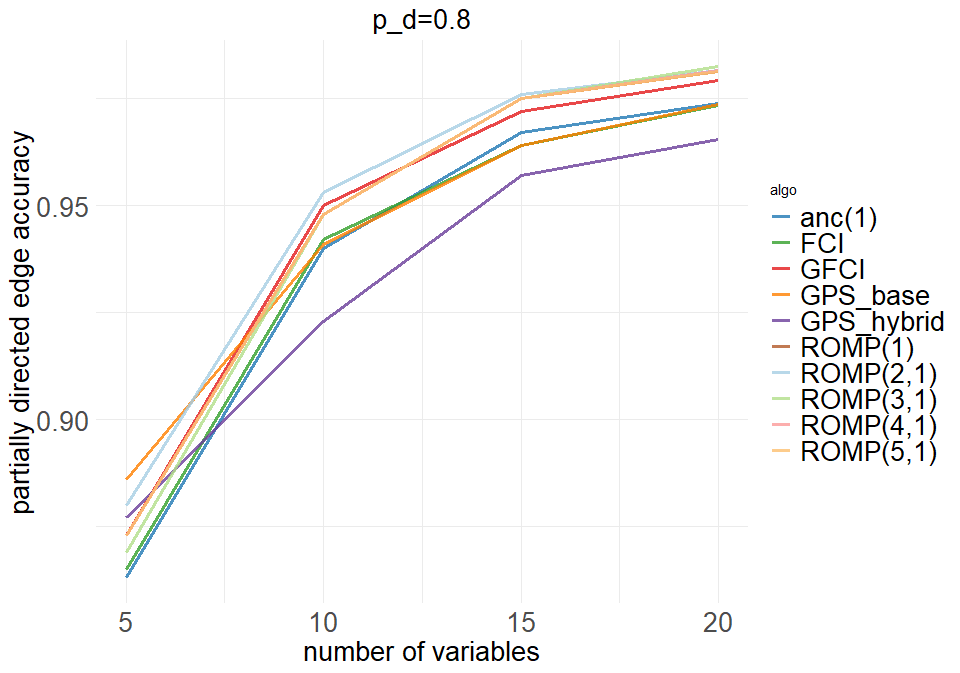}

    \medskip
    \includegraphics[scale=0.38]{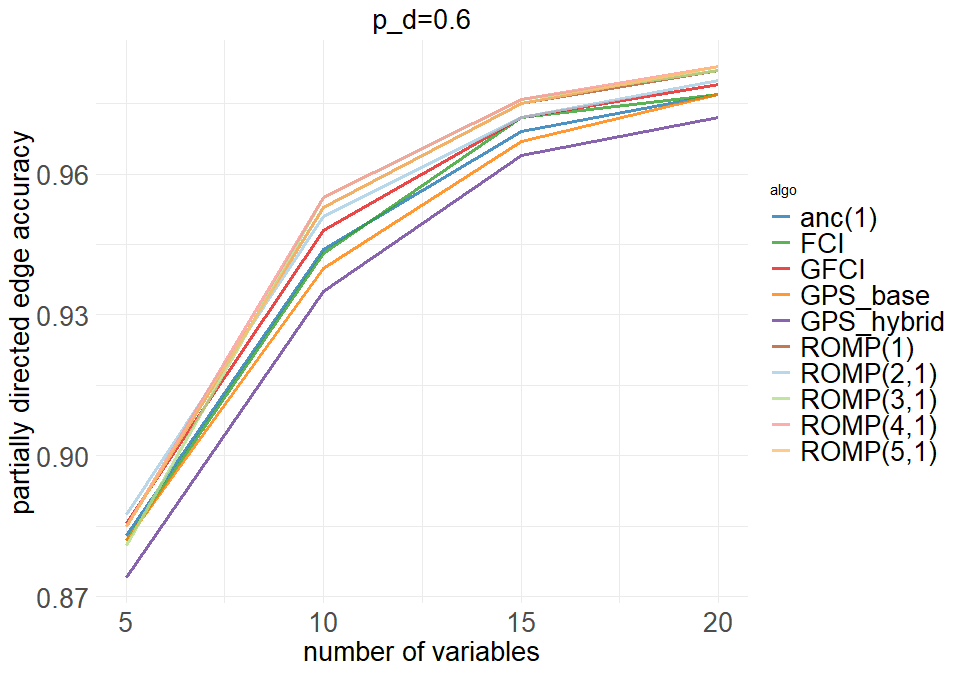}

    \medskip
    \includegraphics[scale=0.38]{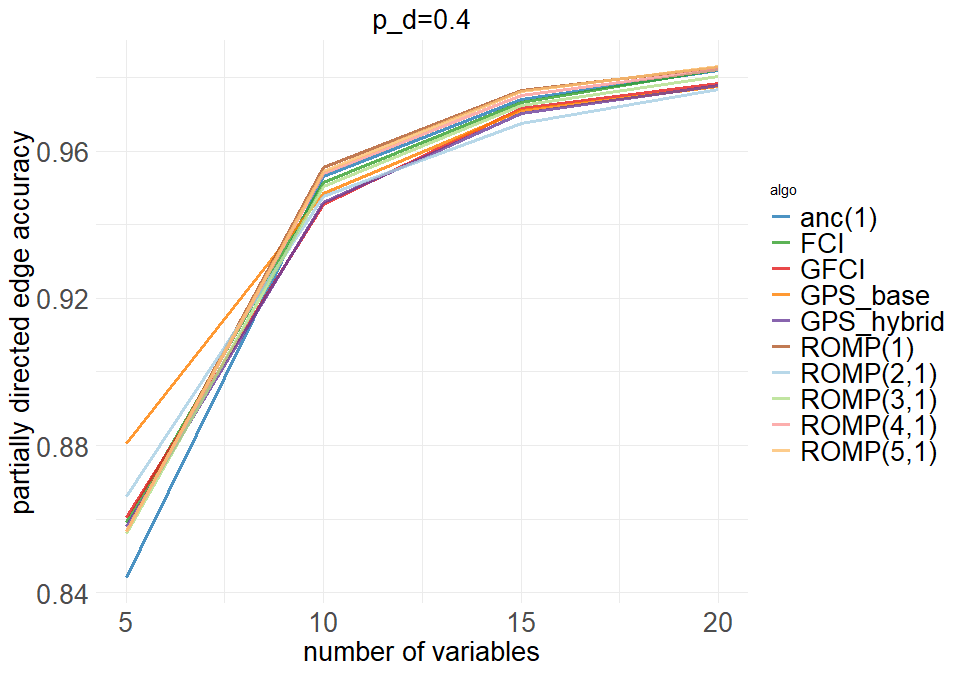}
    \caption{partially directed edge accuracy plots}
    \label{fig: par_dir_acc}
\end{figure}

\begin{figure}
    \centering
    \includegraphics[scale=0.38]{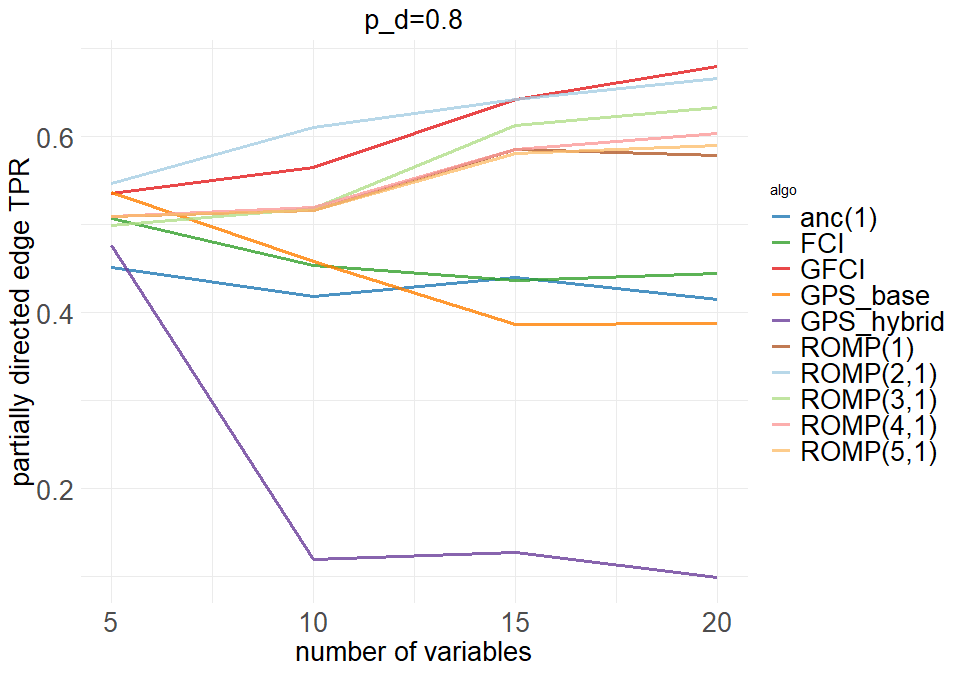}

    \medskip
    \includegraphics[scale=0.38]{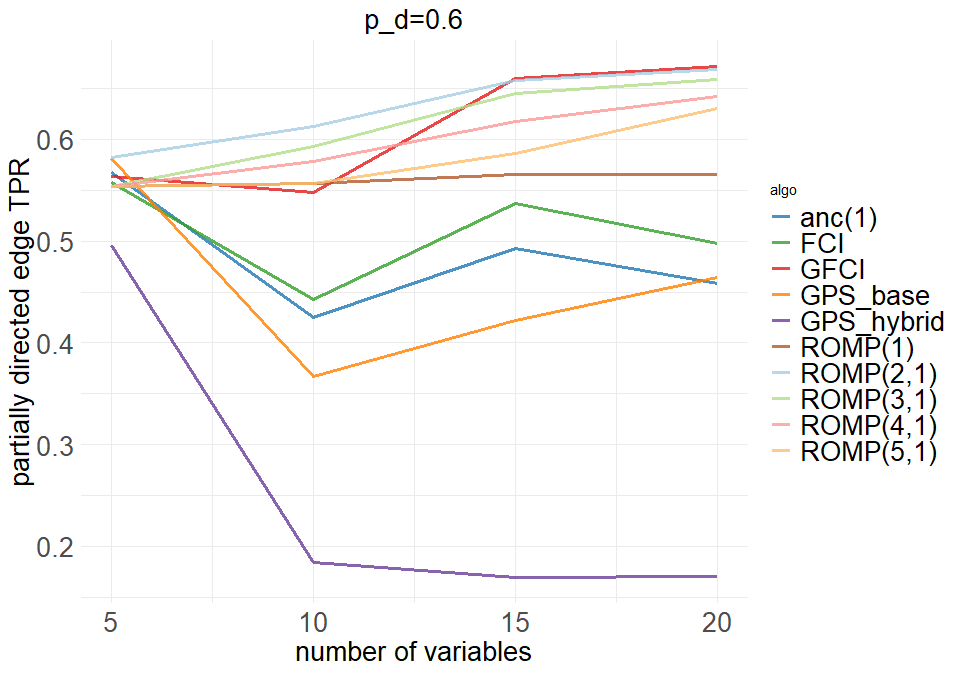}

    \medskip
    \includegraphics[scale=0.38]{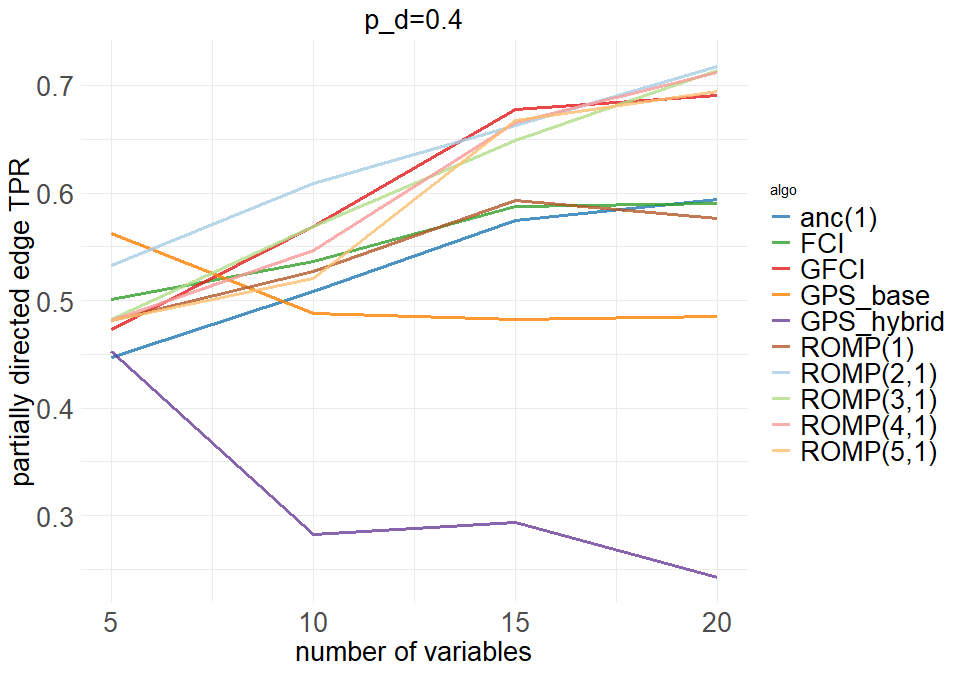}
    \caption{partially directed edge TPR plots}
    \label{fig: par_dir_TPR}
\end{figure}

\begin{figure}
    \centering
    \includegraphics[scale=0.38]{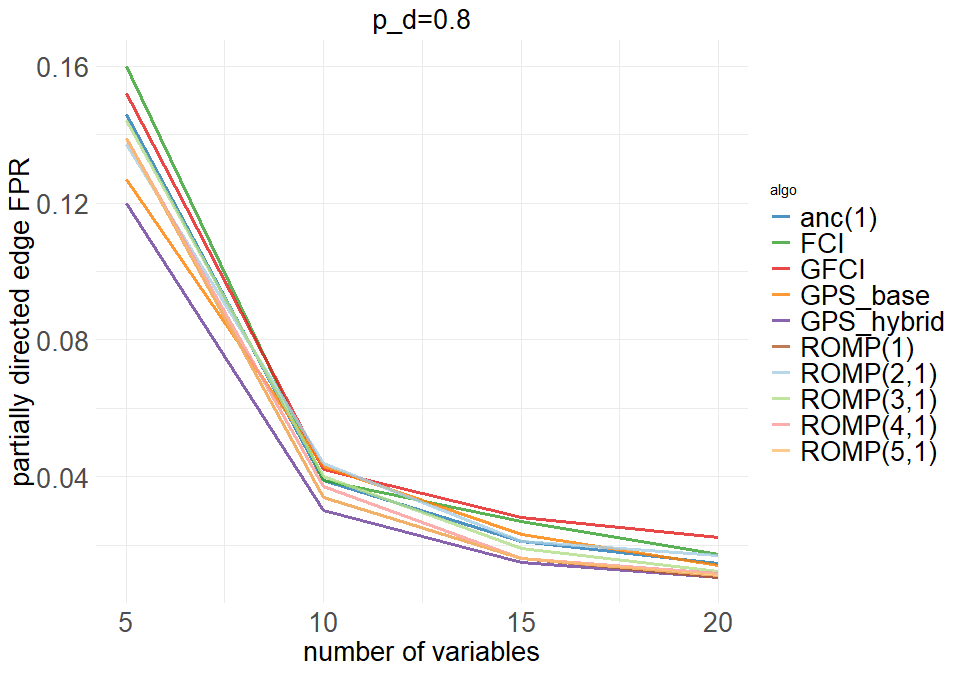}

    \medskip
    \includegraphics[scale=0.38]{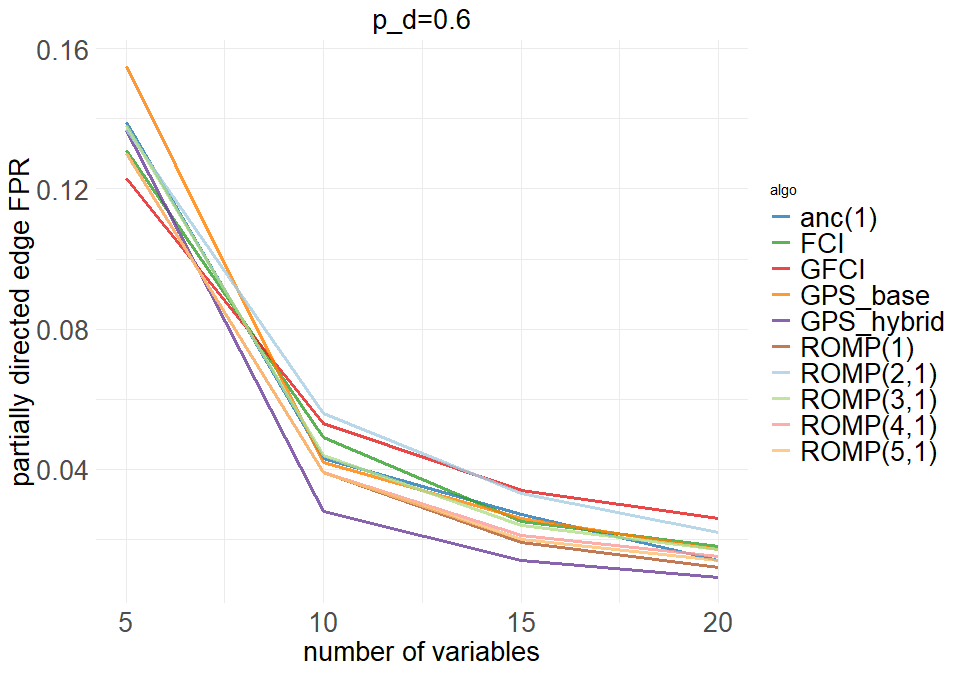}

    \medskip
    \includegraphics[scale=0.38]{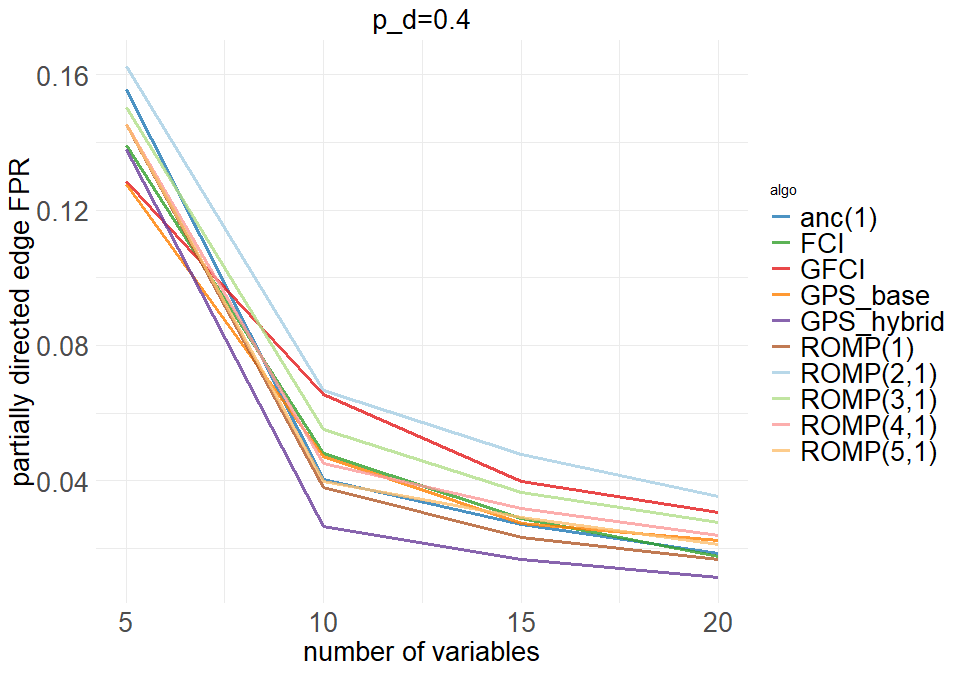}
    \caption{partially directed edge FPR plots}
    \label{fig: par_dir_FPR}
\end{figure}

\begin{figure}
    \centering
    \includegraphics[scale=0.38]{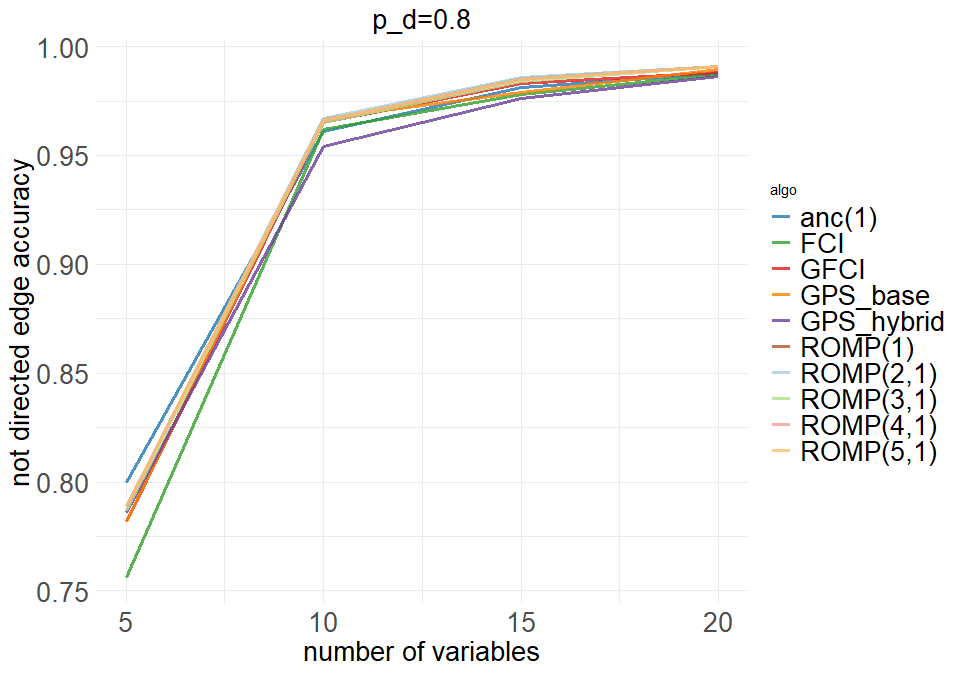}

    \medskip
    \includegraphics[scale=0.38]{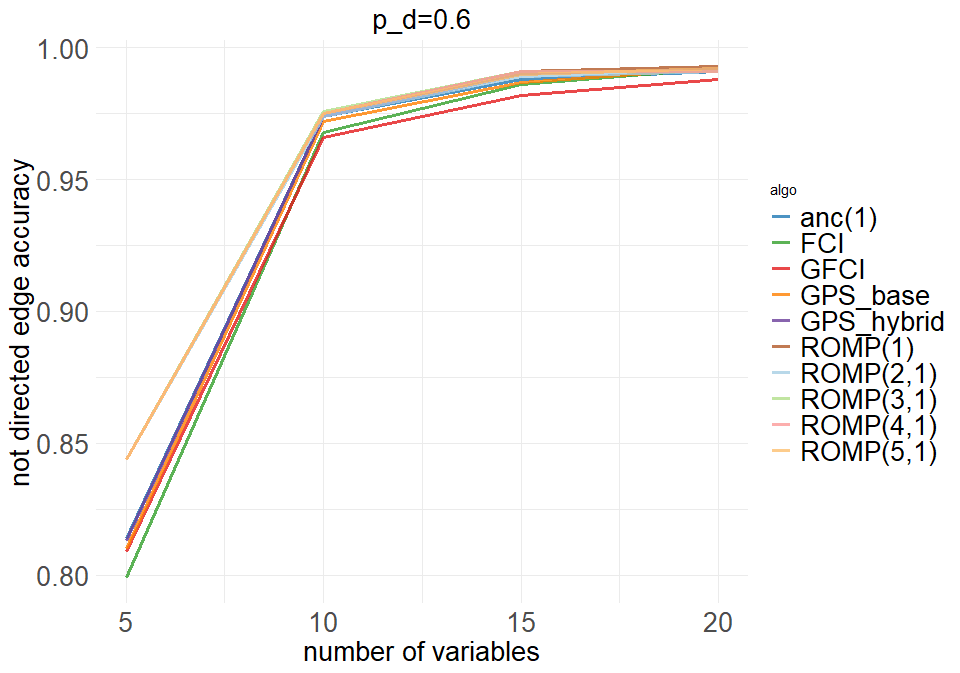}

    \medskip
    \includegraphics[scale=0.38]{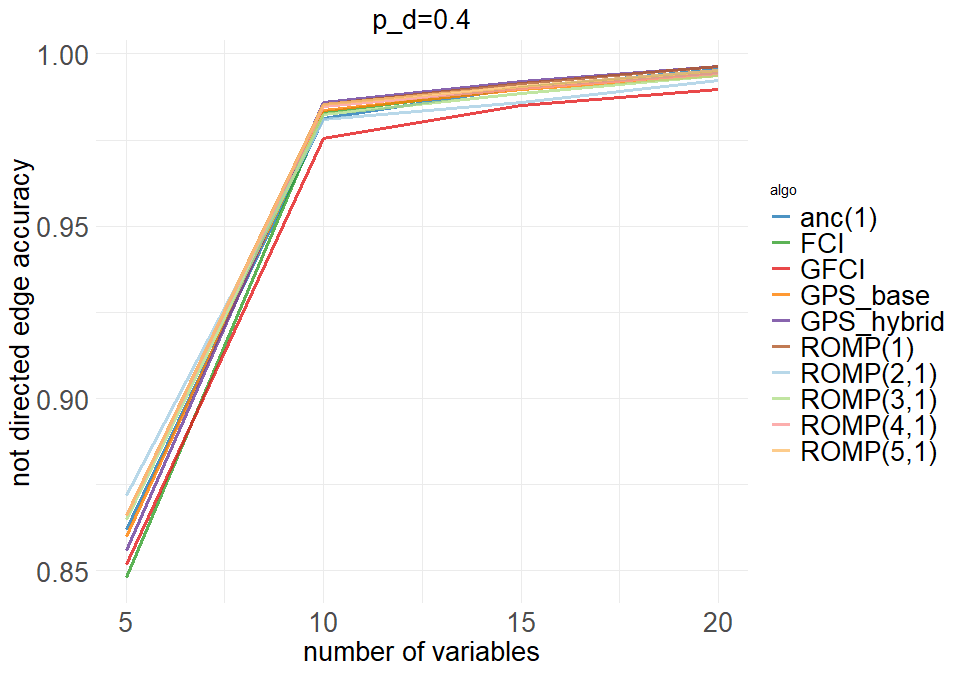}

    \medskip
    \caption{not directed edge accuracy plots}
    \label{fig: not_dir_acc}
\end{figure}

\begin{figure}
    \centering
    \includegraphics[scale=0.38]{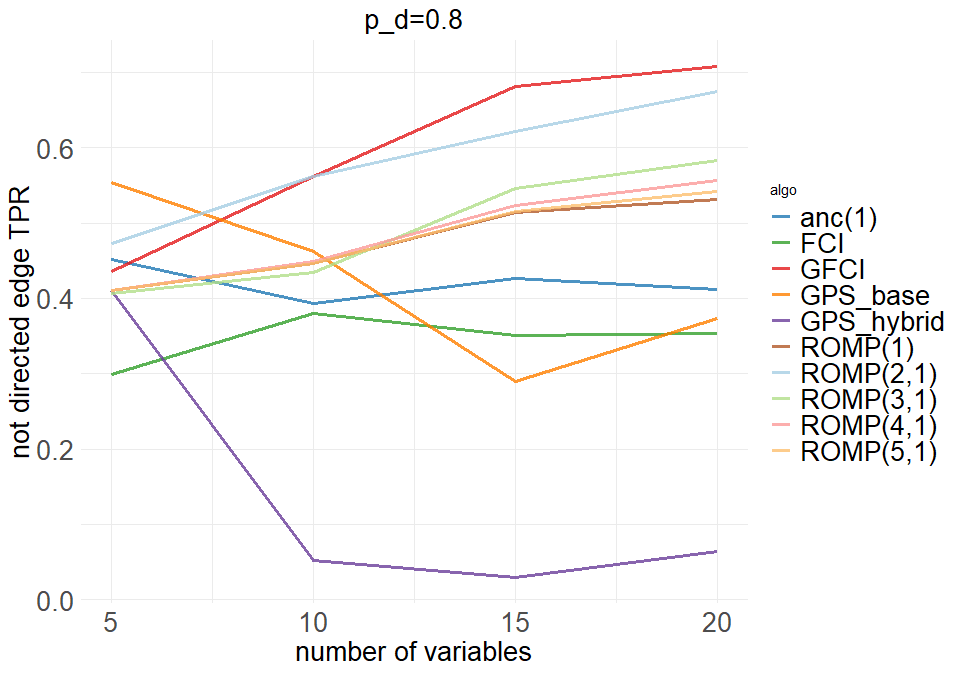}

    \medskip
    \includegraphics[scale=0.38]{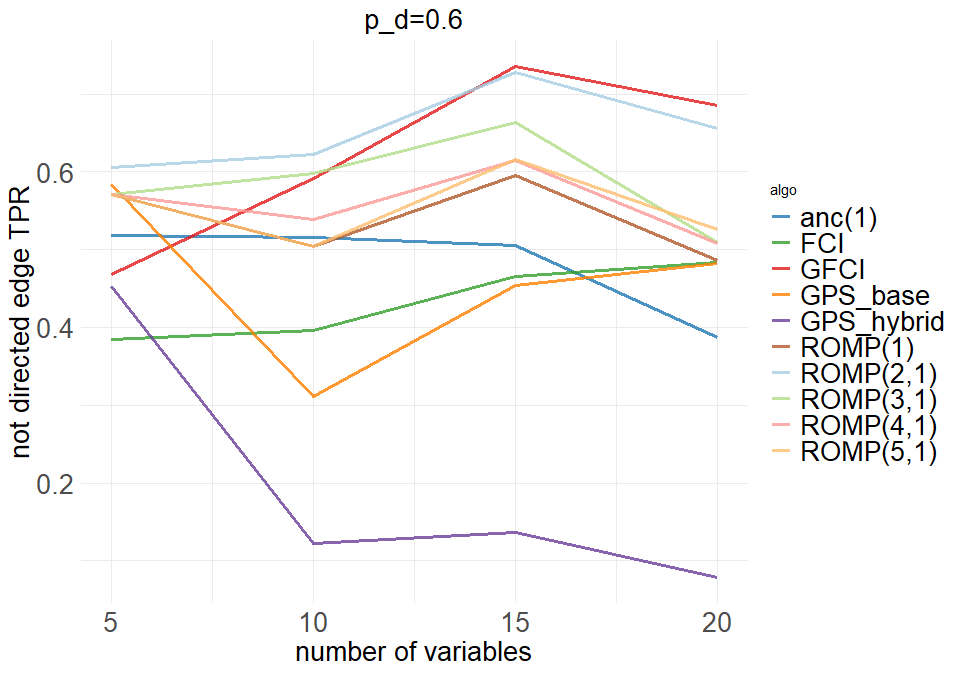}

    \medskip
    \includegraphics[scale=0.38]{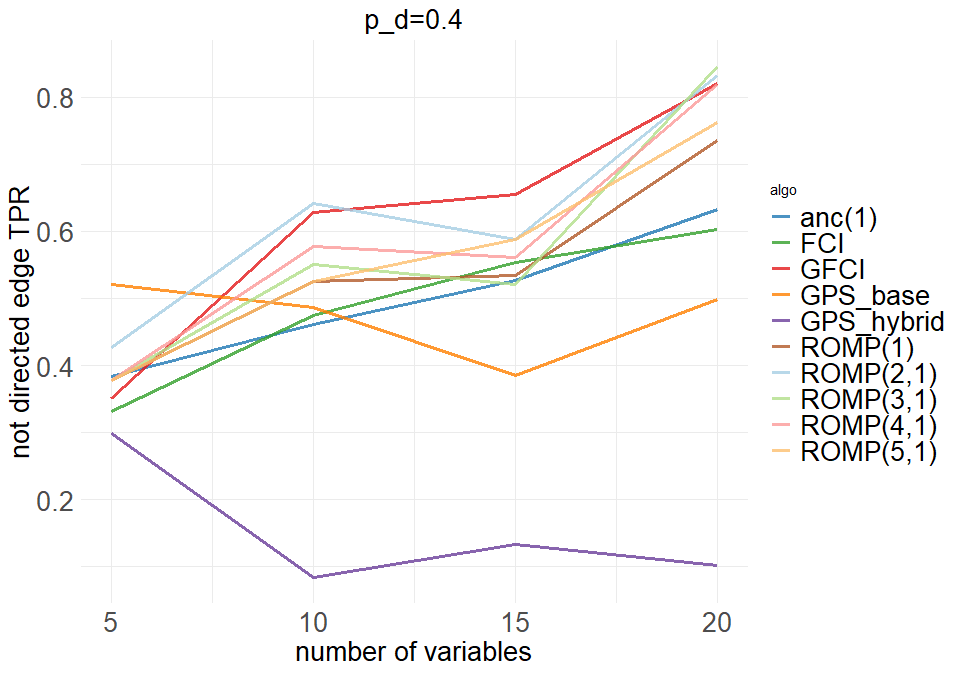}
    \caption{not directed edge TPR plots}
    \label{fig: not_dir_TPR}
\end{figure}

\begin{figure}
    \centering
    \includegraphics[scale=0.38]{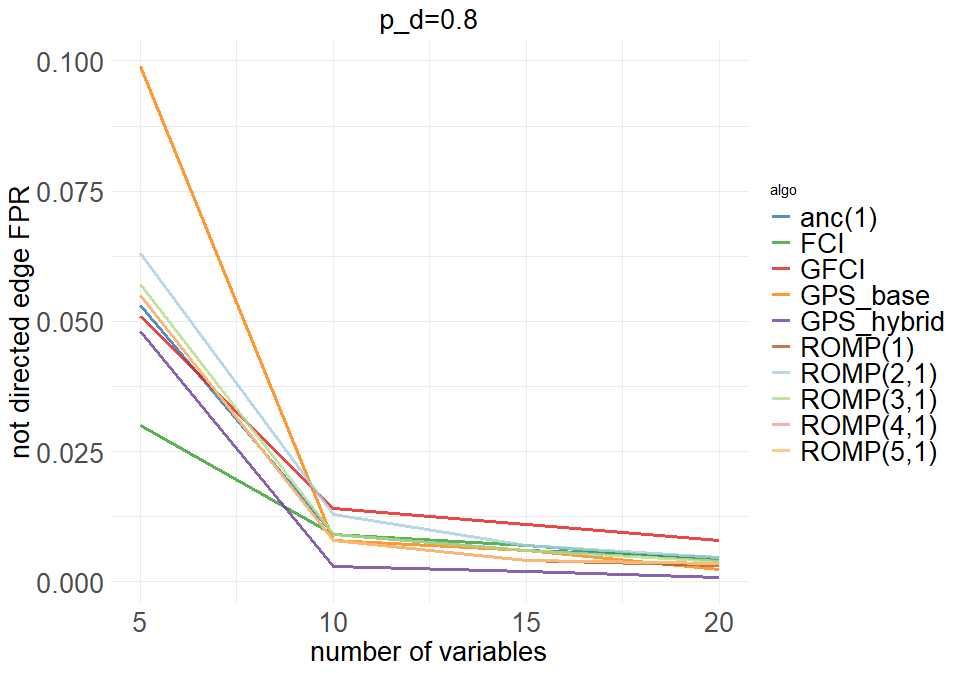}

    \medskip
    \includegraphics[scale=0.38]{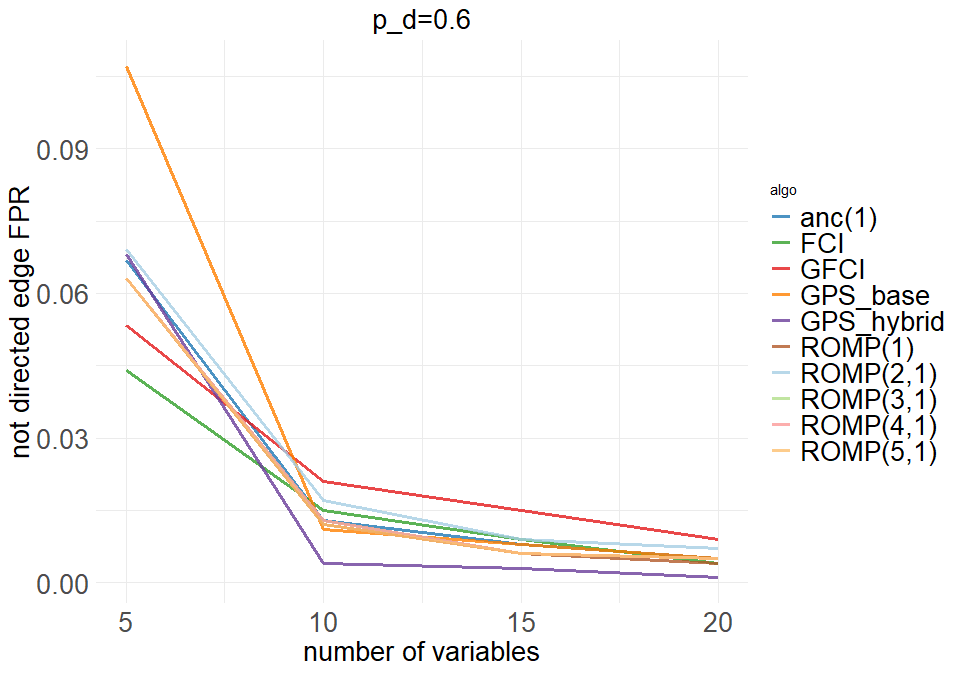}

    \medskip
    \includegraphics[scale=0.38]{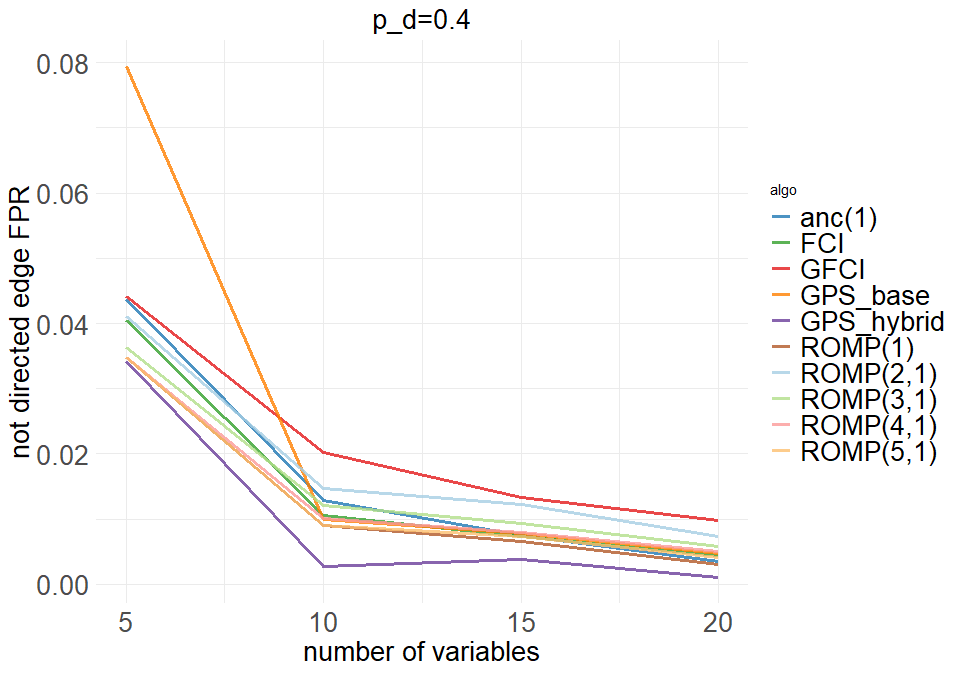}
    \caption{not directed edge FPR plots}
    \label{fig: not_dir_FPR}
\end{figure}
\end{appendices}

\end{document}